\def\independenT#1#2{\mathrel{\rlap{$#1#2$}\mkern3mu{#1#2}}}
\newcommand{\independent}{\protect\mathpalette{\protect\independenT}{\perp}}
\newcommand{\given}{\mid}
\DeclareMathOperator*{\argmax}{arg\,max}
\theoremstyle{plain}
\newtheorem{theorem}{Theorem}
\newtheorem{proposition}{Proposition}
\newtheorem{lemma}{Lemma}
\theoremstyle{definition}
\newtheorem{definition}{Definition}
\newtheorem{remark}{Remark}
\icmltitlerunning{Learning Time-Aware Causal Representation for Model Generalization in Evolving Domains}
\begin{document}

\twocolumn[
\icmltitle{Learning Time-Aware Causal Representation for \\
Model Generalization in Evolving Domains}




\begin{icmlauthorlist}
\icmlauthor{Zhuo He}{ir}
\icmlauthor{Shuang Li\textsuperscript{\Letter}}{ir}
\icmlauthor{Wenze Song}{ir}
\icmlauthor{Longhui Yuan}{ir}
\icmlauthor{Jian Liang}{kuaishou}
\icmlauthor{Han Li}{kuaishou}
\icmlauthor{Kun Gai}{kuaishou}
\end{icmlauthorlist}

\icmlaffiliation{ir}{Independent Researcher, China}
\icmlaffiliation{kuaishou}{Kuaishou Technology, China}

\icmlcorrespondingauthor{Shuang Li}{shuangliai@buaa.edu.cn}

\icmlkeywords{Machine Learning, ICML}

\vskip 0.3in
]



\printAffiliationsAndNotice{}  

\begin{abstract}
Endowing deep models with the ability to generalize in dynamic scenarios is of vital significance for real-world deployment, given the continuous and complex changes in data distribution.
Recently, evolving domain generalization (EDG) has emerged to address distribution shifts over time, aiming to capture evolving patterns for improved model generalization.
However, existing EDG methods may suffer from spurious correlations by modeling only the dependence between data and targets across domains, creating a shortcut between task-irrelevant factors and the target, which hinders generalization.
To this end, we design a time-aware structural causal model (SCM) that incorporates dynamic causal factors and the causal mechanism drifts, and propose \textbf{S}tatic-D\textbf{YN}amic \textbf{C}ausal Representation Learning (\textbf{SYNC}), an approach that effectively learns time-aware causal representations. 
Specifically, it integrates specially designed information-theoretic objectives into a sequential VAE framework which captures evolving patterns, and produces the desired representations by preserving intra-class compactness of causal factors both across and within domains.
Moreover, we theoretically show that our method can yield the optimal causal predictor for each time domain.
Results on both synthetic and real-world datasets exhibit that SYNC can achieve 
superior temporal generalization performance.
\href{https://github.com/BIT-DA/SYNC}{\faGithub}
\end{abstract}
\section{Introduction} 
\label{section:introduction}
Deep neural networks (DNNs) have achieved excellent performance in various applications, yet suffer from performance degradation when the \textit{i.i.d.} assumption is violated~\cite{recht2019imagenet,taori2020measuring}.
Domain Generalization (DG)~\cite{Zhe_oodsurvey, WangLLOQLCZY23,FanPJ24} can effectively address the issue of distribution shift, but it still struggles to handle the dynamic scenarios that are widespread in the real world~\cite{yao2022wild,LSSAE}.
In order to adapt to changing environments over time, 
evolving domain generalization (EDG)~\cite{GI,LSSAE,DDA,DRAIN,MISTS,SDE-EDG,mmd-lsae} has emerged in recent years and is garnering increasing attention, aiming to capture the underlying evolving patterns in data distribution to generalize well to the near future.

\begin{figure}[t]
    \centering
    \includegraphics[width=1.0\linewidth]{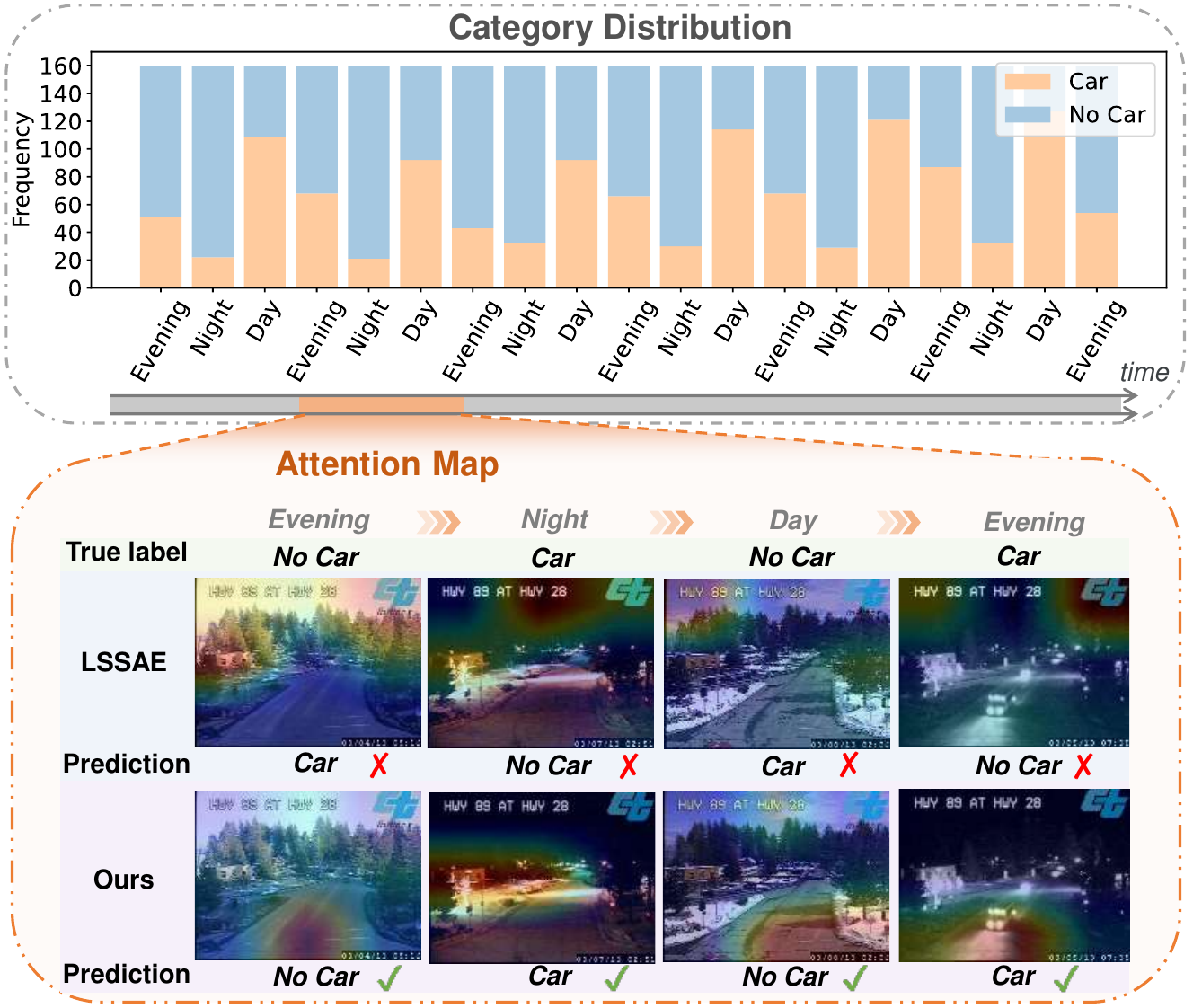}
    \hspace{-6pt}
    \caption{We plot the category distribution of training domains on Caltran. The lower part shows visualization of attention maps of the convolutional Layer. Compared with LSSAE, 
    it can be observed that our method better focuses on target-related features and makes accurate predictions.}
    \label{fig:intro}
    \vspace{-10pt}
\end{figure}

Despite promising results, existing EDG methods may suffer from spurious correlations by solely modeling the statistical correlation between data and targets.
Fig.~\ref{fig:intro} illustrates the behavior of the baseline model~\cite{LSSAE} trained on the Caltran dataset~\cite{hoffman2014continuous}, tasked with determining whether a vehicle is present in an image.
Among the images captured by the camera, most of those containing vehicles were taken during the day, while images without vehicles were captured at evening or night.
Therefore, there exists a shortcut from the background representation to the target, \textit{i.e.}, the trained model tends to rely on lighting (spurious feature) to determine the presence of a vehicle, rather than the object of the vehicle (causal feature).
Due to spurious correlations, the model during the inference stage tends to classify day (night) images as ``Car'' (``No Car''), which compromises its generalization performance.

Causality~\cite{pearl2009causality, pearl2016causal} has proven to be a powerful tool for addressing spurious correlations and has therefore been widely explored.~\cite{IRM,matchdg,sun2021recovering,cirl,sheth2022domain}.
However, since causal factors and causal mechanisms behave more complex and changeable in non-stationary environments, its implementation in dynamic scenarios still remains underexplored.
Fig.~\ref{fig:DG_SCM} shows a classical structural causal model (SCM) for static DG~\cite{liu2021learning,sun2021recovering,cirl}, where category-irrelevant (spurious) factors $Z_s$ and category-related (causal) factors $Z_c$ are modeled as time-invariant.
Such a temporal homogeneous SCM fails to account for the underlying continuous structure in non-stationary tasks, rendering static causal representations insufficient for generalizing to evolving new domains.
Moreover, as the statistical properties of the target variables change over time~\cite{gama2014survey,lu2018learning}, the influence of causal factors on the target may also vary, suggesting the mapping from causal factors to labels gradually shifts, leading to causal mechanism drifts.

\begin{figure}[t]
    \centering
    \subcaptionbox{\label{fig:DG_SCM}}{\includegraphics[width = 0.2\textwidth]{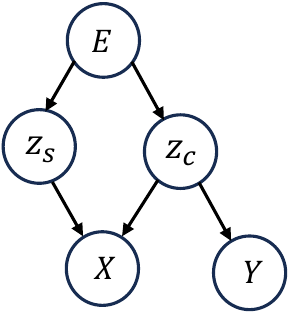}}
    \hspace{-2pt}
    \subcaptionbox{\label{fig:Our_SCM}}{\includegraphics[width = 0.27\textwidth]{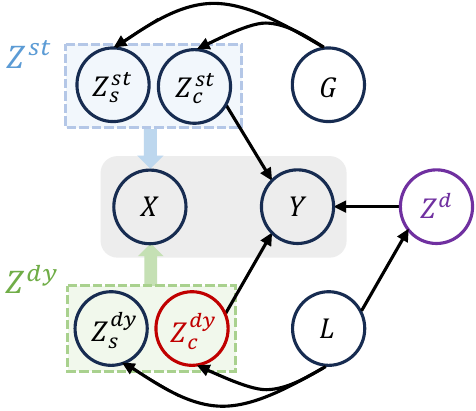}}
    \vspace{-8pt}
    \caption{(a) SCM for DG. (b) Our proposed time-aware SCM for EDG. By introducing \textcolor[HTML]{C00000}{$Z_c^{dy}$} and \textcolor[HTML]{7030A0}{$Z^d$}, \textcolor[HTML]{C00000}{dynamic causal factors} and \textcolor[HTML]{7030A0}{causal mechanism drifts} are explicitly modeled.}
    \vspace{-10pt}
\end{figure}

To address the aforementioned issues, we propose a novel time-aware SCM for EDG.
As shown in Fig.~\ref{fig:Our_SCM},
the local variable $L$ is introduced to model instability and dynamics, and the global variable $G$ is utilized to represent globally unchanging knowledge.
We then split the causal factors into static causal factors $Z_c^{st}$ and time-varying dynamic causal factors $Z_c^{dy}$, thereby incorporating the evolving patterns into causal factors.
To model the causal mechanism drifts, we introduce a time-varying drift factor $Z^d$.
Moreover, $Z_c^{st}$ and $Z_c^{dy}$, along with the static spurious factors $Z_s^{st}$ and dynamic spurious factors $Z_s^{dy}$, constitute the static factors $Z^{st}$ and dynamic factors $Z^{dy}$, respectively.
Therefore, in our new SCM, the data $X$ is generated by $X \leftarrow (Z^{st},Z^{dy})$ while the label $Y$ is generated via $Y \leftarrow (Z_c^{st},Z_c^{dy}, Z^d)$.
It can be observed that the time factor acts as a latent confounder, with its components $G$ and $L$ forming backdoor paths between static and dynamic spurious factors and the target.
These paths introduce spurious correlations, which can be mitigated by identifying the true causal factors and constructing a causal model accordingly.

In order to learn the causal model with the help of the time-aware SCM, we propose \textbf{S}tatic-D\textbf{YN}amic \textbf{C}ausal Representation Learning (\textbf{SYNC}), a method designed to effectively capture time-aware causal representations by simultaneously uncovering static and dynamic causal representations.
Concretely, a sequential VAE framework is employed to capture the underlying evolving patterns in data distribution.
To separate causal factors from the complex and entangled mixture of factors, 
we first minimize the mutual information (MI) loss between static and dynamic factors to encourage disentanglement between them.
After that, static and dynamic causal factors can be obtained by maximally preserving the intra-class compactness of causal factors both across and within temporal domains, implemented by optimizing the designed cross-domain and intra-domain conditional MI losses that account for continuous domain structure in EDG.
We theoretically show that, with the devised objectives, our method can yield the optimal
causal predictor for each time domain, ultimately excluding spurious correlations and achieving superior temporal generalization performance.

\textbf{Contributions}:
\textbf{(i)}
By taking a novel causal perspective towards EDG problem, we design a time-aware SCM that enables the refined modeling of both dynamic causal factors and causal mechanism drifts.
After that, we propose SYNC, an approach for effectively learning time-aware causal representations, thereby mitigating spurious correlations.
\textbf{(ii)}
Theoretically, we show that SYNC can build the optimal causal predictor for each time domain, resulting in improved model generalization.
\textbf{(iii)}
Results on both synthetic and real-world datasets, along with extensive analytic experiments demonstrate the effectiveness of proposed approach.
\section{Related Work}
\textbf{Evolving Domain Generalization (EDG)}
is introduced to address the issue of generalizing across temporally drift domains~\cite{GI,LSSAE,DDA,DRAIN,MISTS,TKNets,SDE-EDG,mmd-lsae,CTOT}.
Most existing methods learn a time-sensitive model.
To name a few, GI~\cite{GI} suggests training a time-sensitive model by utilizing a gradient interpolation loss.
LSSAE and MMD-LSAE~\cite{LSSAE,mmd-lsae} account for covariate shifts and concept drifts and introduce a probabilistic model with variational inference to capture evolving patterns.
MISTS~\cite{MISTS} further proposes to learn both invariant and dynamic features through mutual information regularization.
Additionally, DRAIN~\cite{DRAIN} launches a Bayesian framework and adaptively generates network parameters through dynamic graphs.
SDE-EDG~\cite{SDE-EDG} proposes to learn continuous trajectories captured by the stochastic differential equations.
Different from the aforementioned methods, DDA~\cite{DDA} simulates the unseen target data by a meta-learned domain transformer.
However, these methods do not account for spurious correlations, which could hinder the model generalization.
In this work, we introduce a novel causal perspective to EDG problem and learn time-aware causal representations to address this issue.

\textbf{Causality and Generalization.}  
Causality describes the fundamental relationship between cause and effect, which goes beyond the statistical correlation simply obtained from observational data~\cite{pearl2009causality, pearl2016causal}. 
Since causal models can effectively mitigate the impact of spurious correlations, many studies have proposed leveraging causal theories to develop more robust models.
Causal methods in DG can be broadly categorized into two types.
The first type of methods~\cite{IRM, IRMGames, IIB, IB-IRM, sun2021recovering,CIL} focuses on constructing an optimal invariant predictor through invariant learning.
However, by neglecting dynamic causal information or the correlations between temporal domains, these methods struggle to perform effectively in dynamic environments.
The second type~\cite{matchdg,cirl,rice,hu2022causal,mao2022causal,chen2023meta,yue2021transporting} leverages interventions to extract causal features. 
Nevertheless, in dynamic scenarios, the presence of dynamical causal factors complicates the rational intervention in spurious factors.
Regarding causal methods for time series modeling~\cite{entner2010causal,malinsky2018causal,li2021causal,yao2022temporally}, they often need somewhat strong assumptions on models such as the reversibility of the generation function to infer underlying causal structures.
Furthermore, the aforementioned methods overlook the fact that the causal mechanisms of the target may drift~\cite{lu2018learning} in dynamic scenarios.
\section{Methodology}

\subsection{Preliminaries}
\paragraph{Evolving Domain Generalization.}
Let $\boldsymbol{x} \in \mathcal{X}$ and $y \in \mathcal{Y}$ denote the input data and its label, respectively, where $\mathcal{X}$ and $\mathcal{Y}$ represent the nonempty input space and label space, respectively. 
$P(\boldsymbol{x},y,t)$ characterizes the temporal dynamic of the probability distribution for $(\boldsymbol{x},y)$, wherein underlying evolving patterns over time $t$ exist. 
Suppose that we are given $T$ sequential source domains $\mathcal{S}=\{\mathcal{D}_t\}_{t=1}^T$, where each domain 
$\mathcal{D}_t$ consists of $N_t$ data $(\boldsymbol{x}_{t,i},y_{t,i})$ which are drawn from $P(\boldsymbol{x},y| t)$, \textit{i.e.}, $\mathcal{D}_t=\{(\boldsymbol{x}_{t,i},y_{t,i})\}_{i=1}^{N_t}$. 
The goal of EDG is to leverage given $T$ training domains to establish a robust model $h: \mathcal{X} \rightarrow \mathcal{Y}$ which can generalize well to $K$ unseen target domains $\mathcal{T}=\{\mathcal{D}_t\}_{t=T+1}^{T+K}$ arriving sequentially in the near future.
Due to the existence of evolving patterns across the sequential domains, the ideal model should be capable of capturing these patterns. 

\textbf{Structural Causal Model.}
The structual causal model (SCM) over $n$ random variables $X_1, X_2, \cdots, X_n$, according to~\cite{pearl2009causality}, is defined as a triple $\langle \mathcal{G},P(\boldsymbol{\epsilon}),\mathcal{F}\rangle$. 
$\mathcal{G}$ denotes the causal directed acyclic graph (DAG) which is comprised of the nodes $V$ and the directed edges $E$, where the starting point and end point of each directed edge represent cause and effect, respectively. 
$\boldsymbol{\epsilon}$ is the independent exogenous noise variables.
These noise variables follow the joint distribution $P(\boldsymbol{\epsilon})$.
$\mathcal{F}$ denotes a set of assignments $\{X_i:=f_i(\mathbf{PA}_i,\epsilon_i)\}_{i=1}^n$, where $\mathbf{PA}_i$ is the set of parents of $X_i$, \textit{i.e.}, direct causes, $f_i$ represents a deterministic function. 
Each SCM induces a corresponding causal DAG, and the data generation process can be characterized by the SCM.

\subsection{Temporally Evolving SCM}\label{ssction: temporally evolving SCM}
Spurious correlations may mislead the model into learning features that are irrelevant to the target, posing a notable challenge to model generalization in EDG.
Since causal models can effectively alleviate spurious correlations, we aim to construct an appropriate SCM for EDG and acquire the causal model by learning causal representations.

Given that causal-based methods~\cite{cirl,liu2021learning,lu2021invariant,sun2021recovering} have been extensively explored in static DG, we start by briefly reviewing the SCM employed in static DG.
As mentioned in Section~\ref{section:introduction}, the target $Y$ is determined solely by causal factors $Z_c$, and the causal mechanism $P(Y|Z_c)$ remains consistent across environments.
On this basis, a causal model can be established by extracting $Z_c$ and constructing the corresponding invariant causal mechanisms. 
Nevertheless, such time-independent SCM is not suitable for dynamic scenarios.
Specifically, first, ignoring the underlying continuous structure in non-stationary tasks limits performance.
Second, the statistical properties of the target may change over time~\cite{gama2014survey,lu2018learning}, implying that the influence of causal factors on the target may vary, resulting in causal mechanism drifts.
Therefore, to apply causality in EDG, designing a new and suitable SCM is essential.

Regarding the first issue, we further partition the causal factors into static causal factors $Z_c^{st}$ and dynamic causal factors $Z_c^{dy}$, where $Z_c^{st}$ contains  stable category-related information, while $Z_c^{dy}$ is related to domain-specific category-related information.
$Z_c^{st}$ and $Z_c^{dy}$, along with the static spurious factors $Z_s^{st}$ and dynamic spurious factors $Z_s^{dy}$, constitute the static factors $Z^{st}$ and dynamic factor $Z^{dy}$, respectively.
Then, $Z^{st}$ and $Z^{dy}$ collaborate to generate the observed data $X$.
Regarding the second issue, in order to characterize the evolving causal structure $P(Y|Z_c^{st},Z_c^{dy})$, we 
introduce drift factors $Z^d$ to characterize the evolution of causal mechanisms.
Finally, we introduce the local variable $L$, which consist of unstable and dynamic information, along with the stable global variable $G$, to represent the generation of $Z^{st}$, $Z^{dy}$ and $Z^{d}$.
Combined with the above analysis, we propose the time-aware SCM as follows:
\begin{definition}[Time-aware SCM]\label{definition:scm}
    The time-aware SCM $\mathcal{M}_T$ built on observed variables $X, Y$, latent variables $Z_c^{st}, Z_s^{st}, Z_c^{dy}, Z_s^{dy}, Z^d$ and information factors $G$ and $L$, is a triple, \textit{i.e.},  $\mathcal{M}_T=\langle \mathcal{G},P(\boldsymbol{\epsilon}),\mathcal{F}\rangle$, $\mathcal{G}$ is the corresponding directed acyclic causal graph shown in Fig.~\ref{fig:Our_SCM}, $P(\boldsymbol{\epsilon})$ denotes the distribution that exogenous noise follows, and $\mathcal{F}$ denotes the assignments $\{f^x, f^y, f_c^{st}, f_s^{st}, f_c^{dy}, f_s^{dy}, f^d\}$:
    \begin{equation*}\label{equation:assignments}\small
        \begin{gathered}
            \begin{aligned}
            &X:=f^x(Z_c^{st},Z_s^{st},Z_c^{dy},Z_s^{dy},\epsilon_x)\,, Y:=f^y(Z_c^{st},Z_c^{dy},Z^d,\epsilon_y)\,, \\
            &Z_c^{st}:=f_c^{st}(G,\epsilon_c^{st})\,, Z_s^{st}:=f_s^{st}(G,\epsilon_s^{st})\,, \\
            &Z_c^{dy}:=f_c^{dy}(L,\epsilon_c^{dy})\,, Z_s^{dy}:=f_s^{dy}(L,\epsilon_s^{dy})\,, Z^{d}:=f^{d}(L,\epsilon_{d})\,.
            \end{aligned}
        \end{gathered}
    \end{equation*} 
\end{definition}
From Def.~\ref{definition:scm}, the generation of the observed variables is fully determined by five latent factors, with no additional unobserved factors involved.
Under the global Markov condition and the causal faithfulness~\cite{pearl2016causal}, it follows that $Y \independent [Z_s^{st},Z_s^{dy}] \given [Z_c^{st}, Z_c^{dy}, Z^d]$. 
This observation suggests that, within a given time domain $\mathcal{D}_t$, predictors built on causal representations are immune to the influence of spurious factors.
By incorporating a mutual information (MI) term to filter out information irrelevant to the target, the optimal causal predictor for $\mathcal{D}_t$ can be defined as follows:
\begin{definition}[Optimal Causal Predictor]\label{definition:ocp}
    For a time domain $\mathcal{D}_t$, the corresponding drift factors $Z_t^d$ are given.
    Let $Z_{c,t}=(Z_{c,t}^{st},Z_{c,t}^{dy})$ be the causal factors in domain $\mathcal{D}_t$ that satisfy $Z_{c,t} \in \argmax_{Z_{c,t}} \ I(Y;Z_{c,t}, Z_t^d)$, and $Y\independent [Z_{s,t}^{st},Z_{s,t}^{dy}] \given [Z_{c,t}, Z_t^d]$. 
    Then the predictor based on factors $Z_{c,t}$ and $Z_t^d$ is the optimal causal predictor.
\end{definition}
From Def.~\ref{definition:ocp}, when static and dynamic causal factors, along with drifts in causal mechanisms, are modeled together, the causal model for each time domain can be accurately learned.
Building on the time-aware SCM $\mathcal{M}_T$, we formulate a temporally evolving SCM $\mathcal{M}_{evo}$, which is composed of $\mathcal{M}_T$ as the basic element, connected sequentially along the time axis.
As shown in Fig.~\ref{fig:SCM_2}, static factors remain invariant to temporal changes, whereas dynamic and drift factors evolve according to specific patterns.

\begin{figure}[t]
    \centering
    \includegraphics[width=1.0\linewidth]{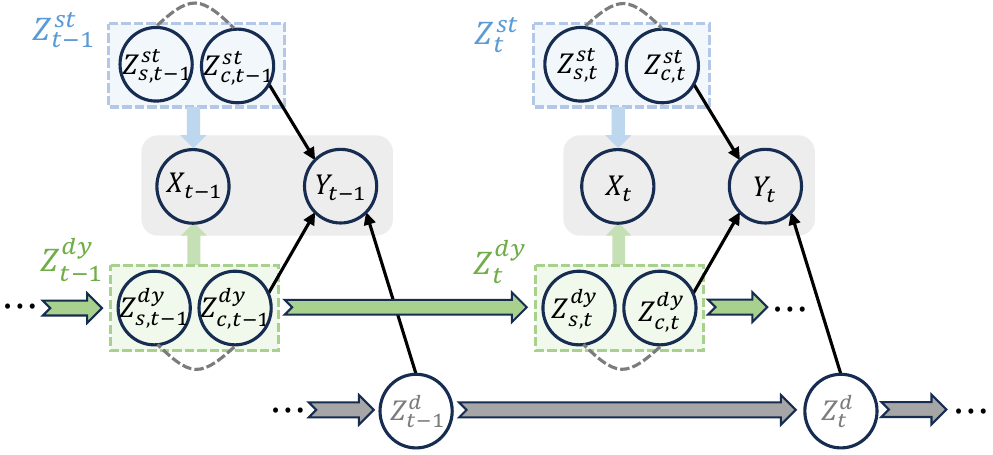}
    \hspace{-6pt}
    \caption{Temporally Evolving SCM for EDG. The dashed line indicates that the two factors are correlated.}
    \label{fig:SCM_2}
    \vspace{-5pt}
\end{figure}

Existing causal methods developed for static DG suffer from the following two problems in non-stationary environments:
\textbf{(i)} The presence of dynamic causal factors complicates the rational intervention in spurious factors;
\textbf{(ii)} Learning with invariant causal mechanisms, as in a static environment, is not feasible. 
To address these issues, we propose \textbf{S}tatic-D\textbf{YN}amic \textbf{C}ausal Representation Learning (\textbf{SYNC}), which effectively learns 
time-aware causal representations composed of both static and dynamic causal components.

\begin{figure*}[t]
    \centering
    \includegraphics[width=1.0\linewidth]{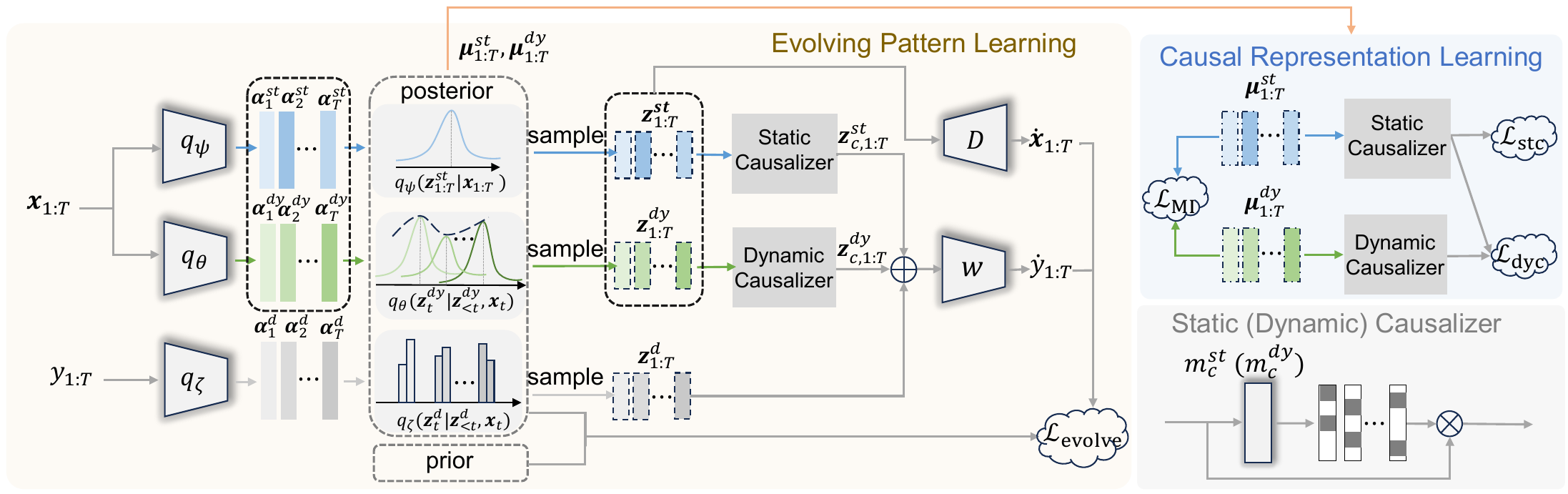}
    \hspace{-6pt}
    \vspace{-2mm}
    \caption{The training framework of SYNC.
    We employ $q_{\psi},q_{\theta},q_{\zeta}$ to model the posteriors, where $\alpha_i^{\cdot}=(\boldsymbol{\mu}_i^{\cdot},\boldsymbol{\sigma}_i^{2 ,\cdot})$, $\boldsymbol{\mu},\boldsymbol{\sigma}^2$ is the mean and variance of the posterior, $\cdot$ denotes ``st'', ``dy'' or ``d''.
    Then, the sampled 
    $\boldsymbol{z}_{1:T}^{st}$ and 
    $\boldsymbol{z}_{1:T}^{dy}$ are utilized to reconstruct the data through the decoder $D$.
    After passing the corresponding causalizer (in which $m_c^{st}$ or $m_c^{dy}$ is a 0-1 masker), they are combined with $\boldsymbol{z}_{1:T}^d$ to make predition through the classifier $w$.
    The reconstruction term and the divergency between the priors and the posteriors is used to calculate the evolving pattern loss $\mathcal{L}_{\text{evolve}}$.
    The means of the static and dynamic posteriors are utilized to calculate MI losses $\mathcal{L}_{\text{MI}},\mathcal{L}_{\text{stc}},\mathcal{L}_{\text{dyc}}$, aiming to obtain static and dynamic causal factors.
    }
    
    \label{fig:framework}
    \vspace{-2mm}
\end{figure*}

\subsection{Static-Dynamic Causal Representation Learning}
\subsubsection{Evolving Pattern Learning}
We consider the evolving patterns of both static and dynamic latent factors.
For data $(\boldsymbol{x}_t, y_t)$ collected at time stamp $t$, let $\boldsymbol{z}_t^{st}$ and $\boldsymbol{z}_t^{dy}$ denote the static factors and dynamic factors, respectively. 
Our goal is to learn the latent evolving patterns by modeling the condition distributions $p(\boldsymbol{z}_t^{st}| \boldsymbol{x}_t)$ and $p(\boldsymbol{z}_t^{dy}| \boldsymbol{z}_{<t}^{dy},\boldsymbol{x}_t)$.
To this end, a neural network $q_{\theta}(\boldsymbol{z}_t^{dy}| \boldsymbol{z}_{<t}^{dy},x_t)$ $=$ $\mathcal{N}(\boldsymbol{\mu}(\boldsymbol{z}_t^{dy}),\boldsymbol{\sigma}^2(\boldsymbol{z}_t^{dy}))$ is employed to approximate $p(\boldsymbol{z}_t^{dy}| \boldsymbol{z}_{<t}^{dy},\boldsymbol{x}_t)$.
This approximation is achieved by minimizing $\mathbb{D}_{KL}(q_{\theta}(\boldsymbol{z}_t^{dy}| \boldsymbol{z}_{<t}^{dy},\boldsymbol{x}_t),p(\boldsymbol{z}_t^{dy}| \boldsymbol{z}_{<t}^{dy},\boldsymbol{x}_t))$, which is equivalent to maximizing
\begin{equation*}
    \scalebox{1.0}{$\ \mathbb{E}_{q_{\theta}}\log p(\boldsymbol{x}_t| \boldsymbol{z}_t^{dy}) -\mathbb{D}_{KL}(q_{\theta}(\boldsymbol{z}_t^{dy}| \boldsymbol{z}_{<t}^{dy},\boldsymbol{x}_t),p(\boldsymbol{z}_t^{dy}| \boldsymbol{z}_{<t}^{dy})) \,,$}
\end{equation*}
where $p_(\boldsymbol{z}_t^{dy}| \boldsymbol{z}_{<t}^{dy})$ is the prior distribution of $\boldsymbol{z}_t^{dy}$. 
Inspired by~\cite{LSSAE}, we model it using LSTM~\cite{LSTM}, by setting the initial state as $\boldsymbol{z}_0^{dy} = \boldsymbol{0}$.
For $p(\boldsymbol{z}_t^{st}| \boldsymbol{x}_t)$, we utilize a network $q_{\psi}(\boldsymbol{z}_t^{st}| \boldsymbol{x}_t)=\mathcal{N}(\boldsymbol{\mu}(\boldsymbol{z}_t^{st}),\boldsymbol{\sigma}^2(\boldsymbol{z}_t^{st}))$ to approximate.
Analogously, to minimize $\mathbb{D}_{KL}(q_{\psi}(\boldsymbol{z}_t^{st}| \boldsymbol{x}_t),p(\boldsymbol{z}_t^{st}| \boldsymbol{x}_t))$, we need to maximize
\begin{equation*}
    \begin{aligned}
     \mathbb{E}_{q_{\psi}}\log p(\boldsymbol{x}_t| \boldsymbol{z}_t^{st}) 
    -\mathbb{D}_{KL}(q_{\psi}(\boldsymbol{z}_t^{st}| \boldsymbol{x}_t),p(\boldsymbol{z}_t^{st})) \,,
\end{aligned}
\end{equation*}
where $p(\boldsymbol{z}_t^{st})$ is set as $\mathcal{N}(\boldsymbol{0},\boldsymbol{I})$. 
In general, the objective function for learning the latent patterns of features is
\begin{equation}\label{loss:fp}
    \begin{aligned}
        \mathcal{L}_{\text{fp}} = &-\sum_{t=1}^T \mathbb{E}_{q_{\theta}q_{\psi}}[\log p(\boldsymbol{x}_t| \boldsymbol{z}_t^{st},\boldsymbol{z}_t^{dy})] \\
        &+ \sum_{t=1}^T\mathbb{D}_{KL}(q_{\psi}(\boldsymbol{z}_t^{st}| \boldsymbol{x}_t),p(\boldsymbol{z}_t^{st})) \\
        & + \sum_{t=1}^T\mathbb{D}_{KL}(q_{\theta}(\boldsymbol{z}_t^{dy}| \boldsymbol{z}_{<t}^{dy},\boldsymbol{x}_t),p(\boldsymbol{z}_t^{dy}| \boldsymbol{z}_{<t}^{dy}))\,,
    \end{aligned}
\end{equation}
where the log-likelihood term represents the reconstruction term for input $\boldsymbol{x}_t$, two KL divergence terms are used to align the posterior distributions of $\boldsymbol{z}_t^{st}$ and $\boldsymbol{z}_t^{dy}$ with the 
corresponding prior distributions.

\subsubsection{Disentanglement of Static and Dynamic Representations}
Although static and dynamic factors are modeled separately in Eq.~\eqref{loss:fp}, the absence of additional constraints prevents us from guaranteeing that static factors do not contain dynamic information, the same is true for dynamic factors. 
MI quantifies the degree of dependence between two variables. By minimizing the MI, the exclusivity of the information contained in these variables can be promoted.
Therefore, in order to achieve the disentanglement of $\boldsymbol{z}_t^{st}$ and $\boldsymbol{z}_t^{dy}$, we propose to minimize the MI between them.
The MI between $\boldsymbol{z}_t^{st}$ and $\boldsymbol{z}_t^{dy}$ can be written as $I(\boldsymbol{z}_t^{st}; \boldsymbol{z}_t^{dy}) = H(\boldsymbol{z}_t^{st}) + H(\boldsymbol{z}_t^{dy}) - H(\boldsymbol{z}_t^{st},\boldsymbol{z}_t^{dy})$.
Following~\cite{chen2018isolating}, we use a standard mini-batch weighted sampling (MWS) to estimate the entropy term $H(\cdot)=-\mathbb{E}_{q(\cdot)}[\log q(\cdot)]$. 
Specifically, when provided with a mini-batch of samples $\{\boldsymbol{x}_t^1,\boldsymbol{x}_t^2,\cdots,\boldsymbol{x}_t^B \}$ at time $t$, we can use the estimator
\begin{equation}
    \mathbb{E}_{q(\cdot)}[\log q(\cdot)] \approx \frac{1}{B}\sum_{i=1}^B [\log\frac{1}{BB^{\prime}}\sum_{j=1}^B q(\boldsymbol{z}(\boldsymbol{x}_t^i)| \boldsymbol{x}_t^j)]\,,
\end{equation}
where $\boldsymbol{z}(\boldsymbol{x}_t^i)$ denotes a sample from $q(\cdot | \boldsymbol{x}_t^i)$, and $B$ 
and $B^{\prime}$ denote the batch size and data size, respectively.
The MI loss can be written as
\begin{equation}\label{loss:MI}
    \mathcal{L}_{\text{MI}} = \sum_{t=1}^T I(\boldsymbol{z}_t^{st}, \boldsymbol{z}_t^{dy})\,.
\end{equation}

\subsubsection{Unearthing Static and Dynamic Causal Representations}
As mentioned in Section~\ref{ssction: temporally evolving SCM}, learning causal representations poses significant challenges. 
To address this, we need to find some properties of causal factors and translate them into equivalent and feasible objectives.
Prop.~\ref{prop1} below gives the properties of causal factors in the continuous time domain:
\begin{proposition}\label{prop1}
    Let $\mathcal{D}_t$ be a time domain, and for a given class $Y$, it can be conclude that:

    \begin{enumerate}[label=(\roman*)]
        \item If there is $H(Z_{c,t}^{st}|Z_{c,t-1}^{st},Y)$ $<$ $H(Z_{s,t}^{st}|Z_{c,t-1}^{st},Y) $, then $I(Z_{c,t}^{st};Z_{c,t-1}^{st}| Y)$ $>$ $I(Z_{s,t}^{st};Z_{c,t-1}^{st}| Y)$. \label{prop. 1}
        \item If there is $H(Z_{c,t}^{st}|Z_{c,t}^{dy},Y)$ $<$ $H(Z_{c,t}^{st}|Z_{s,t}^{dy},Y)$, then $I(Z_{c,t}^{dy};Z_{c,t}^{st}| Y) $ $>$ $I(Z_{s,t}^{dy};Z_{c,t}^{st}| Y)$. \label{prop. 2}
    \end{enumerate}
\end{proposition}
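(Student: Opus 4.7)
The plan is to reduce both inequalities to comparisons between conditional entropies using the standard identity \(I(A;B\mid C)=H(A\mid C)-H(A\mid B,C)=H(B\mid C)-H(B\mid A,C)\), together with the symmetry \(I(A;B\mid C)=I(B;A\mid C)\). Because each claim has the form ``one conditional MI exceeds another that shares one argument with it,'' the right-hand-side marginal term in the entropy decomposition should either cancel or reduce to the given hypothesis, so the whole proof is really a bookkeeping exercise on entropy identities.

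I would tackle part~\ref{prop. 2} first because it is essentially a one-liner. First I would apply MI symmetry to rewrite \(I(Z_{c,t}^{dy};Z_{c,t}^{st}\mid Y)=I(Z_{c,t}^{st};Z_{c,t}^{dy}\mid Y)\) and \(I(Z_{s,t}^{dy};Z_{c,t}^{st}\mid Y)=I(Z_{c,t}^{st};Z_{s,t}^{dy}\mid Y)\), so that \(Z_{c,t}^{st}\) plays the role of the ``first'' argument in both quantities. Decomposing each as \(H(Z_{c,t}^{st}\mid Y)-H(Z_{c,t}^{st}\mid \cdot,Y)\), the common marginal \(H(Z_{c,t}^{st}\mid Y)\) cancels upon subtraction, leaving exactly \(H(Z_{c,t}^{st}\mid Z_{s,t}^{dy},Y)-H(Z_{c,t}^{st}\mid Z_{c,t}^{dy},Y)\), which is positive by the hypothesis of part~\ref{prop. 2}. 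Done.

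For part~\ref{prop. 1} the same template does not line up automatically, and this is where I expect the main obstacle. The hypothesis is about \(H(Z_{c,t}^{st}\mid Z_{c,t-1}^{st},Y)\) versus \(H(Z_{s,t}^{st}\mid Z_{c,t-1}^{st},Y)\), i.e.\ it conditions on \(Z_{c,t-1}^{st}\), while the conditional-MI identity that causes cancellation would need those time-\(t\) variables in the conditioning slot instead. I would convert between the two orientations using the joint-entropy chain rule \(H(A\mid B,Y)=H(A\mid Y)+H(B\mid A,Y)-H(B\mid Y)\), applied once with \((A,B)=(Z_{c,t-1}^{st},Z_{c,t}^{st})\) and once with \((A,B)=(Z_{c,t-1}^{st},Z_{s,t}^{st})\). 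After substituting into the difference \(I(Z_{c,t}^{st};Z_{c,t-1}^{st}\mid Y)-I(Z_{s,t}^{st};Z_{c,t-1}^{st}\mid Y)\), the \(H(Z_{c,t-1}^{st}\mid Y)\) terms cancel and what survives is
\[
\bigl[H(Z_{s,t}^{st}\mid Z_{c,t-1}^{st},Y)-H(Z_{c,t}^{st}\mid Z_{c,t-1}^{st},Y)\bigr]+\bigl[H(Z_{c,t}^{st}\mid Y)-H(Z_{s,t}^{st}\mid Y)\bigr].
\]
The first bracket is strictly positive by the hypothesis of part~\ref{prop. 1}.

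The main difficulty is therefore the second bracket: the hypothesis alone does not pin down its sign. To close the gap I would lean on the structural content of the time-aware SCM in Def.~\ref{definition:scm}, where \(Z_{c,t}^{st}\) and \(Z_{s,t}^{st}\) are both generated from the same global variable \(G\) via independent exogenous noises, so under the natural stationarity of static factors the class-conditional marginals satisfy a normalization \(H(Z_{c,t}^{st}\mid Y)=H(Z_{s,t}^{st}\mid Y)\) (or, more weakly, \(H(Z_{c,t}^{st}\mid Y)\ge H(Z_{s,t}^{st}\mid Y)\)); this is the standard ``equal-capacity'' convention used when comparing causal and spurious representations of the same dimensionality. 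Under this assumption the residual bracket is non-negative and the desired strict inequality follows from the first bracket. I would state this normalization explicitly at the beginning of the proof so that the remainder is a clean entropy manipulation, and I would flag that relaxing it is the technical price one pays for keeping the hypothesis in the asymmetric ``predict forward'' form in which it is stated.
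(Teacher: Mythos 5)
Your treatment of part~\ref{prop. 2} is identical to the paper's: use the symmetry of conditional mutual information to put $Z_{c,t}^{st}$ in the first slot, decompose each term as $H(Z_{c,t}^{st}\mid Y)-H(Z_{c,t}^{st}\mid\cdot,Y)$, cancel the common marginal $H(Z_{c,t}^{st}\mid Y)$, and recover the hypothesis verbatim. Nothing further is needed there.

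For part~\ref{prop. 1} you have put your finger on a genuine issue with the statement as phrased. The decomposition with the time-$t$ variable in the first slot gives
\begin{equation*}
I(Z_{c,t}^{st};Z_{c,t-1}^{st}\mid Y)-I(Z_{s,t}^{st};Z_{c,t-1}^{st}\mid Y)
=\bigl[H(Z_{c,t}^{st}\mid Y)-H(Z_{s,t}^{st}\mid Y)\bigr]
+\bigl[H(Z_{s,t}^{st}\mid Z_{c,t-1}^{st},Y)-H(Z_{c,t}^{st}\mid Z_{c,t-1}^{st},Y)\bigr],
\end{equation*}
and the hypothesis controls only the second bracket. The paper's own proof simply lists the two conditional-MI identities and asserts a biconditional between the stated hypothesis and the conclusion ``thus.'' But the only marginal that cancels cleanly in the natural decomposition is $H(Z_{c,t-1}^{st}\mid Y)$, which makes the conclusion equivalent to $H(Z_{c,t-1}^{st}\mid Z_{c,t}^{st},Y)<H(Z_{c,t-1}^{st}\mid Z_{s,t}^{st},Y)$ --- the conditioning orientation opposite to the one in the stated hypothesis of~\ref{prop. 1}. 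The two are interchangeable precisely when $H(Z_{c,t}^{st}\mid Y)=H(Z_{s,t}^{st}\mid Y)$, which is exactly the ``equal-capacity'' normalization you propose. So the paper's argument is, as written, only valid under an unstated assumption; your proof makes that assumption explicit and is therefore the more careful one. The alternative fix you hint at --- flipping the conditioning in the hypothesis of~\ref{prop. 1} to $H(Z_{c,t-1}^{st}\mid Z_{c,t}^{st},Y)<H(Z_{c,t-1}^{st}\mid Z_{s,t}^{st},Y)$ --- would restore the clean cancellation and make part~\ref{prop. 1} exactly parallel to part~\ref{prop. 2} without any extra assumption; that is likely what the authors intended.
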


In fact, the condition in Prop.~\ref{prop1} is generally easy to meet, as causal factors typically exhibit greater similarity, leading to smaller conditional entropy.
\begin{remark}
    Although Prop.~\ref{prop1} appears straightforward, its inspiration comes from $\mathcal{M}_T$ and can be intuitively understood within this framework. 
    Since the two claims are similar, we explain only the second one for simplicity.
    As shown in Fig.~\ref{fig:Our_SCM},  there exists a collider $Z_c^{st}\rightarrow Y \leftarrow Z_c^{dy}$ and a fork $Z_s^{dy} \leftarrow L \rightarrow Z_c^{dy}$.
    According to d-separation~\cite{ pearl2016causal}, when conditioned on $Y$, both $Z_c^{dy}$ and $Z_s^{dy}$ are related to $Z_c^{st}$. Consider a boundary case where $Z_c^{st}$ and $Z_s^{dy}$ are independent, this implies that the backdoor path between $Z_c^{dy}$ and $Z_s^{dy}$ is blocked, leading to independence between $Z_c^{st}$ and $Z_s^{dy}$, while $Z_c^{st}$ remains related to $Z_c^{dy}$. 
    Prop.~\ref{prop1} generalizes this observation, under certain entropy inequalities, static causal factors are more strongly related to dynamic causal factors than dynamic spurious factors.
    As mentioned above, the conditions are easy to satisfy.
\end{remark}



Prop.~\ref{prop1}~\ref{prop. 1} provides a way to learn the static causal factors:
\begin{equation}\label{equation:cmi_1}
    \max_{\Phi_c^{st}} I(\Phi_c^{st}(X_{t});\Phi_c^{st}(X_{t-1})| Y)\,,
\end{equation}
where $\Phi_c^{st}$ consists of the feature extraction component of $q_{\psi}$, namely $q_{\psi}^{ext}$, and a neural-network-based
masker $m_c^{st}$, such that $\Phi_c^{st} = m_c^{st} \circ q_{\psi}^{ext}$.
$m_c^{st}$ is implemented by
\begin{equation}\label{equation:gumbel}
\setlength\abovedisplayskip{13pt}
\setlength\belowdisplayskip{13pt}
    m_c^{st}=\text{Gumbel-Softmax}\left(w_c^{st}(q_{\psi}^{ext}(X_t)),\kappa N\right)\,,
\end{equation}
where $w_c^{st}$ learns the scores of each dimensions of $q_{\psi}^{ext}(X_t)$, and the dimensions correspond to the mask ratio $\kappa \in (0, 1)$ are regarded as causal dimensions, $N$ is the dimension of $q_{\psi}^{ext}(X_t)$.
We employ the Gumbel-Softmax trick~\cite{gumbel} to sample a mask with $\kappa N$ values approaching $1$.

Since an exact estimate of Eq.~\eqref{equation:cmi_1} could be highly expensive~\cite{oord2018representation}, we use supervised contrastive learning~\cite{khosla2020supervised,belghazi2018mutual} as a practical solution for the approximating $I(\Phi_c^{st}(X_{t});\Phi_c^{st}(X_{t-1})| Y)$:
\begin{equation}\label{equation:cmi_1_approx}
\begin{aligned}
\scalebox{1.05}{$\mathbb{E}_{\substack{\{X_{t,j},X_{t-1,k}^{p}\}\sim P(X|y=Y)\\\{X_{t-1,i}^{n}\}_{i=1}^{M}\sim P(X|y\neq Y)}} \log\frac{e^{l_{t}^{st}(j,k)/\tau}}{e^{l_{t}^{st}(j,k)/\tau}+\sum_{i=1}^{M} e^{l_{t}^{st}(j,i)/\tau}}\,,$}
\end{aligned}
\end{equation}
where $l_{t+1}^{st}(j,k)=\text{sim}(\Phi_{c}^{st}(X_{t,j}),\Phi_{c}^{st}(X_{t-1,k}^{p}))$, and $l_{t}^{st}(j,i)=\text{sim}(\Phi_c^{st}(X_{t,j}),\Phi_{c}^{st}(X_{t-1,i}^{n}))$, $\text{sim}(\cdot,\cdot)$ is the cosine similarity.
In Eq.~\eqref{equation:cmi_1_approx}, the positive sample $X_{t-1,k}^p$ shares the same label as $X_{t,j}$, while the negative samples $\{X_{t-1,i}^{n}\}_{i=1}^{M}$ have different labels.
$\tau$ is the temperature hyperparameter.
In addition, we utilize the MI term $I(Y;\Phi_c^{st}(X_t))$ to filter out information irrelevant to the target.
Lem.~\ref{lemma1} as follows demonstrates the effectiveness of designed objective.
\begin{lemma}\label{lemma1}
    For a time domain $\mathcal{D}_t$, the static causal factors can be obtained by solving the following objective:
    \begin{equation}
    \begin{aligned}
        &\quad \max_{\Phi_c^{st}} I(Y;\Phi_c^{st}(X_t))\,, \\  s.t. \ \Phi_c^{st}\in &\argmax_{\Phi_c^{st}} I(\Phi_c^{st}(X_t);\Phi_c^{st}(X_{t-1})|Y)\,,
    \end{aligned}
    \end{equation}
\end{lemma}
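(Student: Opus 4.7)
The plan is to treat the constrained optimization as a two-level problem and use the time-aware SCM plus Proposition~\ref{prop1}\ref{prop. 1} to isolate the static causal factors one at a time. At a high level, the inner constraint (cross-domain conditional MI with label conditioning) filters out both the dynamic factors and the \emph{static spurious} factors, leaving a representation that is a function of $Z_{c,t}^{st}$; the outer objective ($\max I(Y;\Phi_c^{st}(X_t))$) then forces the selected representation to retain every bit of label-relevant information carried by $Z_{c,t}^{st}$.

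First I would make explicit what any admissible $\Phi_c^{st}(X_t)$ can represent. Since $X_t = f^x(Z_{c,t}^{st},Z_{s,t}^{st},Z_{c,t}^{dy},Z_{s,t}^{dy},\epsilon_x)$, the image of $\Phi_c^{st}$ is some measurable function of these latent factors (and exogenous noise). I would then decompose a candidate representation into its static and dynamic parts and use the cross-time invariance that the SCM imposes on static factors: at time $t-1$ the dynamic latents $Z_{c,t-1}^{dy},Z_{s,t-1}^{dy}$ are independent of $Z_{\cdot,t}^{dy}$ given the label (their evolution is governed by the local variable $L$ at each $t$, not by $Y$). This lets me argue that any dynamic component in $\Phi_c^{st}$ contributes little to $I(\Phi_c^{st}(X_t);\Phi_c^{st}(X_{t-1})\mid Y)$, so optimal solutions of the inner problem project $X_t$ onto the static latent block $(Z_{c,t}^{st},Z_{s,t}^{st})$.

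Next I would apply Proposition~\ref{prop1}\ref{prop. 1} within this static block. Under the stated entropy condition, $I(Z_{c,t}^{st};Z_{c,t-1}^{st}\mid Y)>I(Z_{s,t}^{st};Z_{c,t-1}^{st}\mid Y)$, and an analogous inequality (obtained by symmetry or an easy data-processing argument) holds for any measurable encoding. Thus the argmax over $\Phi_c^{st}$ of the cross-domain conditional MI is attained only by representations that are functions of $Z_{c,t}^{st}$ and carry no extra dependence on $Z_{s,t}^{st}$; otherwise one could strictly increase the conditional MI by dropping the spurious coordinates. The inner problem therefore certifies $\Phi_c^{st}(X_t)=g(Z_{c,t}^{st})$ for some measurable $g$.

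Finally I would solve the outer problem $\max I(Y;\Phi_c^{st}(X_t))$ among such $g$. Because $Y\perp (Z_{s,t}^{st},Z_{s,t}^{dy})\mid (Z_{c,t}^{st},Z_{c,t}^{dy},Z_t^d)$ from the SCM, the data-processing inequality gives $I(Y;g(Z_{c,t}^{st}))\le I(Y;Z_{c,t}^{st})$ with equality iff $g$ is a sufficient statistic for $Y$ with respect to $Z_{c,t}^{st}$; this selects the full static causal factor (up to invertible reparameterization), which is exactly what the lemma claims. The main obstacle I anticipate is Step two: rigorously excluding dynamic contributions in the inner maximizer, since Proposition~\ref{prop1} only compares the static causal and static spurious blocks. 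I would handle this by lifting Proposition~\ref{prop1}\ref{prop. 1} to arbitrary encodings via a standard sufficiency/data-processing argument and by invoking the conditional independence $Z_{\cdot,t}^{dy}\perp Z_{\cdot,t-1}^{dy}\mid Y$ implied by the edges $L_t\to Z_{\cdot,t}^{dy}$ in the temporally evolving SCM of Fig.~\ref{fig:SCM_2}, so that any dynamic leakage strictly decreases the conditional MI and is therefore ruled out at the optimum.
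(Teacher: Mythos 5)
Your approach is genuinely different from the paper's, but one of your three steps does not go through given the SCM the paper actually constructs.

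The paper's own proof is a concrete entropy-decomposition argument: it writes the candidate representation as $\Phi_{c,t}^{st}=(Z_{lc,t}^{st},Z_{ls,t}^{st})$ (a mix of learned static-causal and learned static-spurious dimensions), compares its conditional MI against that of the ground-truth $Z_{c,t}^{st}$ by expanding $H(\cdot|t,Y)$ and $H(\cdot,t|\cdot,t-1,Y)$, and shows the gap is a sum of three non-negative terms, the last of which vanishes by a conditional-independence argument. Crucially, the paper's decomposition of $\Phi_{c,t}^{st}$ assumes from the outset that it carries only \emph{static} content; that restriction is justified architecturally, because $\Phi_c^{st}=m_c^{st}\circ q_\psi^{ext}$ is built on the static encoder and the disentanglement loss $\mathcal{L}_{\text{MI}}$ suppresses dynamic leakage into $q_\psi^{ext}$. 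You instead try to prove the stronger statement for an \emph{arbitrary} encoding of $X_t$, arguing in your Step~2 that dynamic leakage cannot help the inner objective because $Z_{\cdot,t}^{dy}\perp Z_{\cdot,t-1}^{dy}\mid Y$.

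That independence claim is the gap, and it is not a technicality. In the temporally evolving SCM $\mathcal{M}_{evo}$ (Fig.~\ref{fig:SCM_2}), the dynamic factors are explicitly modeled as temporally correlated: the local variables $L_{t-1}$ and $L_t$ are linked (the dashed edges), so $Z_{\cdot,t-1}^{dy}$ and $Z_{\cdot,t}^{dy}$ remain dependent even after conditioning on $Y$. Indeed, the entire point of the sequential VAE prior $p(\boldsymbol{z}_t^{dy}\mid\boldsymbol{z}_{<t}^{dy})$ is to exploit exactly this cross-time dependence. Consequently, a representation that leaks dynamic causal information could \emph{increase} the cross-domain conditional MI rather than decrease it, and your argument that "any dynamic leakage strictly decreases the conditional MI" cannot be sustained from the causal graph. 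If you want to keep your bilevel/data-processing route, you need to invoke the same structural restriction the paper uses, namely that $\Phi_c^{st}$ ranges only over functions of the static latent block (guaranteed by $\Phi_c^{st}=m_c^{st}\circ q_\psi^{ext}$ together with $\mathcal{L}_{\text{MI}}$), and then apply Prop.~\ref{prop1}\ref{prop. 1} and data processing only within that block; without that restriction the inner argmax is not forced onto $Z_{c,t}^{st}$.

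Your Steps~1 and~3 (applying Prop.~\ref{prop1}\ref{prop. 1} within the static block to rule out $Z_{s,t}^{st}$, and then using $Y\perp(Z_{s,t}^{st},Z_{s,t}^{dy})\mid(Z_{c,t}^{st},Z_{c,t}^{dy},Z_t^d)$ with data processing to argue the outer $\max I(Y;\Phi_c^{st}(X_t))$ selects a sufficient statistic of $Z_{c,t}^{st}$) are sound and in fact offer a cleaner, higher-level justification than the paper's term-by-term entropy bookkeeping, at the cost of hiding the quantitative gap the paper makes explicit. But as written the proposal does not close, because the middle step rests on a conditional independence the SCM does not license.
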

\vspace{-8pt}

where maximizing the MI term can be achieved by minimizing the cross-entropy loss between the representation $\Phi_c^{st}(X_t)$ and the target $Y$.
Prop.~\ref{prop1}~\ref{prop. 2} demonstrates that by using the static causal factors $Z_{c,t}^{st}$ as the anchor, the dynamic causal representations can be learned through
\begin{equation}\label{equation:cmi_2}
    \max_{\Phi_c^{dy}} I(\Phi_c^{dy}(X_t);Z_{c,t}^{st}| Y)\,,
\end{equation}
where $\Phi_c^{dy}$ is comprised of the feature extraction component of $q_{\theta}$, \textit{i.e.}, $q_{\theta}^{ext}$, and a neural-network-based
masker $m_c^{dy}$, such that $\Phi_c^{dy} = m_c^{dy} \circ q_{\theta}^{ext}$.
$m_c^{dy}$ is implemented is similar to Eq.~\eqref{equation:gumbel}.
We estimate $I(\Phi_c^{dy}(X_{t});Z_{c,t}^{st}| Y)$ by
\begin{equation}\label{equation:cmi_2_approx}
\begin{aligned}
     \mathbb{E}_{\substack{\{X_{t,j},X_{t,k}^{p}\}\sim P(X|y=Y)\\\{X_{t,i}^{n}\}_{i=1}^{M}\sim P(X|y\neq Y)}}\log\frac{e^{l_t^{dy}(j,k)/\tau}}{e^{l_t^{dy}(j,k)/\tau}+\sum_{i=1}^{M} e^{l_t^{dy}(j,i)/\tau}}\,, 
\end{aligned}
\end{equation}
where $l_t^{dy}(j,k)=\text{sim}(\Phi_{c}^{dy}(X_{t,j}),\hat{\boldsymbol{z}}_{c}^{st}(X_{t,k}^{p}))$, $l_t^{dy}(j,i)=\text{sim}(\Phi_{c}^{dy}(X_{t,j}),\hat{\boldsymbol{z}}_{c}^{st}(X_{t,i}^{n}))$,
with $X_{t,k}^p$ being a positive sample that shares the same label as $X_{t,j}$, and the negative samples $\{X_{t,i}^{n}\}_{i=1}^{M}$ are those having different labels.
$\hat{\boldsymbol{z}}_{c}^{st}(X_{t,\cdot}^{\cdot})$ denotes the static causal factors of $X_{t,\cdot}^{\cdot}$, which are implemented by $\Phi_c^{st}(X_{t,\cdot}^{\cdot})$.
We provide the following Lem.~\ref{lemma2}, which demonstrates the effectiveness of devised objective.

\begin{lemma}\label{lemma2}
    For a time domain $\mathcal{D}_t$, let $\Phi_c^{st,\star}(X_t)$ be the static causal factors learned by the model, then the dynamic causal factors can be obtained by the following objective:
    \begin{equation}
    \begin{aligned}
        &\quad \max_{\Phi_c^{dy}} I(Y;\Phi_c^{dy}(X_t))\,, \\ s.t. \ \Phi_c^{dy}\in &\argmax_{\Phi_c^{dy}} I(\Phi_c^{dy}(X_t);\Phi_c^{st,\star}(X_{t})|Y)\,.
    \end{aligned}
    \end{equation}
\end{lemma}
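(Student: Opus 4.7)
The plan is to mirror the proof of Lemma~\ref{lemma1}, swapping the roles of static and dynamic factors and using the already-learned static causal representation $\Phi_c^{st,\star}(X_t)$---which by assumption recovers $Z_{c,t}^{st}$---as the anchor in place of the temporally shifted $\Phi_c^{st}(X_{t-1})$ used in Lemma~\ref{lemma1}. The argument then splits into two stages: first I show that the constraint forces $\Phi_c^{dy}(X_t)$ to be aligned with $Z_{c,t}^{dy}$ rather than with $Z_{s,t}^{dy}$, and second I show that the outer objective $\max I(Y;\Phi_c^{dy}(X_t))$ eliminates any residual non-causal content from the argmax set of the constraint.

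For the first stage, since $\Phi_c^{st,\star}(X_t)$ recovers $Z_{c,t}^{st}$, the constraint reduces to $\max_{\Phi_c^{dy}} I(\Phi_c^{dy}(X_t);Z_{c,t}^{st}\mid Y)$. Applying Prop.~\ref{prop1}~\ref{prop. 2} under its entropy hypothesis gives the strict inequality $I(Z_{c,t}^{dy};Z_{c,t}^{st}\mid Y) > I(Z_{s,t}^{dy};Z_{c,t}^{st}\mid Y)$. Because $X_t$ is a deterministic function of the latent factors through $f^x$ in Def.~\ref{definition:scm}, and $\Phi_c^{dy}$ acts as a size-$\kappa N$ mask on $q_\theta^{ext}(X_t)$ via Eq.~\eqref{equation:gumbel}, any admissible representation can be viewed as a selection over coordinates carrying information from either $Z_{c,t}^{dy}$ or $Z_{s,t}^{dy}$. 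Swapping a dynamic spurious coordinate for a dynamic causal one strictly raises the conditional MI by the inequality above, so every element of $\argmax_{\Phi_c^{dy}} I(\Phi_c^{dy}(X_t);\Phi_c^{st,\star}(X_t)\mid Y)$ encodes the content of $Z_{c,t}^{dy}$. For the second stage I would read off $Y \independent Z_{s,t}^{dy} \mid (Z_{c,t}^{st},Z_{c,t}^{dy},Z_t^{d})$ from the graph in Fig.~\ref{fig:Our_SCM} via d-separation, using the assignment $Y := f^y(Z_c^{st},Z_c^{dy},Z^d,\epsilon_y)$. This bounds $I(Y;\Phi_c^{dy}(X_t))$ above by $I(Y;Z_{c,t}^{dy}\mid Z_{c,t}^{st},Z_t^{d})$, with equality exactly when $\Phi_c^{dy}(X_t)$ is a sufficient statistic for $Z_{c,t}^{dy}$. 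Combined with the first stage, the outer maximization therefore pins $\Phi_c^{dy}$ down to encoding $Z_{c,t}^{dy}$, and together with the recovered $Z_{c,t}^{st}$ and the drift factors $Z_t^{d}$ the resulting predictor matches the characterization in Def.~\ref{definition:ocp}.

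The main obstacle will be ruling out degenerate constrained maximizers that smuggle in spurious information---for example a mask covering both a $Z_{c,t}^{dy}$ coordinate and a function of $Z_{s,t}^{dy}$ that happens to correlate with $Z_{c,t}^{st}$ through some unblocked path. To exclude these I would lean on two ingredients: the hard dimensional budget $\kappa N$ enforced by Eq.~\eqref{equation:gumbel}, which caps how many coordinates can be retained, and a strict-monotonicity argument on the stage-one objective that forces every retained coordinate to contribute strictly more conditional MI with $Z_{c,t}^{st}$ given $Y$ than the best unchosen alternative. Any remaining tie among equally informative coordinates is then broken by the outer $I(Y;\cdot)$ term, which prefers coordinates that are conditionally dependent on $Y$ once the genuine causal content of the representation has been accounted for.
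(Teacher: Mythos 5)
Your plan mirrors the paper's proof almost step for step: both treat the already-learned $\Phi_c^{st,\star}(X_t)$ as the anchor, both posit that the learned $\Phi_c^{dy}$ decomposes into a retained causal part and a retained spurious part, and both conclude via the same ``causal factors are mutually more similar than causal-to-spurious'' heuristic that trading spurious content for causal content raises the constrained objective. The one spot where you paper over a step the paper handles more explicitly is the middle of your Stage 1: you invoke Prop.~\ref{prop1}~\ref{prop. 2} directly, but that proposition compares the \emph{whole} $Z_{c,t}^{dy}$ against the \emph{whole} $Z_{s,t}^{dy}$, whereas a masked $\Phi_c^{dy}$ generally retains a mixture $(Z_{lc,t}^{dy},Z_{ls,t}^{dy})$; the paper's proof therefore redoes the entropy chain-rule decomposition at the partial-subset level and lands on the conditional inequality
$
I(Z_{rc,t}^{dy};Z_{c,t}^{st}\mid Z_{lc,t}^{dy},Y) > I(Z_{ls,t}^{dy};Z_{c,t}^{st}\mid Z_{lc,t}^{dy},Y),
$
with $Z_{lc,t}^{dy}$ added to the conditioning set. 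Your coordinate-swap intuition is aimed at exactly this, but to match the paper's rigor you would need to state (or assume, as the paper implicitly does) that Prop.~\ref{prop1}~\ref{prop. 2}'s entropy hypothesis survives conditioning on the already-retained block, rather than applying it coordinate-wise as if it held unconditionally. Your Stage 2 on the outer objective $I(Y;\Phi_c^{dy})$ and the d-separation bound is additional material not in the paper's proof of this lemma (the paper leaves the outer term implicit); it is sound and in the spirit of Thm.~\ref{theorem:ocp}, but it is extra rather than a replacement for anything the paper argues here.
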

\vspace{-11pt}
Overall, the objective in Eq.~\eqref{equation:cmi_1} brings the static causal representations of the same category across domain closer, while the objective in Eq.~\eqref{equation:cmi_2} aligns the static and dynamic causal representations of the same category within the domain.

\subsubsection{Learning Evolving Causal Mechanisms}
We introduce drift factors $Z^d$ to model the evolving causal mechanisms.
In order to approximate $p(z_t^d|z_{<t}^d,y_t)$, we use a network parameterized by a recurrent neural network $q_{\zeta}(z_t^d|z_{<t}^d,y_t)$ with a categorical distribution as the output.
The initial state is set as $\boldsymbol{z}_0^d=\boldsymbol{0}$.
Then minimizing the KL divergency between $q_{\zeta}$ and $p$ can be achieved by maximizing
\begin{equation*}
     \ \mathbb{E}_{q_{\zeta}}\log p(y_t| \boldsymbol{z}_t^{d}) 
    -\mathbb{D}_{KL}(q_{\zeta}(\boldsymbol{z}_t^{d}| \boldsymbol{z}_{<t}^d,y_t),p(\boldsymbol{z}_t^{d}|\boldsymbol{z}_{<t}^d)) \,,
\end{equation*}
where $p(\boldsymbol{z}_t^d|\boldsymbol{z}_{<t}^d)$ is presented as a learnable categorical distribution.
Together with the classification loss, the loss function for learning the evolving causal mechanisms is
\begin{equation}
\begin{aligned}\label{loss:mp}
    \mathcal{L}_{\text{mp}}=&-\sum_{t=1}^T \mathbb{E}_{q_\theta q_{\psi} q_{\zeta}} [\log p(y_t|\boldsymbol{z}_{c,t}^{dy},\boldsymbol{z}_{c,t}^{st},\boldsymbol{z}_t^d)] \\
    &+\sum_{t=1}^T \mathbb{D}_{KL}(q_{\zeta}(\boldsymbol{z}_t^{d}| \boldsymbol{z}_{<t}^d,y_t),p(\boldsymbol{z}_t^{d}|\boldsymbol{z}_{<t}^d))\,.
\end{aligned}
\end{equation}

\begin{table*}[tbp]
\caption{Comparison of accuracy (\%) between SYNC and other baseline methods.
``Wst'' and ``Avg'' denote worst‑case and mean performance across all test domains for a given dataset, respectively.
The best and second best results are marked in \textbf{bold} and \underline{underline}, respectively.
``Overall'' refers to the mean of the two metrics averaged across all datasets.
}
\vspace{-2pt}
\label{tab:main results}
\centering
\resizebox{1.0\textwidth}{!}{
\renewcommand{\arraystretch}{0.87}{
\begin{tabular}{lcccccccccccccccc}
\toprule[1.2pt]
\multirow{2}{*}[-1mm]{{Method}} &
 \multicolumn{2}{c}{Circle} & \multicolumn{2}{c}{Sine} & \multicolumn{2}{c}{RMNIST} & \multicolumn{2}{c}{Portraits} & \multicolumn{2}{c}{Caltran} & \multicolumn{2}{c}{PowerSupply} & \multicolumn{2}{c}{ONP} &
 \multicolumn{2}{c}{Overall}\\
\cmidrule(lr){2-3}  \cmidrule(lr){4-5}  \cmidrule(lr){6-7}  \cmidrule(lr){8-9}
\cmidrule(lr){10-11} \cmidrule(lr){12-13}
\cmidrule(lr){14-15}
& Wst & Avg & Wst &  Avg & Wst & Avg & Wst & Avg & Wst & Avg & Wst & Avg & Wst & Avg & Wst & Avg  \\
\midrule
 ERM    & 41.7 & 49.9 & 49.5 & 63.0 & 37.8 & 43.6 & 75.5 & 87.8 & 29.9 & 66.3 & 64.4 & 71.0 & 64.2 & 65.9 & 51.9 & 63.9   \\  
Mixup  & 41.7 & 48.4 & 49.3 & 62.9 & 38.3 & 44.9 & 75.5 & 87.8 & 52.3 & 66.0 & 64.3 & 70.8 & 64.1 & 66.0 & 55.1 & 63.8  \\ 
MMD    & 41.7 & 50.7 & 47.6 & 55.8 & 39.0 & 44.8 & 74.0 & 87.3 & 29.4 & 57.1 & 64.8 & 70.9 & 47.1 & 51.3 & 49.1 & 59.7  \\ 
MLDG   & 41.7 & 50.8 & 49.5 & 63.2 & 37.5 & 43.1 & 76.4 & 88.5 & 51.7 & 66.2 & 64.6 & 70.8 & 63.9 & 65.9 & 55.0 & 64.1  \\ 
RSC     & 41.7 & 48.0 & 49.5 & 61.5 & 35.8 & 41.7 & 75.2 & 87.3 & 51.9 & 67.0 & 64.4 & 70.9 & 63.3 & 64.7 & 54.5 & 63.0   \\ 
MTL     & 42.2 & 51.2 & 46.9 & 62.9 & 36.1 & 41.7 & 78.2 & 89.0 & 52.6 & 68.2 & 64.2 & 70.7 & 63.4 & 65.6 & 54.8 & 64.2  \\ 
FISH    & 41.7 & 48.8 & 49.5 & 62.3 & 37.6 & 44.2 & 78.6 & 88.8 & 57.5 & 68.6 & 64.2 & 70.8 & 63.2 & 65.9 &  56.0 & 64.2 \\ 
CORAL   & 41.7 & 53.9 & 46.3 & 51.6 & 38.5 & 44.5 & 74.6 & 87.4 & 50.9 & 65.7 & 64.6 & 71.0 & 63.8 & 65.8 & 54.3 & 62.8  \\ 
AndMask & 41.7 & 47.9 & 42.9 & 69.3 & 37.8 & 42.8 & 62.0 & 70.3 & 29.9 & 56.9 & 64.0 & 70.7 & 51.2 & 54.6 & 47.1 & 58.9  \\ 
DIVA    & \underline{56.7} & 67.9 & 38.6 & 52.9 & 36.9 & 42.7 & 76.2 & 88.2 & 53.8 & 69.2 & 63.9 & 70.7 & \underline{64.5} & \underline{66.0} & 55.8 & 65.4  \\ 
\midrule
 IRM     & 41.7 & 51.3 & 49.5 & 63.2 & 34.1 & 39.0 & 74.2 & 85.4 & 43.1 & 60.6 & 64.1 & 70.8 & 61.9 & 64.5 & 52.7 & 62.1  \\ 
IIB  & 42.0 & 53.9 & 47.6 & 61.3 & 38.5 & 45.0 & 78.1 & 89.7 & 54.8 & 69.3 & 64.5 & 70.8 & 63.2 & 66.3 & 55.2 & 62.1  \\  
iDAG & 42.0 & 49.0 & 47.6 & 57.1 & 39.8 & 44.1 & 79.8 & 88.6 & 53.5 & 69.7 & \underline{66.1} & 71.2 & 63.8 & \textbf{66.4} & 56.1 & 63.7  \\ 
\midrule
GI & 42.0 & 54.4 & \underline{49.8} & 65.2 & 39.2 & 44.6 & 77.8 & 88.1 & 53.1 & 70.7 & 65.1 & 71.0 & 63.5 & 65.7 & 55.8 & 65.7  \\ 
LSSAE & 42.0 & 73.8 & 49.0 & 71.4 & 40.3 & 46.4 & 77.7 & 89.1 & 54.3 & 70.3 & 65.4 & 71.1 & \underline{64.5} & \underline{66.0} & 56.2 & 69.7  \\ 
DDA & 42.0 & 51.2 & 43.0 & 66.6 & 38.0 & 45.1 & 76.0 & 87.9 & 31.0 & 66.1 & 64.4 & 70.9 & 63.5 & 65.3 & 51.2 & 64.7  \\ 
DRAIN & 45.0 & 50.7 & 43.0 & 71.3 & 37.2 & 43.8 & 77.7 & 89.4 & 55.7 & 69.0 & 64.9 & 71.0 & 59.8 & 61.1 & 54.8 & 65.2  \\ 
MMD-LSAE & 54.0 & 79.5 & 43.0 & 71.4 & \underline{42.9} & 49.2 & \underline{80.9} & \underline{90.4} & \underline{56.9} & 69.6 & 65.2 & \underline{71.4} & 61.8 & \textbf{66.4} & \underline{58.1} & 70.9  \\
CTOT  & 54.0  & 75.2  & 43.2  & 67.2  & 31.7  & 44.8  & 79.2  & 86.4  & 48.0  & 66.9  & 63.6  & 71.1  & 61.1  & 65.6  & 54.4 & 68.2  \\ 
    SDE-EDG & 45.0 & \underline{81.5} & 42.3 & \underline{72.2} & 39.7 & \textbf{52.6} & 78.6 & 89.6 & 55.1 & \underline{71.3} & 64.1 & 70.8 & 63.1 & 65.6 & 55.4 & \underline{71.9} \\ 
\midrule
\rowcolor{blue!12.5} SYNC (Ours) & \textbf{67.0} & \textbf{84.7} & \textbf{60.0} & \textbf{76.1} & \textbf{45.8} & \underline{50.8} &  
\textbf{81.0} & \textbf{90.8}  & \textbf{58.8} & \textbf{72.2} &  \textbf{66.9} & \textbf{71.7} 
& \textbf{64.6} & 65.6 
& \textbf{63.4} & \textbf{73.1}  \\
 
\bottomrule[1.2pt]
\end{tabular}}
}
\end{table*}

\subsubsection{Overall Optimization Objective}
Let $\mathcal{L}_{\text{evolve}}=\mathcal{L}_{\text{fp}}+\mathcal{L}_{\text{mp}}$, where $\mathcal{L}_{\text{evolve}}$ represents the loss function for capturing the evolving pattern.
Let $\mathcal{L}_{\text{causal}} = \mathcal{L}_{\text{stc}} + \mathcal{L}_{\text{dyc}}$, where $\mathcal{L}_{\text{stc}}=-\sum_{t=2}^{T} I_{st}(t)$ and $\mathcal{L}_{\text{dyc}}=- 
\sum_{t=1}^{T} I_{dy}(t)$, $I_{st}(t)$ and $I_{dy}(t)$ represent the term at time $t$ as defined in Eq.~\eqref{equation:cmi_1_approx} and Eq.~\eqref{equation:cmi_2_approx}, respectively.
The objective function of SYNC is
\begin{equation}
\label{equation:loss_total}    \mathcal{L}_{\text{SYNC}}=\mathcal{L}_{\text{evolve}} + \alpha_1 \mathcal{L}_{\text{MI}} + \alpha_2 \mathcal{L}_{\text{causal}}\,,
\end{equation}
where $\alpha_1$ and $\alpha_2$ are the trade-off hyperparameters.

\subsection{Theoretical Analysis}
In this section, we aim to provide a theoretical insight on our proposed SYNC.
More details on the theoretical proof and discussion can be found in Appendix~\ref{appx:theorem}.
First, we show that optimizing $\mathcal{L}_{\text{evolve}}$ approximates the joint distribution $p(\boldsymbol{x}_{1:T},y_{1:T})$ and captures the underlying evolving pattern.
\begin{theorem}\label{theorem:elbo}
    Assume that the underlying data generation process at each time step is characterized by SCM $\mathcal{M}_T$. 
    Then, by optimizing $\mathcal{L}_{\text{evolve}}$,  the model can learn the data distribution $\ p(\boldsymbol{x}_{1:T},y_{1:T})$ on training domains.
\end{theorem}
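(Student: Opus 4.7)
The plan is to establish that $-\mathcal{L}_{\text{evolve}}$ is a valid evidence lower bound (ELBO) on the log-likelihood $\log p(\boldsymbol{x}_{1:T}, y_{1:T})$, so that minimizing $\mathcal{L}_{\text{evolve}}$ tightens this bound and drives the model distribution toward the true data distribution. The natural entry point is to factor the joint log-likelihood by the chain rule and introduce the latent variables prescribed by $\mathcal{M}_T$, namely $\boldsymbol{z}_{1:T}^{st}$, $\boldsymbol{z}_{1:T}^{dy}$, and $\boldsymbol{z}_{1:T}^d$. Using the SCM's conditional independence structure (static factors generated from $G$, dynamic and drift factors generated from $L$ with temporal dependencies), I would write
\begin{equation*}
p(\boldsymbol{x}_{1:T}, y_{1:T}) = \int p(\boldsymbol{x}_{1:T} \mid \boldsymbol{z}_{1:T}^{st}, \boldsymbol{z}_{1:T}^{dy})\, p(y_{1:T} \mid \boldsymbol{z}_{c,1:T}^{st}, \boldsymbol{z}_{c,1:T}^{dy}, \boldsymbol{z}_{1:T}^d)\, p(\boldsymbol{z}_{1:T}^{st})\, p(\boldsymbol{z}_{1:T}^{dy})\, p(\boldsymbol{z}_{1:T}^d)\, d\boldsymbol{z},
\end{equation*}
where the temporal priors factor autoregressively, i.e., $p(\boldsymbol{z}_{1:T}^{dy}) = \prod_t p(\boldsymbol{z}_t^{dy} \mid \boldsymbol{z}_{<t}^{dy})$ and similarly for $\boldsymbol{z}_{1:T}^d$, while $p(\boldsymbol{z}_{1:T}^{st}) = \prod_t p(\boldsymbol{z}_t^{st})$ since static factors do not depend on time.

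Next, I would introduce the variational posteriors $q_\psi(\boldsymbol{z}_t^{st} \mid \boldsymbol{x}_t)$, $q_\theta(\boldsymbol{z}_t^{dy} \mid \boldsymbol{z}_{<t}^{dy}, \boldsymbol{x}_t)$, and $q_\zeta(\boldsymbol{z}_t^d \mid \boldsymbol{z}_{<t}^d, y_t)$, multiplying and dividing inside the integral and applying Jensen's inequality to obtain
\begin{equation*}
\log p(\boldsymbol{x}_{1:T}, y_{1:T}) \geq \sum_{t=1}^T \mathbb{E}_{q_\psi q_\theta}[\log p(\boldsymbol{x}_t \mid \boldsymbol{z}_t^{st}, \boldsymbol{z}_t^{dy})] + \sum_{t=1}^T \mathbb{E}_{q_\psi q_\theta q_\zeta}[\log p(y_t \mid \boldsymbol{z}_{c,t}^{st}, \boldsymbol{z}_{c,t}^{dy}, \boldsymbol{z}_t^d)] - \Delta,
\end{equation*}
where $\Delta$ collects the three KL divergences between each posterior and its corresponding prior. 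I would then match these terms directly with the components of $\mathcal{L}_{\text{fp}}$ in Eq.~\eqref{loss:fp} and $\mathcal{L}_{\text{mp}}$ in Eq.~\eqref{loss:mp}, concluding that $\log p(\boldsymbol{x}_{1:T}, y_{1:T}) \geq -\mathcal{L}_{\text{evolve}}$. Consequently, minimizing $\mathcal{L}_{\text{evolve}}$ maximizes a lower bound on the joint log-likelihood, so at the optimum the KL gap between the variational joint and the true joint shrinks, yielding a faithful approximation of $p(\boldsymbol{x}_{1:T}, y_{1:T})$ together with the underlying temporal evolving pattern encoded in the autoregressive priors.

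The main obstacle I anticipate is carefully justifying the decomposition of the joint likelihood in a way that cleanly separates the static and dynamic branches while respecting the sequential conditioning in $q_\theta$ and $q_\zeta$. In particular, I must be careful that the ELBO derivation goes through term by term for each $t$ (rather than only globally), which requires invoking the Markov structure implied by $\mathcal{M}_T$ to collapse $p(\boldsymbol{z}_t^{dy} \mid \boldsymbol{z}_{<t}^{dy}, \boldsymbol{z}_{<t}^{st})$ into $p(\boldsymbol{z}_t^{dy} \mid \boldsymbol{z}_{<t}^{dy})$, and analogously for $\boldsymbol{z}_t^d$. Once this factorization is justified by the d-separation properties of the graph in Fig.~\ref{fig:Our_SCM}, the remaining steps reduce to a standard sequential VAE derivation, and the identification with $\mathcal{L}_{\text{fp}} + \mathcal{L}_{\text{mp}}$ follows immediately.
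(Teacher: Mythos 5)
Your proposal is correct and takes essentially the same approach as the paper: both derive an ELBO on $\log p(\boldsymbol{x}_{1:T}, y_{1:T})$ using the three variational posteriors $q_\psi, q_\theta, q_\zeta$ and identify $-\mathcal{L}_{\text{evolve}}$ with that bound, so that optimizing the loss minimizes the KL divergence between the variational joint posterior and the true one. The only difference is direction of travel — you start from the log-likelihood and apply Jensen, whereas the paper expands $-\mathcal{L}_{\text{evolve}}$ algebraically and recognizes it as the ELBO — and your flagged obstacle about using the SCM's Markov structure to justify the autoregressive prior factorization is exactly the step the paper invokes (implicitly, via $\mathcal{M}_T$ and $\mathcal{M}_{evo}$) when it sets $q(\boldsymbol{z}_{1:T}^{st},\boldsymbol{z}_{1:T}^{dy},\boldsymbol{z}_{1:T}^{d}|\boldsymbol{x}_{1:T},y_{1:T}) := \prod_t q_\psi q_\theta q_\zeta$.
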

Thm.~\ref{theorem:elbo} demonstrates that minimizing $\mathcal{L}_{\text{evolve}}$ can effectively capture the underlying evolving pattern in data distribution.
Now, we provide Thm.~\ref{theorem:ocp} as follows to state that minimizing $\mathcal{L}_{\text{SYNC}}$ leads to the optimal causal predictor at each time domain $\mathcal{D}_t$, thereby addressing spurious correlations and improving model generalization.
\begin{theorem}\label{theorem:ocp}
    Let $\Phi_c^{st}(X_t),\Phi_{c}^{dy}(X_t)$ and $Z_t^d$ denote the representations and drift factors at the time domain $\mathcal{D}_t$, obtained by training the network through the optimization of $\mathcal{L}_{\text{SYNC}}$.
    Then the predictor constructed upon these components is the optimal causal predictor as defined in Def.~\ref{appx:ocp}.
\end{theorem}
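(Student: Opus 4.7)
The plan is to decompose the argument along the three summands of $\mathcal{L}_{\text{SYNC}}$ and show that each contributes one clause of Def.~\ref{definition:ocp}. First I would invoke Thm.~\ref{theorem:elbo}: minimizing $\mathcal{L}_{\text{evolve}}$ aligns the posteriors $q_{\psi}, q_{\theta}, q_{\zeta}$ with the ones induced by the time-aware SCM $\mathcal{M}_T$, so the VAE branches faithfully recover $p(\boldsymbol{x}_{1:T}, y_{1:T})$. In particular, $q_{\zeta}$ together with the KL term against the learnable prior $p(\boldsymbol{z}_t^d\mid \boldsymbol{z}_{<t}^d)$ yields drift representations that play the role of the ``given'' $Z_t^d$ in Def.~\ref{definition:ocp}, and the reconstruction term guarantees that $\boldsymbol{z}_t^{st}$ and $\boldsymbol{z}_t^{dy}$ jointly encode the static and dynamic sufficient statistics of $X_t$.

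Next I would handle the static-vs-dynamic disentanglement and the causal/spurious split. The MI loss $\mathcal{L}_{\text{MI}}$ drives $I(\boldsymbol{z}_t^{st};\boldsymbol{z}_t^{dy})\!\to\!0$; combined with the asymmetric prior structure (stationary $\mathcal{N}(\boldsymbol{0},\boldsymbol{I})$ for the static branch versus an LSTM-driven sequential prior for the dynamic branch), this breaks the symmetry and forces $\boldsymbol{z}_t^{st}$ to support $Z^{st}$ and $\boldsymbol{z}_t^{dy}$ to support $Z^{dy}$. I would then peel off the causal subcomponents by invoking Lem.~\ref{lemma1} and Lem.~\ref{lemma2} in order. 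The contrastive bound in Eq.~\eqref{equation:cmi_1_approx} together with the classifier component of $\mathcal{L}_{\text{mp}}$ supplies both premises of Lem.~\ref{lemma1}, so $\Phi_c^{st}$ extracts $Z_{c,t}^{st}$ from $\boldsymbol{z}_t^{st}$. Treating $\Phi_c^{st,\star}$ as now obtained, Eq.~\eqref{equation:cmi_2_approx} estimates $I(\Phi_c^{dy}(X_t);\Phi_c^{st,\star}(X_t)\mid Y)$ with those static causal features as anchors, so Lem.~\ref{lemma2} gives that $\Phi_c^{dy}$ extracts $Z_{c,t}^{dy}$. The sequential ordering is essential, since Lem.~\ref{lemma2} consumes the output of Lem.~\ref{lemma1}.

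Finally I would verify that the triple $(\Phi_c^{st}(X_t),\Phi_c^{dy}(X_t),Z_t^d)$ meets both clauses of Def.~\ref{definition:ocp}. The conditional independence $Y\independent [Z_{s,t}^{st},Z_{s,t}^{dy}]\mid [Z_{c,t},Z_t^d]$ follows by d-separation on the graph of Fig.~\ref{fig:Our_SCM} once the recovered tuple is identified with $(Z_{c,t}^{st},Z_{c,t}^{dy},Z_t^d)$. The argmax clause $Z_{c,t}\in\argmax I(Y;Z_{c,t},Z_t^d)$ follows from the MI-maximization terms inside Lemmas~\ref{lemma1}--\ref{lemma2} together with the cross-entropy term in $\mathcal{L}_{\text{mp}}$, which jointly maximize $I(Y;\Phi_c^{st}(X_t),\Phi_c^{dy}(X_t),Z_t^d)$; a short chain-rule computation then gives the required maximality.

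The step I expect to be hardest is the one linking MI minimization to an actual coordinate-wise split of static versus dynamic information: $\mathcal{L}_{\text{MI}}=0$ only enforces information-theoretic exclusivity, not identifiability of the two groups. I would address this by an identifiability-style argument that exploits the explicit prior/posterior factorization of the sequential VAE, showing that any mixing of the two branches would strictly inflate either the KL against the stationary prior on $\boldsymbol{z}_t^{st}$ or the KL against the LSTM prior on $\boldsymbol{z}_t^{dy}$, and hence cannot be optimal for $\mathcal{L}_{\text{evolve}}+\alpha_1\mathcal{L}_{\text{MI}}$. A secondary obstacle is that Lemmas~\ref{lemma1}--\ref{lemma2} are stated in population form while $\mathcal{L}_{\text{stc}}, \mathcal{L}_{\text{dyc}}$ are contrastive surrogates; I would bridge this gap by citing the standard InfoNCE-as-lower-bound result so that jointly driving the surrogates to their optima realises the MI objectives used inside the lemmas.
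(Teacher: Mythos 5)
Your proposal follows essentially the same route as the paper's proof: apply Lemma~1 and then Lemma~2 sequentially to recover the static and dynamic causal factors, rely on $\mathcal{L}_{\text{evolve}}$ (via Theorem~1) to obtain the drift factors, invoke d-separation on $\mathcal{M}_T$ for the conditional-independence clause of Definition~2, and use the cross-entropy term in $\mathcal{L}_{\text{mp}}$ to satisfy the argmax clause. The two extra steps you flag as hard --- an identifiability argument turning $I(\boldsymbol{z}_t^{st};\boldsymbol{z}_t^{dy})\!\to\!0$ into an actual coordinate split, and an InfoNCE-lower-bound bridge from the contrastive surrogates $\mathcal{L}_{\text{stc}},\mathcal{L}_{\text{dyc}}$ to the population MI objectives used inside the lemmas --- are both left implicit in the paper's own proof, which simply asserts that $\Phi_c^{st}$ and $\Phi_c^{dy}$ ``gradually approach'' the ground-truth causal factors.
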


\begin{figure*}[tbp]
    \centering
    \begin{subfigure}[b]{0.235\textwidth}
        \centering
        \includegraphics[width=\textwidth]{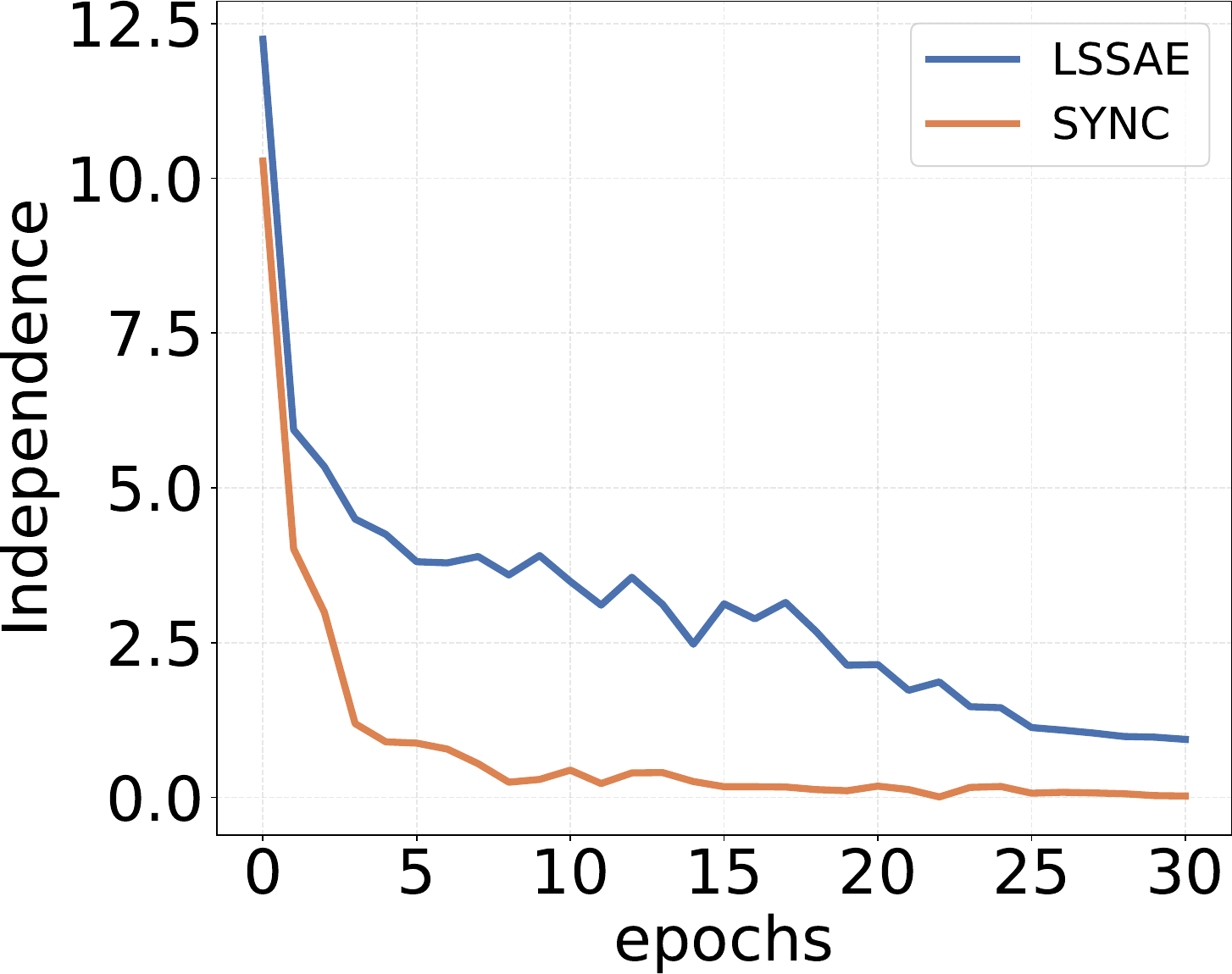}
        \caption{Circle}
        \label{fig:mi_circle}
    \end{subfigure}
    \begin{subfigure}[b]{0.24\textwidth}
        \centering
        \includegraphics[width=\textwidth]{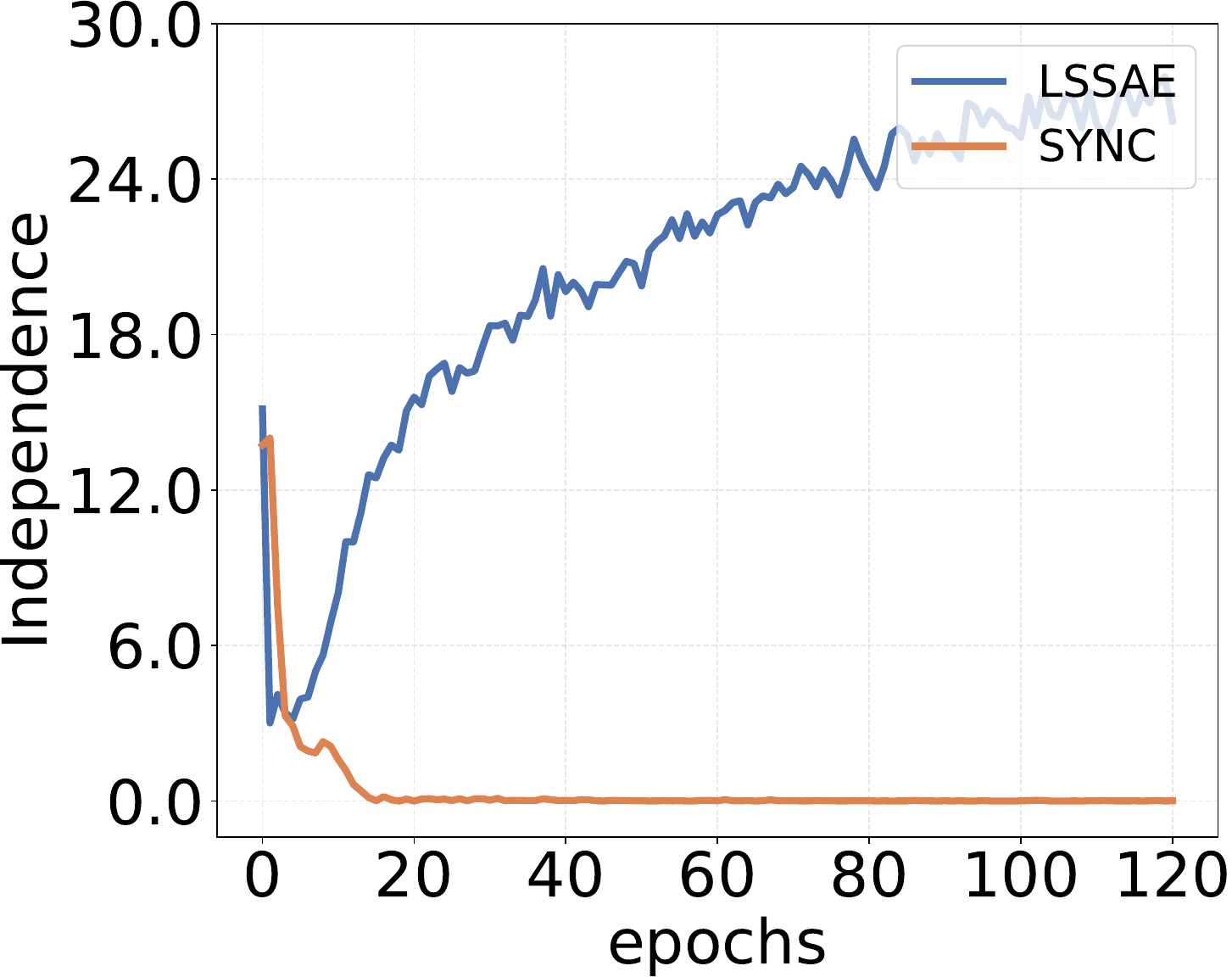}
        \caption{RMNIST}
        \label{fig:mi_rm}
    \end{subfigure}
    \begin{subfigure}[b]{0.236\textwidth}
        \centering
        \includegraphics[width=\textwidth]{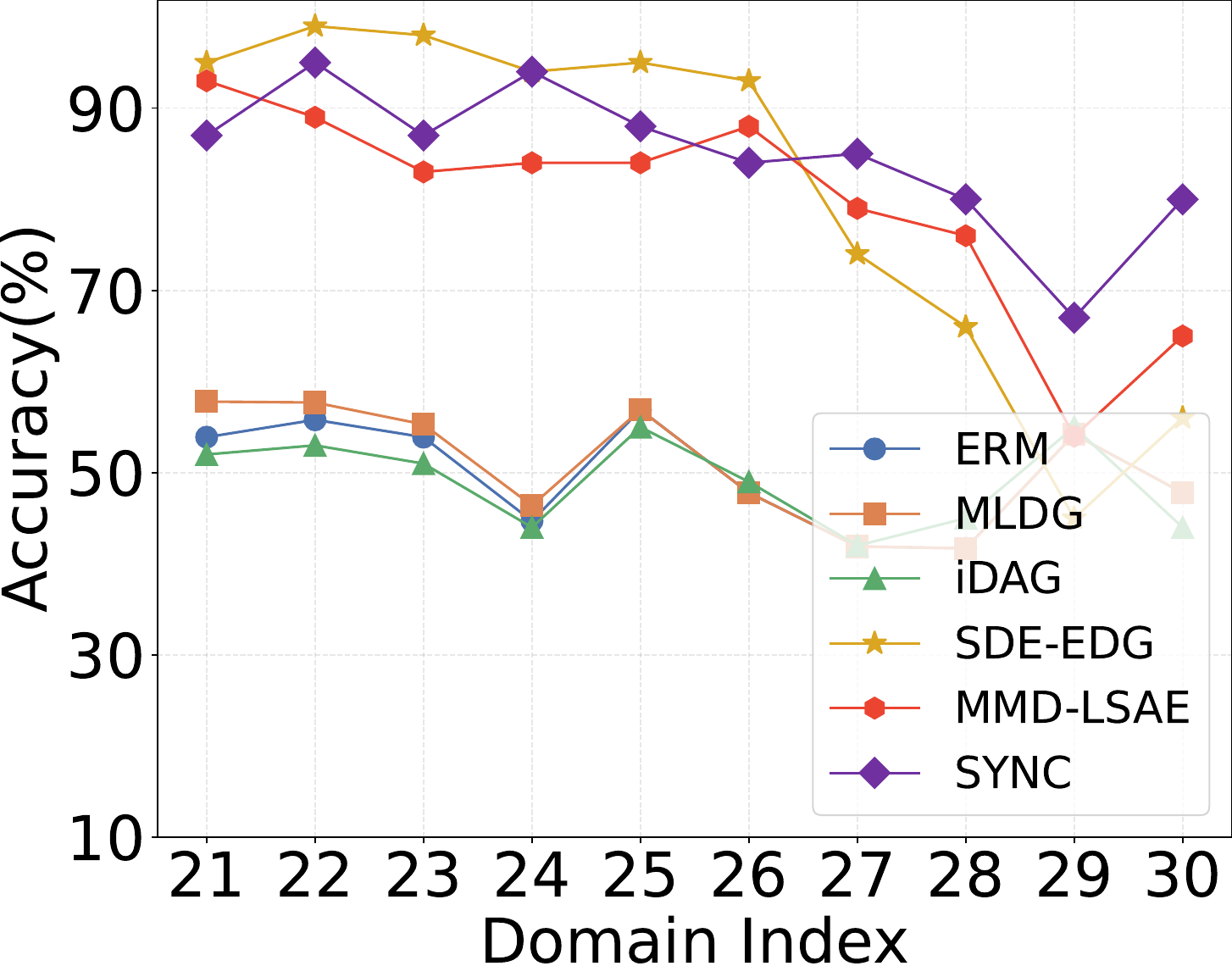}
        \caption{Circle}
        \label{fig:pattern_circle}
    \end{subfigure}
    \begin{subfigure}[b]{0.236\textwidth}
        \centering
        \includegraphics[width=\textwidth]{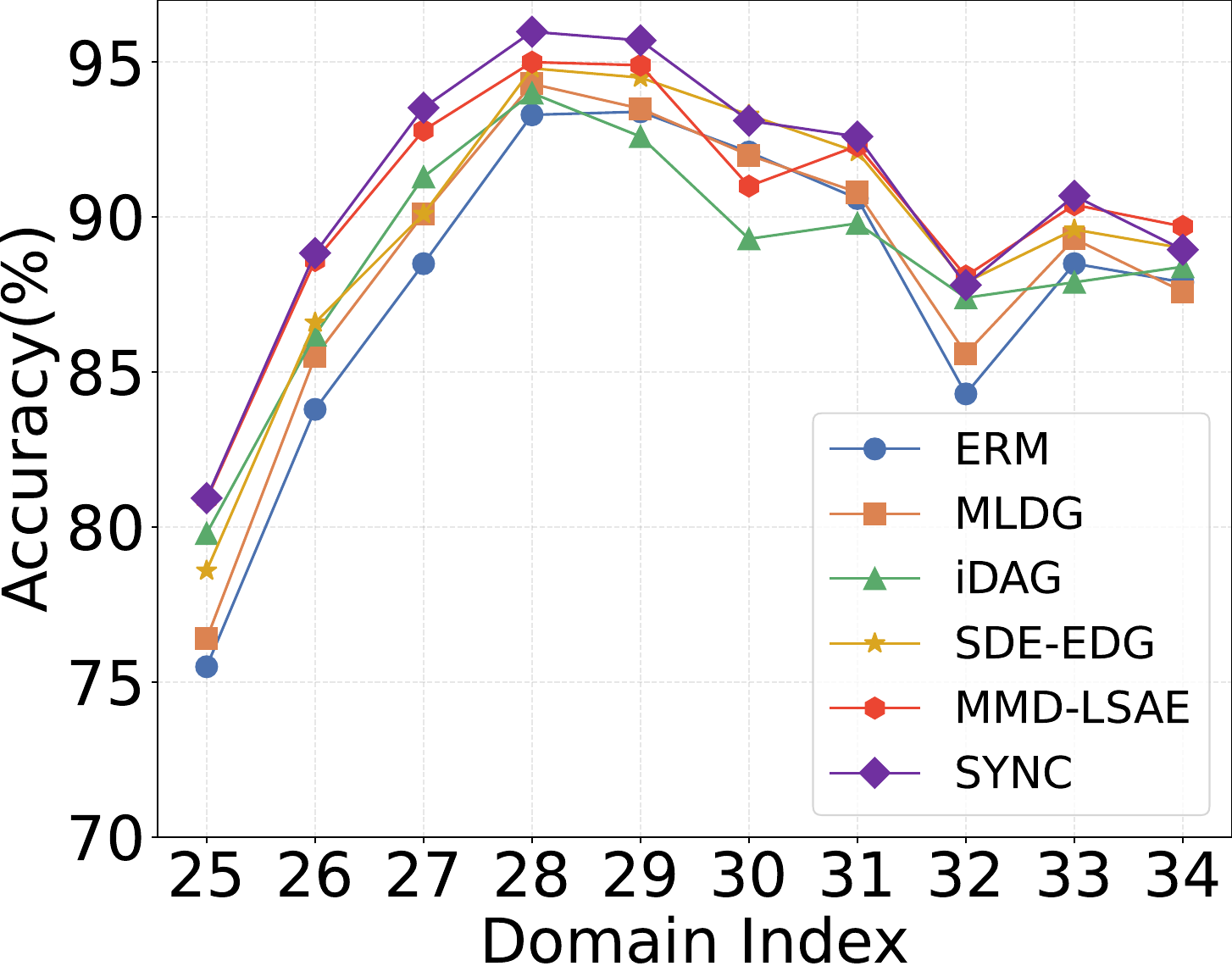}
        \caption{Portraits}
        \label{fig:pattern_rm}
    \end{subfigure}
    \caption{(a) and (b) show the independence degree of LSSAE and SYNC, respectively, during the training process on the Circle and RMNIST.
    (c) and (d) present the test accuracy trajectory of SYNC and various baselines on Circle and Portraits.}

    \vspace{-10pt}
\end{figure*}

\section{Experiments}
\subsection{Experimental setup}
\textbf{Datasets.}
We evaluate SYNC on several commonly used benchmarks, including two synthetic datasets (Circle, Sine) and five real-world datasets (RMNIST, Portraits, Caltran, PowerSupply, ONP). \textbf{Circle}~\cite{pesaranghader2016fast} contains evolving $30$ domains, where the instance are sampled from $30$ 2D Gaussian distributions.
\textbf{Sine}~\cite{pesaranghader2016fast} includes $24$ evolving domains, which is achieved by extending and rearranging the original dataset.
\textbf{RMNIST}~\cite{ghifary2015domain} consists of MNIST digits with varying rotations. Following~\cite{LSSAE}, we generate 19 evolving domains by sequentially applying rotations from $0^\circ$ to $180^\circ$ in $15^\circ$ increments.
\textbf{Portraits}~\cite{yao2022wild} is a real-world dataset of American high school senior photos spanning 108 years across 26 states. We split it into 34 evolving domains using fixed temporal intervals.
\textbf{Caltran}~\cite{hoffman2014continuous} is a surveillance dataset captured by fixed traffic cameras deployed at intersections and is split into $34$ domains by time to predict the type of scene. 
\textbf{PowerSupply}~\cite{dau2019ucr} is created by an Italian electricity company and contains $30$ evolving domains.
\textbf{ONP}~\cite{fernandes2015proactive} is collected from the Mashable website within two years and is divided into $24$ domains according to month.
All domains are split into source domains, intermediate domains and target domains according to the ratio of $\{1/2 : 1/6 : 1/3\}$. 
The intermediate domains are utilized as validation set for model selection.

\textbf{Baselines.}
We select three categories of related methods for comparison: 
\textbf{(i)} \textbf{Non-Causal DG}: ERM~\cite{ERM}, Mixup~\cite{mixup}, MMD~\cite{MMD}, MLDG~\cite{MLDG}, RSC~\cite{RSC}, MTL~\cite{MTL}, FISH~\cite{Fish}, CORAL~\cite{CORAL}, AndMask~\cite{AndMask}, DIVA~\cite{DIVA}. 
\textbf{(ii)} \textbf{Causal DG}: IRM~\cite{IRM}, IIB~\cite{IIB}, iDAG~\cite{iDAG}.
\textbf{(iii)} \textbf{EDG}: GI~\cite{GI}, LSSAE~\cite{LSSAE}, DDA~\cite{DDA}, DRAIN~\cite{DRAIN}, CTOT~\cite{CTOT}, SDE-EDG~\cite{SDE-EDG}, MMD-LSAE~\cite{mmd-lsae}. 
We keep the neural network architecture of encoding and classification part constant across all baselines used in different benchmarks for a fair comparison.

\textbf{Implementation.}
All experiments in this work are performed on a single NVIDIA GeForce RTX 4090 GPU with 24GB memory, using the PyTorch packages, and are based on DomainBed~\cite{domainbed}. 
Please refer Appendix~\ref{appx:training_details} for additional training details and Appendix~\ref{appx:network_details} for network architecture details.

\textbf{Evaluation Metrics.}
In this work, we report the generalization performance on $K$ target domains in the future, including the worst accuracy performance ``Wst'' ($\min_{k \in \{1, 2, \cdots, K\}} \text{Accuracy}(\mathcal{D}^{T+k})$) and the average accuracy performance ``Avg'' ($\frac{1}{K}\sum_{k=1}^K \text{Accuracy}(\mathcal{D}^{T+k})$).

\begin{figure*}[t]
    \centering
    \begin{subfigure}[b]{0.24\textwidth}
        \centering
        \includegraphics[width=\textwidth]{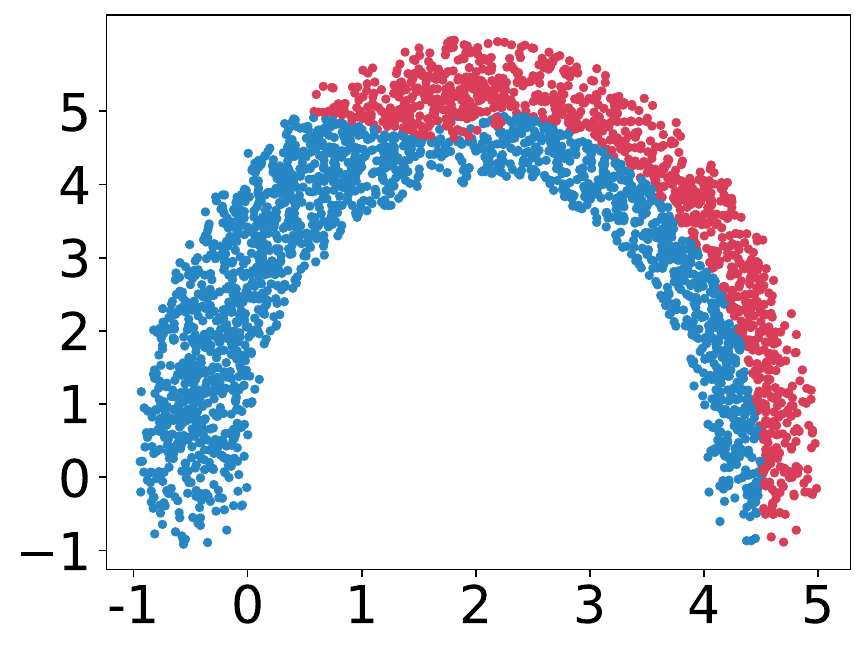}
        \caption{ERM}
        \label{}
    \end{subfigure}
    \begin{subfigure}[b]{0.24\textwidth}
        \centering
        \includegraphics[width=\textwidth]{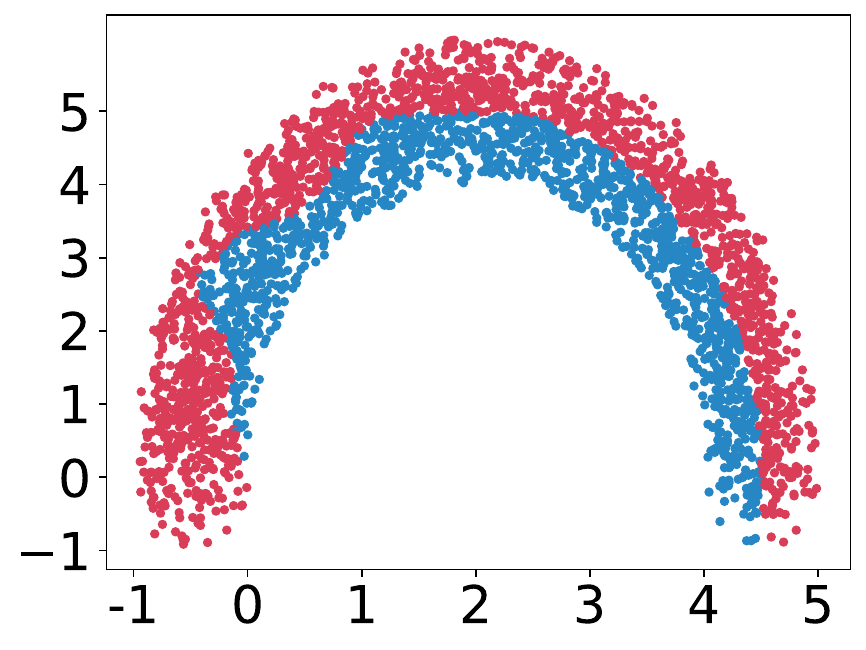}
        \caption{MMD-LSAE}
        \label{}
    \end{subfigure}
    \begin{subfigure}[b]{0.24\textwidth}
        \centering
        \includegraphics[width=\textwidth]{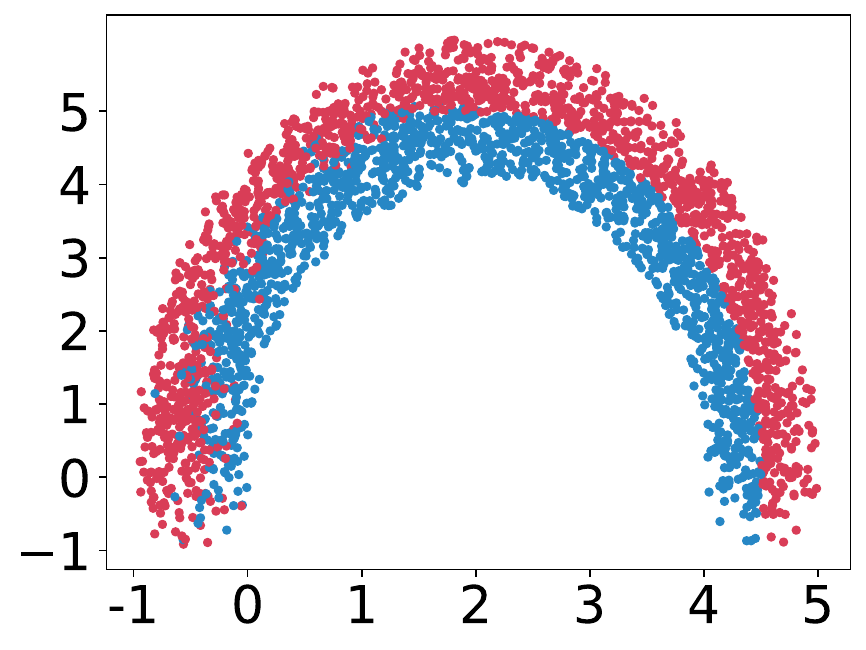}
        \caption{SYNC}
        \label{}
    \end{subfigure}
    \begin{subfigure}[b]{0.24\textwidth}
        \centering
        \includegraphics[width=\textwidth]{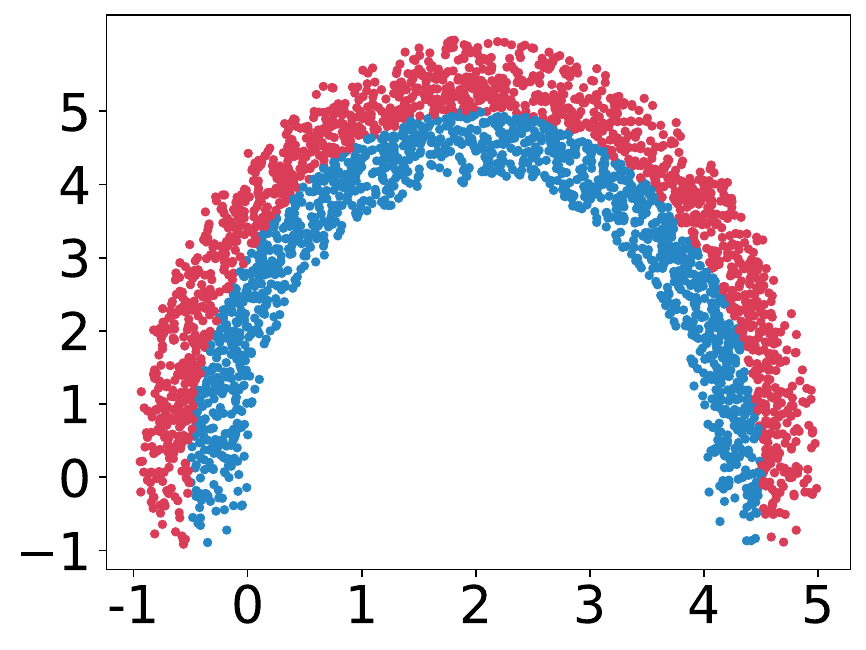}
        \caption{Ground truth}
        \label{}
    \end{subfigure}
    \caption{Visualization of decision boundary of the Circle dataset, where the positive and negative classes are colored in red and blue. (a)-(c) show the decision boundaries learned by ERM, MMD-LSAE and SYNC. (d) presents the ground truth.}
\label{fig:visualization}
\end{figure*}

\subsection{Quantitative results}
The results of our proposed SYNC, along with various baseline methods, are presented in Table~\ref{tab:main results}.
It can be observed that the average performance of EDG methods generally exceeds that of traditional DG methods, underscoring the importance of modeling and leveraging evolving patterns in dynamic environments.
In terms of worst-case performance, EDG methods do not exhibit a significant improvement over DG methods, in contrast to their advantage in average performance. 
This may be attributed to the fact that, if EDG methods fail to capture robust evolutionary patterns, their generalization ability may deteriorate over longer temporal spans. 
In contrast, DG methods focus on learning domain-invariant representations, which can serve as a performance lower bound and offer a more stable guarantee for generalization.
These results demonstrate the importance of leveraging stable representations to enhance model generalization in dynamic environments.

SYNC consistently outperforms other baselines over all benchmarks, achieving an accuracy of $63.4\%$ in the worst-case scenario and $73.1\%$ in terms of average performance.
Specifically, compared to existing causal DG methods, SYNC achieves significantly better results on both metrics, with improvements of at least $7.6\%$ in worst-case performance and $7.7\%$ in average performance. This highlights the importance of incorporating dynamic causal information and modeling causal mechanism shifts.
Moreover, when compared with EDG methods, SYNC outperforms the state-of-the-art approach by $5.3\%$ and $1.2\%$ in the two metrics, respectively. 
These results demonstrates that, by learning the causal model, SYNC achieves superior generalization and captures more robust evolving patterns.
Lastly, it can be observed that VAE-based methods, such as LSSAE, achieve better performance, suggesting that generative approaches are effective in capturing more robust evolving patterns.
Building on this, SYNC further learns both static and dynamic causal representations, which enhances model generalization and advances the modeling of evolving patterns.

\subsection{Analytical Experiments}
\paragraph{Ablation study}
We conduct an ablation study on RMNIST to evaluate the effectiveness of various components in our method, and results are presented in Table~\ref{tab:ablation}.
Variant A serves as the base method trained solely with evolving pattern loss $\mathcal{L}_{\text{evolve}}$.
Variant B builds upon the base method by additionally training with MI loss $\mathcal{L}_{\text{MI}}$.
It can be observed that Variant B achieves better performance than Variant A, suggesting that better separation enables more effective utilization of features from each part.
Variants C and D build upon Variant B by incorporating additional training with static causal loss $\mathcal{L}_{\text{stc}}$ and dynamic causal loss $\mathcal{L}_{\text{dyc}}$, respectively.
Variant C and D achieve higher accuracy than Variant B, demonstrating that learning causal representations enhances model generalization.
Notably, it is clear that Variant C achieves a greater improvement in worst-case performance compared to Variant D, indicating that static causal factors can ensure stable generalization under continuous distribution shifts. 
However, focusing solely on them ignores evolving pattern, limiting further generalization gains. 
Learning dynamic causal factors captures features related to task evolving over time, enabling to generalize better to the current distribution. 
Variant D outperforms Variant C in average performance and provides evidence for this claim.
SYNC jointly learns static and dynamic causal representations and achieves the best performance, highlighting their significant contributions to overall effectiveness. This demonstrates that static and dynamic causal factors are complementary and play a crucial role in enhancing model generalization.

\begin{table}
\setlength\tabcolsep{4pt}
\caption{Ablation study of SYNC on RMNIST dataset.}
\label{tab:ablation}
\scriptsize
\centering
\resizebox{0.48\textwidth}{!}{
\renewcommand{\arraystretch}{1.15}{
\begin{tabular}{cccccccc}
\toprule[0.7pt]
\textbf{Ablation} &
Evo. &  Disent. & $Z_c^{st}$ & $Z_c^{dy}$ & Wst & Avg & Overall \\
\midrule
Variant A & $\checkmark$ &  -  &  -  & - & 40.5 & 44.1 & 42.3   \\
Variant B & $\checkmark$ &  $\checkmark$  &  -  & - & 41.9 & 45.7 & 43.8   \\
Variant C & $\checkmark$ &  $\checkmark$  &  $\checkmark$  & - & 44.1 & 48.7 & 46.4   \\
Variant D & $\checkmark$ &  $\checkmark$  &  -  & $\checkmark$ & 42.9 & 49.2 & 46.1    \\
SYNC & $\checkmark$ &  $\checkmark$  &  $\checkmark$  & $\checkmark$ & \bf 45.8 & \bf 50.8 & \bf 48.3  \\

\bottomrule[0.7pt]
\end{tabular}}
}

\vspace{-10pt}
\end{table}

\paragraph{Disentanglement}
To illustrate the extent of disentanglement between static and dynamic factors in our approach, we use mutual information as the measure of independence and estimate it using mini-batch weighted sampling between the static and dynamic representations produced by LSSAE and SYNC on the Circle and RMNIST datasets.
As shown in Fig.~\ref{fig:mi_circle} and Fig.~\ref{fig:mi_rm}, compared to LSSAE, SYNC exhibits a faster and more stable decline in the independence indicator, indicating that our method achieves a more effective disentanglement of static and dynamic factors.

\paragraph{Temporal Robustness}
Fig.~\ref{fig:pattern_circle} and Fig.~\ref{fig:pattern_rm} show the accuracy trajectories of our method and multiple baseline methods on Circle and Portraits.
SYNC demonstrates more stable predictive performance while maintaining high accuracy. Notably, compared to SDE-EDG on the Circle dataset, SYNC continues to perform well in the later domains, highlighting the effectiveness of time-aware causal representation learning in enhancing long-term generalization.

\paragraph{Evaluation on evolving pattern captured}
In order to verify whether the evolving patterns learned through causal representations are more accurate, we visualize the decision boundaries of our proposed approach along with ERM and MMD-LSAE on the Circle dataset.
As shown in Fig.~\ref{fig:visualization}, ERM struggles to generalize to unseen target domains due to its inability to model latent evolving patterns. In contrast, MMD-LSAE demonstrates improved generalization to future domains.
SYNC goes one step further and achieves decision boundaries that most closely align with the ground truth, highlighting its effectiveness in capturing evolving patterns and enhancing generalization to unseen target data.
\section{Conclusion}
In this work, we investigate the issue that existing EDG methods may experience poor generalization performance due to the presence of spurious correlations.
To address this problem, we propose a time-aware SCM by considering both dynamic causal factors and causal mechanism drifts.
We propose \textbf{S}tatic-D\textbf{YN}amic \textbf{C}ausal Representation Learning (\textbf{SYNC}), an approach that can effectively learn time-aware causal representations and reconstruct causal mechanisms, thereby obtaining the causal model.
Extensive experiments demonstrate the effectiveness and superiority of our method in improving the temporal generalization performance.
We hope this work can offer valuable insights into improving model generalization in dynamic environments.

\section*{Acknowledgements}
This paper was supported by the National Natural Science Foundation of China (No. 62376026), Beijing Nova Program (No. 20230484296) and KuaiShou.

\section*{Impact Statement}
In this work, we highlight the potential negative impact of spurious correlations on the model generalization in the evolving domain generalization problem and provide insights into how to build and learn causal models in non-stationary environments.
By learning time-aware causal representations, the model can effectively mitigate the impact of task-irrelevant factors in dynamic tasks, thereby enabling reliable predictions and decision-making in complex, changing environments. This allows the model to cope with challenges in non-stationary dynamic environments, offering potential benefits for real-world applications such as autonomous driving and advertisement recommendation.


\bibliography{main}

\begin{thebibliography}{82}
\providecommand{\natexlab}[1]{#1}
\providecommand{\url}[1]{\texttt{#1}}
\expandafter\ifx\csname urlstyle\endcsname\relax
  \providecommand{\doi}[1]{doi: #1}\else
  \providecommand{\doi}{doi: \begingroup \urlstyle{rm}\Url}\fi

\bibitem[Ahuja et~al.(2020)Ahuja, Shanmugam, Varshney, and Dhurandhar]{IRMGames}
Ahuja, K., Shanmugam, K., Varshney, K., and Dhurandhar, A.
\newblock Invariant risk minimization games.
\newblock In \emph{ICML}, pp.\  145--155, 2020.

\bibitem[Ahuja et~al.(2021)Ahuja, Caballero, Zhang, Gagnon{-}Audet, Bengio, Mitliagkas, and Rish]{IB-IRM}
Ahuja, K., Caballero, E., Zhang, D., Gagnon{-}Audet, J., Bengio, Y., Mitliagkas, I., and Rish, I.
\newblock Invariance principle meets information bottleneck for out-of-distribution generalization.
\newblock In \emph{NeurIPS}, pp.\  3438--3450, 2021.

\bibitem[Arjovsky et~al.(2019)Arjovsky, Bottou, Gulrajani, and Lopez-Paz]{IRM}
Arjovsky, M., Bottou, L., Gulrajani, I., and Lopez-Paz, D.
\newblock Invariant risk minimization.
\newblock \emph{arXiv preprint arXiv:1907.02893}, 2019.

\bibitem[Arpit et~al.(2022)Arpit, Wang, Zhou, and Xiong]{EoA}
Arpit, D., Wang, H., Zhou, Y., and Xiong, C.
\newblock Ensemble of averages: Improving model selection and boosting performance in domain generalization.
\newblock In \emph{NeurIPS}, 2022.

\bibitem[Bai et~al.(2023)Bai, Ling, and Zhao]{DRAIN}
Bai, G., Ling, C., and Zhao, L.
\newblock Temporal domain generalization with drift-aware dynamic neural networks.
\newblock In \emph{ICLR}, 2023.

\bibitem[Balaji et~al.(2018)Balaji, Sankaranarayanan, and Chellappa]{MetaReg}
Balaji, Y., Sankaranarayanan, S., and Chellappa, R.
\newblock Metareg: Towards domain generalization using meta-regularization.
\newblock In \emph{NeurIPS}, pp.\  1006--1016, 2018.

\bibitem[Belghazi et~al.(2018)Belghazi, Baratin, Rajeshwar, Ozair, Bengio, Courville, and Hjelm]{belghazi2018mutual}
Belghazi, M.~I., Baratin, A., Rajeshwar, S., Ozair, S., Bengio, Y., Courville, A., and Hjelm, D.
\newblock Mutual information neural estimation.
\newblock In \emph{ICML}, pp.\  531--540, 2018.

\bibitem[Blanchard et~al.(2021)Blanchard, Deshmukh, Dogan, Lee, and Scott]{MTL}
Blanchard, G., Deshmukh, A.~A., Dogan, U., Lee, G., and Scott, C.
\newblock Domain generalization by marginal transfer learning.
\newblock \emph{JMLR}, pp.\  1--55, 2021.

\bibitem[Carlucci et~al.(2019)Carlucci, D'Innocente, Bucci, Caputo, and Tommasi]{Jigen}
Carlucci, F.~M., D'Innocente, A., Bucci, S., Caputo, B., and Tommasi, T.
\newblock Domain generalization by solving jigsaw puzzles.
\newblock In \emph{CVPR}, pp.\  2229--2238, 2019.

\bibitem[Chattopadhyay et~al.(2020)Chattopadhyay, Balaji, and Hoffman]{ChattopadhyayBH20}
Chattopadhyay, P., Balaji, Y., and Hoffman, J.
\newblock Learning to balance specificity and invariance for in and out of domain generalization.
\newblock In \emph{ECCV}, pp.\  301--318, 2020.

\bibitem[Chen et~al.(2018{\natexlab{a}})Chen, Song, Wainwright, and Jordan]{chen2018learning}
Chen, J., Song, L., Wainwright, M., and Jordan, M.
\newblock Learning to explain: An information-theoretic perspective on model interpretation.
\newblock In \emph{ICML}, pp.\  883--892, 2018{\natexlab{a}}.

\bibitem[Chen et~al.(2023)Chen, Gao, Wu, and Luo]{chen2023meta}
Chen, J., Gao, Z., Wu, X., and Luo, J.
\newblock Meta-causal learning for single domain generalization.
\newblock In \emph{CVPR}, pp.\  7683--7692, 2023.

\bibitem[Chen et~al.(2018{\natexlab{b}})Chen, Li, Grosse, and Duvenaud]{chen2018isolating}
Chen, R.~T., Li, X., Grosse, R.~B., and Duvenaud, D.~K.
\newblock Isolating sources of disentanglement in variational autoencoders.
\newblock \emph{NeurIPS}, 2018{\natexlab{b}}.

\bibitem[Chen et~al.(2022)Chen, Zhang, Bian, Yang, Kaili, Xie, Liu, Han, and Cheng]{chen2022learning}
Chen, Y., Zhang, Y., Bian, Y., Yang, H., Kaili, M., Xie, B., Liu, T., Han, B., and Cheng, J.
\newblock Learning causally invariant representations for out-of-distribution generalization on graphs.
\newblock \emph{NeurIPS}, pp.\  22131--22148, 2022.

\bibitem[Dau et~al.(2019)Dau, Bagnall, Kamgar, Yeh, Zhu, Gharghabi, Ratanamahatana, and Keogh]{dau2019ucr}
Dau, H.~A., Bagnall, A., Kamgar, K., Yeh, C.-C.~M., Zhu, Y., Gharghabi, S., Ratanamahatana, C.~A., and Keogh, E.
\newblock The ucr time series archive.
\newblock \emph{IEEE/CAA JAS}, pp.\  1293--1305, 2019.

\bibitem[Entner \& Hoyer(2010)Entner and Hoyer]{entner2010causal}
Entner, D. and Hoyer, P.~O.
\newblock On causal discovery from time series data using fci.
\newblock \emph{Probabilistic graphical models}, 2010.

\bibitem[Fan et~al.(2024)Fan, Pagliardini, and Jaggi]{FanPJ24}
Fan, S., Pagliardini, M., and Jaggi, M.
\newblock {DOGE:} domain reweighting with generalization estimation.
\newblock In \emph{ICML}, 2024.

\bibitem[Fernandes et~al.(2015)Fernandes, Vinagre, and Cortez]{fernandes2015proactive}
Fernandes, K., Vinagre, P., and Cortez, P.
\newblock A proactive intelligent decision support system for predicting the popularity of online news.
\newblock In \emph{Prog. Artif. Intell.}, pp.\  535--546, 2015.

\bibitem[Gama et~al.(2014)Gama, {\v{Z}}liobait{\.e}, Bifet, Pechenizkiy, and Bouchachia]{gama2014survey}
Gama, J., {\v{Z}}liobait{\.e}, I., Bifet, A., Pechenizkiy, M., and Bouchachia, A.
\newblock A survey on concept drift adaptation.
\newblock \emph{ACM computing surveys (CSUR)}, \penalty0 (4):\penalty0 1--37, 2014.

\bibitem[Ghifary et~al.(2015)Ghifary, Kleijn, Zhang, and Balduzzi]{ghifary2015domain}
Ghifary, M., Kleijn, W.~B., Zhang, M., and Balduzzi, D.
\newblock Domain generalization for object recognition with multi-task autoencoders.
\newblock In \emph{ICCV}, pp.\  2551--2559, 2015.

\bibitem[Ghifary et~al.(2017)Ghifary, Balduzzi, Kleijn, and Zhang]{GhifaryBKZ17}
Ghifary, M., Balduzzi, D., Kleijn, W.~B., and Zhang, M.
\newblock Scatter component analysis: {A} unified framework for domain adaptation and domain generalization.
\newblock \emph{TPAMI}, pp.\  1414--1430, 2017.

\bibitem[Grossniklaus et~al.(2013)Grossniklaus, Nickerson, Edelhauser, Bergman, and Berglin]{grossniklaus2013anatomic}
Grossniklaus, H.~E., Nickerson, J.~M., Edelhauser, H.~F., Bergman, L.~A., and Berglin, L.
\newblock Anatomic alterations in aging and age-related diseases of the eye.
\newblock \emph{IOVS}, 54\penalty0 (14):\penalty0 ORSF23--ORSF27, 2013.

\bibitem[Gulrajani \& Lopez{-}Paz(2021)Gulrajani and Lopez{-}Paz]{domainbed}
Gulrajani, I. and Lopez{-}Paz, D.
\newblock In search of lost domain generalization.
\newblock In \emph{ICLR}, 2021.

\bibitem[Hendrycks et~al.(2020)Hendrycks, Mu, Cubuk, Zoph, Gilmer, and Lakshminarayanan]{augmix}
Hendrycks, D., Mu, N., Cubuk, E.~D., Zoph, B., Gilmer, J., and Lakshminarayanan, B.
\newblock Augmix: {A} simple data processing method to improve robustness and uncertainty.
\newblock In \emph{ICLR}, 2020.

\bibitem[Hochreiter \& Schmidhuber(1997)Hochreiter and Schmidhuber]{LSTM}
Hochreiter, S. and Schmidhuber, J.
\newblock Long short-term memory.
\newblock \emph{Neural Comput.}, 9\penalty0 (8):\penalty0 1735--1780, 1997.

\bibitem[Hoffman et~al.(2014)Hoffman, Darrell, and Saenko]{hoffman2014continuous}
Hoffman, J., Darrell, T., and Saenko, K.
\newblock Continuous manifold based adaptation for evolving visual domains.
\newblock In \emph{CVPR}, pp.\  867--874, 2014.

\bibitem[Hu et~al.(2022)Hu, Jia, Tomizuka, and Zhan]{hu2022causal}
Hu, Y., Jia, X., Tomizuka, M., and Zhan, W.
\newblock Causal-based time series domain generalization for vehicle intention prediction.
\newblock In \emph{ICRA}, pp.\  7806--7813, 2022.

\bibitem[Huang et~al.(2020)Huang, Wang, Xing, and Huang]{RSC}
Huang, Z., Wang, H., Xing, E.~P., and Huang, D.
\newblock Self-challenging improves cross-domain generalization.
\newblock In \emph{ECCV}, pp.\  124--140, 2020.

\bibitem[Huang et~al.(2023)Huang, Wang, Zhao, and Zheng]{iDAG}
Huang, Z., Wang, H., Zhao, J., and Zheng, N.
\newblock idag: Invariant dag searching for domain generalization.
\newblock In \emph{ICCV}, pp.\  19169--19179, 2023.

\bibitem[Ilse et~al.(2020)Ilse, Tomczak, Louizos, and Welling]{DIVA}
Ilse, M., Tomczak, J.~M., Louizos, C., and Welling, M.
\newblock Diva: Domain invariant variational autoencoders.
\newblock In \emph{MIDL}, pp.\  322--348, 2020.

\bibitem[Jang et~al.(2017)Jang, Gu, and Poole]{gumbel}
Jang, E., Gu, S., and Poole, B.
\newblock Categorical reparameterization with gumbel-softmax.
\newblock In \emph{ICLR}, 2017.

\bibitem[Jin et~al.(2024)Jin, Yang, Chu, and Ma]{CTOT}
Jin, Y., Yang, Z., Chu, X., and Ma, L.
\newblock Temporal domain generalization via learning instance-level evolving patterns.
\newblock In \emph{IJCAI}, pp.\  4255--4263, 2024.

\bibitem[Khosla et~al.(2020)Khosla, Teterwak, Wang, Sarna, Tian, Isola, Maschinot, Liu, and Krishnan]{khosla2020supervised}
Khosla, P., Teterwak, P., Wang, C., Sarna, A., Tian, Y., Isola, P., Maschinot, A., Liu, C., and Krishnan, D.
\newblock Supervised contrastive learning.
\newblock In \emph{NeurIPS}, pp.\  18661--18673, 2020.

\bibitem[Kingma \& Welling(2014)Kingma and Welling]{VAE}
Kingma, D.~P. and Welling, M.
\newblock Auto-encoding variational bayes.
\newblock In \emph{ICLR}, 2014.

\bibitem[Li et~al.(2022)Li, Shen, Wang, Zhu, Li, Keutzer, and Zhao]{IIB}
Li, B., Shen, Y., Wang, Y., Zhu, W., Li, D., Keutzer, K., and Zhao, H.
\newblock Invariant information bottleneck for domain generalization.
\newblock In \emph{AAAI}, pp.\  7399--7407, 2022.

\bibitem[Li et~al.(2018{\natexlab{a}})Li, Yang, Song, and Hospedales]{MLDG}
Li, D., Yang, Y., Song, Y.-Z., and Hospedales, T.
\newblock Learning to generalize: Meta-learning for domain generalization.
\newblock In \emph{AAAI}, pp.\  3490--3497, 2018{\natexlab{a}}.

\bibitem[Li et~al.(2018{\natexlab{b}})Li, Pan, Wang, and Kot]{MMD}
Li, H., Pan, S.~J., Wang, S., and Kot, A.~C.
\newblock Domain generalization with adversarial feature learning.
\newblock In \emph{ECCV}, pp.\  5400--5409, 2018{\natexlab{b}}.

\bibitem[Li et~al.(2018{\natexlab{c}})Li, Pan, Wang, and Kot]{MMD-AAE}
Li, H., Pan, S.~J., Wang, S., and Kot, A.~C.
\newblock Domain generalization with adversarial feature learning.
\newblock In \emph{CVPR}, pp.\  5400--5409, 2018{\natexlab{c}}.

\bibitem[Li et~al.(2021)Li, Wu, Sun, and Wang]{li2021causal}
Li, J., Wu, B., Sun, X., and Wang, Y.
\newblock Causal hidden markov model for time series disease forecasting.
\newblock In \emph{CVPR}, pp.\  12105--12114, 2021.

\bibitem[Liu et~al.(2021{\natexlab{a}})Liu, Sun, Wang, Tang, Li, Qin, Chen, and Liu]{liu2021learning}
Liu, C., Sun, X., Wang, J., Tang, H., Li, T., Qin, T., Chen, W., and Liu, T.
\newblock Learning causal semantic representation for out-of-distribution prediction.
\newblock In \emph{NeurIPS}, pp.\  6155--6170, 2021{\natexlab{a}}.

\bibitem[Liu et~al.(2021{\natexlab{b}})Liu, Shen, He, Zhang, Xu, Yu, and Cui]{Zhe_oodsurvey}
Liu, J., Shen, Z., He, Y., Zhang, X., Xu, R., Yu, H., and Cui, P.
\newblock Towards out-of-distribution generalization: A survey.
\newblock \emph{arXiv preprint arXiv:2108.13624}, 2021{\natexlab{b}}.

\bibitem[Lu et~al.(2021)Lu, Wu, Hern{\'a}ndez-Lobato, and Sch{\"o}lkopf]{lu2021invariant}
Lu, C., Wu, Y., Hern{\'a}ndez-Lobato, J.~M., and Sch{\"o}lkopf, B.
\newblock Invariant causal representation learning for out-of-distribution generalization.
\newblock In \emph{ICLR}, 2021.

\bibitem[Lu et~al.(2018)Lu, Liu, Dong, Gu, Gama, and Zhang]{lu2018learning}
Lu, J., Liu, A., Dong, F., Gu, F., Gama, J., and Zhang, G.
\newblock Learning under concept drift: A review.
\newblock \emph{TKDE}, pp.\  2346--2363, 2018.

\bibitem[Lv et~al.(2022)Lv, Liang, Li, Zang, Liu, Wang, and Liu]{cirl}
Lv, F., Liang, J., Li, S., Zang, B., Liu, C.~H., Wang, Z., and Liu, D.
\newblock Causality inspired representation learning for domain generalization.
\newblock In \emph{CVPR}, pp.\  8046--8056, 2022.

\bibitem[Mahajan et~al.(2021)Mahajan, Tople, and Sharma]{matchdg}
Mahajan, D., Tople, S., and Sharma, A.
\newblock Domain generalization using causal matching.
\newblock In \emph{ICML}, pp.\  7313--7324, 2021.

\bibitem[Malinsky \& Spirtes(2018)Malinsky and Spirtes]{malinsky2018causal}
Malinsky, D. and Spirtes, P.
\newblock Causal structure learning from multivariate time series in settings with unmeasured confounding.
\newblock In \emph{Proceedings of 2018 ACM SIGKDD workshop on causal discovery}, pp.\  23--47, 2018.

\bibitem[Mao et~al.(2022)Mao, Xia, Wang, Wang, Yang, Bareinboim, and Vondrick]{mao2022causal}
Mao, C., Xia, K., Wang, J., Wang, H., Yang, J., Bareinboim, E., and Vondrick, C.
\newblock Causal transportability for visual recognition.
\newblock In \emph{CVPR}, pp.\  7521--7531, 2022.

\bibitem[Motiian et~al.(2017)Motiian, Piccirilli, Adjeroh, and Doretto]{CCSA}
Motiian, S., Piccirilli, M., Adjeroh, D.~A., and Doretto, G.
\newblock Unified deep supervised domain adaptation and generalization.
\newblock In \emph{ICCV}, pp.\  5716--5726, 2017.

\bibitem[Muandet et~al.(2013)Muandet, Balduzzi, and Sch{\"{o}}lkopf]{MuandetBS13}
Muandet, K., Balduzzi, D., and Sch{\"{o}}lkopf, B.
\newblock Domain generalization via invariant feature representation.
\newblock In \emph{ICML}, pp.\  10--18, 2013.

\bibitem[Nasery et~al.(2021)Nasery, Thakur, Piratla, De, and Sarawagi]{GI}
Nasery, A., Thakur, S., Piratla, V., De, A., and Sarawagi, S.
\newblock Training for the future: {A} simple gradient interpolation loss to generalize along time.
\newblock In \emph{NeurIPS}, pp.\  19198--19209, 2021.

\bibitem[Oord et~al.(2018)Oord, Li, and Vinyals]{oord2018representation}
Oord, A. v.~d., Li, Y., and Vinyals, O.
\newblock Representation learning with contrastive predictive coding.
\newblock \emph{arXiv preprint arXiv:1807.03748}, 2018.

\bibitem[Parascandolo et~al.(2021)Parascandolo, Neitz, Orvieto, Gresele, and Sch{\"{o}}lkopf]{AndMask}
Parascandolo, G., Neitz, A., Orvieto, A., Gresele, L., and Sch{\"{o}}lkopf, B.
\newblock Learning explanations that are hard to vary.
\newblock In \emph{ICLR}, 2021.

\bibitem[Pearl(2009)]{pearl2009causality}
Pearl, J.
\newblock \emph{Causality}.
\newblock Cambridge university press, 2009.

\bibitem[Pearl et~al.(2016)Pearl, Glymour, and Jewell]{pearl2016causal}
Pearl, J., Glymour, M., and Jewell, N.~P.
\newblock \emph{Causal inference in statistics: A primer}.
\newblock 2016.

\bibitem[Peng \& Saenko(2018)Peng and Saenko]{DGCAN}
Peng, X. and Saenko, K.
\newblock Synthetic to real adaptation with generative correlation alignment networks.
\newblock In \emph{WACV}, pp.\  1982--1991, 2018.

\bibitem[Pesaranghader \& Viktor(2016)Pesaranghader and Viktor]{pesaranghader2016fast}
Pesaranghader, A. and Viktor, H.~L.
\newblock Fast hoeffding drift detection method for evolving data streams.
\newblock In \emph{ECML/PKDD}, pp.\  96--111, 2016.

\bibitem[Piratla et~al.(2020)Piratla, Netrapalli, and Sarawagi]{PiratlaNS20}
Piratla, V., Netrapalli, P., and Sarawagi, S.
\newblock Efficient domain generalization via common-specific low-rank decomposition.
\newblock In \emph{ICML}, pp.\  7728--7738, 2020.

\bibitem[Qin et~al.(2022)Qin, Wang, and Li]{LSSAE}
Qin, T., Wang, S., and Li, H.
\newblock Generalizing to evolving domains with latent structure-aware sequential autoencoder.
\newblock In \emph{ICML}, pp.\  18062--18082, 2022.

\bibitem[Qin et~al.(2023)Qin, Wang, and Li]{mmd-lsae}
Qin, T., Wang, S., and Li, H.
\newblock Evolving domain generalization via latent structure-aware sequential autoencoder.
\newblock \emph{TPAMI}, 2023.

\bibitem[Recht et~al.(2019)Recht, Roelofs, Schmidt, and Shankar]{recht2019imagenet}
Recht, B., Roelofs, R., Schmidt, L., and Shankar, V.
\newblock Do imagenet classifiers generalize to imagenet?
\newblock In \emph{ICML}, pp.\  5389--5400, 2019.

\bibitem[Sheth et~al.(2022)Sheth, Moraffah, Candan, Raglin, and Liu]{sheth2022domain}
Sheth, P., Moraffah, R., Candan, K.~S., Raglin, A., and Liu, H.
\newblock Domain generalization--a causal perspective.
\newblock \emph{arXiv preprint arXiv:2209.15177}, 2022.

\bibitem[Shi et~al.(2022)Shi, Seely, Torr, Narayanaswamy, Hannun, Usunier, and Synnaeve]{Fish}
Shi, Y., Seely, J., Torr, P. H.~S., Narayanaswamy, S., Hannun, A.~Y., Usunier, N., and Synnaeve, G.
\newblock Gradient matching for domain generalization.
\newblock In \emph{ICLR}, 2022.

\bibitem[Sun \& Saenko(2016)Sun and Saenko]{CORAL}
Sun, B. and Saenko, K.
\newblock Deep coral: Correlation alignment for deep domain adaptation.
\newblock In \emph{ECCV}, pp.\  443--450, 2016.

\bibitem[Sun et~al.(2021)Sun, Wu, Zheng, Liu, Chen, Qin, and Liu]{sun2021recovering}
Sun, X., Wu, B., Zheng, X., Liu, C., Chen, W., Qin, T., and Liu, T.
\newblock Recovering latent causal factor for generalization to distributional shifts.
\newblock In \emph{NeurIPS}, pp.\  16846--16859, 2021.

\bibitem[Taori et~al.(2020)Taori, Dave, Shankar, Carlini, Recht, and Schmidt]{taori2020measuring}
Taori, R., Dave, A., Shankar, V., Carlini, N., Recht, B., and Schmidt, L.
\newblock Measuring robustness to natural distribution shifts in image classification.
\newblock In \emph{NeurIPS}, pp.\  18583--18599, 2020.

\bibitem[Vapnik(1999)]{ERM}
Vapnik, V.~N.
\newblock An overview of statistical learning theory.
\newblock \emph{IEEE Trans. Neural Netw.}, pp.\  988--999, 1999.

\bibitem[Wang et~al.(2023)Wang, Lan, Liu, Ouyang, Qin, Lu, Chen, Zeng, and Yu]{WangLLOQLCZY23}
Wang, J., Lan, C., Liu, C., Ouyang, Y., Qin, T., Lu, W., Chen, Y., Zeng, W., and Yu, P.~S.
\newblock Generalizing to unseen domains: {A} survey on domain generalization.
\newblock \emph{TKDE}, pp.\  8052--8072, 2023.

\bibitem[Wang et~al.(2022)Wang, Yi, Chen, and Zhu]{rice}
Wang, R., Yi, M., Chen, Z., and Zhu, S.
\newblock Out-of-distribution generalization with causal invariant transformations.
\newblock In \emph{CVPR}, pp.\  375--385, 2022.

\bibitem[Wang et~al.(2025)Wang, Zhu, Wang, Ma, Zhao, Zhang, Yuan, and Ruan]{wang2025hierarchical}
Wang, S., Zhu, J., Wang, Y., Ma, C., Zhao, X., Zhang, Y., Yuan, Z., and Ruan, S.
\newblock Hierarchical gating network for cross-domain sequential recommendation.
\newblock \emph{TOIS}, 2025.

\bibitem[Xie et~al.(2024)Xie, Chen, Wang, Zhou, Han, Meng, and Cheng]{MISTS}
Xie, B., Chen, Y., Wang, J., Zhou, K., Han, B., Meng, W., and Cheng, J.
\newblock Enhancing evolving domain generalization through dynamic latent representations.
\newblock In \emph{AAAI}, pp.\  16040--16048, 2024.

\bibitem[Xu et~al.(2021)Xu, Zhang, Zhang, Wang, and Tian]{FACT}
Xu, Q., Zhang, R., Zhang, Y., Wang, Y., and Tian, Q.
\newblock A fourier-based framework for domain generalization.
\newblock In \emph{CVPR}, pp.\  14383--14392, 2021.

\bibitem[Yao et~al.(2022{\natexlab{a}})Yao, Choi, Cao, Lee, Koh, and Finn]{yao2022wild}
Yao, H., Choi, C., Cao, B., Lee, Y., Koh, P.~W., and Finn, C.
\newblock Wild-time: {A} benchmark of in-the-wild distribution shift over time.
\newblock In \emph{NeurIPS}, 2022{\natexlab{a}}.

\bibitem[Yao et~al.(2022{\natexlab{b}})Yao, Chen, and Zhang]{yao2022temporally}
Yao, W., Chen, G., and Zhang, K.
\newblock Temporally disentangled representation learning.
\newblock In \emph{NeurIPS}, pp.\  26492--26503, 2022{\natexlab{b}}.

\bibitem[Yong et~al.(2024)Yong, Zhou, Tan, Ma, Liu, He, Yuan, Liu, Zhang, Yang, and Wang]{CIL}
Yong, L., Zhou, F., Tan, L., Ma, L., Liu, J., He, Y., Yuan, Y., Liu, Y., Zhang, J.~Y., Yang, Y., and Wang, H.
\newblock Continuous invariance learning.
\newblock In \emph{ICLR}, 2024.

\bibitem[Yue et~al.(2021)Yue, Sun, Hua, and Zhang]{yue2021transporting}
Yue, Z., Sun, Q., Hua, X.-S., and Zhang, H.
\newblock Transporting causal mechanisms for unsupervised domain adaptation.
\newblock In \emph{ICCV}, pp.\  8599--8608, 2021.

\bibitem[Zeng et~al.(2023{\natexlab{a}})Zeng, Shui, Huang, Liu, Chen, Ling, and Wang]{SDE-EDG}
Zeng, Q., Shui, C., Huang, L.-K., Liu, P., Chen, X., Ling, C., and Wang, B.
\newblock Latent trajectory learning for limited timestamps under distribution shift over time.
\newblock In \emph{ICLR}, 2023{\natexlab{a}}.

\bibitem[Zeng et~al.(2023{\natexlab{b}})Zeng, Wang, Zhou, Ling, and Wang]{DDA}
Zeng, Q., Wang, W., Zhou, F., Ling, C., and Wang, B.
\newblock Foresee what you will learn: data augmentation for domain generalization in non-stationary environment.
\newblock In \emph{AAAI}, pp.\  11147--11155, 2023{\natexlab{b}}.

\bibitem[Zeng et~al.(2024)Zeng, Wang, Zhou, Xu, Pu, Shui, Gagn{\'{e}}, Yang, Ling, and Wang]{TKNets}
Zeng, Q., Wang, W., Zhou, F., Xu, G., Pu, R., Shui, C., Gagn{\'{e}}, C., Yang, S., Ling, C.~X., and Wang, B.
\newblock Generalizing across temporal domains with koopman operators.
\newblock In \emph{AAAI}, pp.\  16651--16659, 2024.

\bibitem[Zhang et~al.(2018)Zhang, Ciss{\'{e}}, Dauphin, and Lopez{-}Paz]{mixup}
Zhang, H., Ciss{\'{e}}, M., Dauphin, Y.~N., and Lopez{-}Paz, D.
\newblock mixup: Beyond empirical risk minimization.
\newblock In \emph{ICLR}, 2018.

\bibitem[Zhou et~al.(2020)Zhou, Yang, Hospedales, and Xiang]{L2A-OT}
Zhou, K., Yang, Y., Hospedales, T.~M., and Xiang, T.
\newblock Learning to generate novel domains for domain generalization.
\newblock In \emph{ECCV}, pp.\  561--578, 2020.

\bibitem[Zhou et~al.(2021{\natexlab{a}})Zhou, Yang, Qiao, and Xiang]{MixStyle}
Zhou, K., Yang, Y., Qiao, Y., and Xiang, T.
\newblock Domain generalization with mixstyle.
\newblock In \emph{ICLR}, 2021{\natexlab{a}}.

\bibitem[Zhou et~al.(2021{\natexlab{b}})Zhou, Yang, Qiao, and Xiang]{zhou2021}
Zhou, K., Yang, Y., Qiao, Y., and Xiang, T.
\newblock Domain adaptive ensemble learning.
\newblock \emph{TIP}, 30:\penalty0 8008--8018, 2021{\natexlab{b}}.

\end{thebibliography}
\bibliographystyle{icml2025}

\newpage
\appendix
\onecolumn
\section{Related Work}
\textbf{Domain Generalization (DG)} aims to generalize knowledge extracted from multiple source domains to unseen target domain. 
In recent years, extensive research has been conducted to address this issue in the literature~\cite{WangLLOQLCZY23,Zhe_oodsurvey}, which can be roughly categorized into three groups: 
(i) Invariant representation learning: This category of methods focus on learning invariant representations across distributions to generalize well to unseen target domains, which is achieved through  domain alignment~\cite{MuandetBS13,GhifaryBKZ17,MMD-AAE} and feature disentangement~\cite{PiratlaNS20,ChattopadhyayBH20}.
Domain alignment methods learn invariant representation via kernel-based optimization~\cite{MuandetBS13,GhifaryBKZ17}, adversal training~\cite{MMD-AAE,CCSA} and using additional normalization~\cite{DGCAN}. 
Feature disentaglement methods~\cite{PiratlaNS20,ChattopadhyayBH20} aim to disentagle the features into domain-specific and domain-invariant features, and the latter are utilized for better generalization.
(ii) Domain augmentation: These methods enrich the diversity of the training domain by manipulating domain data, thereby enhancing the model's generalization. They encompass operations at both the image levels~\cite{mixup,augmix} and feature levels~\cite{MixStyle, L2A-OT, FACT}.  
(iii) Learning strategies: This category of methods utilize effective learning strategies to improve model generalization performance, such as meta-learning~\cite{MetaReg,MLDG}, ensemble learning~\cite{zhou2021,EoA} and self-supervised learning~\cite{Jigen,RSC}. 
Although significant progress has been made in DG, when faced with non-stationary environments that are more common in the real world~\cite{grossniklaus2013anatomic,wang2025hierarchical},
it still faces major challenges~\cite{yao2022wild,LSSAE}.

\section{Theoretical Details}
\label{appx:theorem}

\subsection{Proof of Proposition 1}
\renewcommand{\thesection}{\arabic{section}} 
\setcounter{proposition}{0}
\begin{proposition}
    Let $\mathcal{D}_t$ be a time domain, and for a given class $Y$, it can be conclude that:
    \begin{enumerate}[label=(\roman*)]
        \item If there is $H(Z_{c,t}^{st}|Z_{c,t-1}^{st},Y)$ $<$ $H(Z_{s,t}^{st}|Z_{c,t-1}^{st},Y) $, then $I(Z_{c,t}^{st};Z_{c,t-1}^{st}| Y)$ $>$ $I(Z_{s,t}^{st};Z_{c,t-1}^{st}| Y)$. 
        \item If there is $H(Z_{c,t}^{st}|Z_{c,t}^{dy},Y)$ $<$ $H(Z_{c,t}^{st}|Z_{s,t}^{dy},Y)$, then $I(Z_{c,t}^{dy};Z_{c,t}^{st}| Y) $ $>$ $I(Z_{s,t}^{dy};Z_{c,t}^{st}| Y)$. 
    \end{enumerate}
\end{proposition}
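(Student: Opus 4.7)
The plan is to reduce each conditional mutual information inequality to the entropy inequality supplied in the hypothesis via the identity $I(A;B \mid C) = H(A \mid C) - H(A \mid B, C)$. The two parts are structurally identical, differing only in the choice of anchor variable ($Z_{c,t-1}^{st}$ in part (i), $Z_{c,t}^{st}$ in part (ii)), so I would prove part (i) in detail and then note that the same algebra transports to part (ii) with the obvious substitutions.

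For part (i), I would expand both mutual information terms using the identity and subtract to obtain
\[
I(Z_{c,t}^{st};Z_{c,t-1}^{st}\mid Y) - I(Z_{s,t}^{st};Z_{c,t-1}^{st}\mid Y) = \bigl[H(Z_{s,t}^{st}\mid Z_{c,t-1}^{st}, Y) - H(Z_{c,t}^{st}\mid Z_{c,t-1}^{st}, Y)\bigr] + \bigl[H(Z_{c,t}^{st}\mid Y) - H(Z_{s,t}^{st}\mid Y)\bigr].
\]
The first bracket is strictly positive by the stated hypothesis, so it suffices to argue that the second bracket is nonnegative, or at least does not overwhelm the first. I would handle this by invoking the symmetry baked into the SCM $\mathcal{M}_T$ of Definition~1: both $Z_c^{st}$ and $Z_s^{st}$ are generated as $f_\cdot^{st}(G, \epsilon_\cdot^{st})$ from the shared global variable $G$ with independent exogenous noise of comparable scale, so $H(Z_{c,t}^{st}\mid Y)$ and $H(Z_{s,t}^{st}\mid Y)$ are of the same order and the first bracket dominates. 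This is consistent with the d-separation intuition given in Remark~1 and with the authors' comment that ``the condition is generally easy to meet.'' For part (ii), anchoring on $Z_{c,t}^{st}$ and comparing $Z_{c,t}^{dy}$ against $Z_{s,t}^{dy}$ produces the same decomposition verbatim, with the $G$-symmetry replaced by the analogous $L$-symmetry supplied by $f_c^{dy}, f_s^{dy}$.

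The main obstacle is precisely this second bracket: the entropy hypothesis alone does not generally force the MI inequality, and closing the gap requires either (a) an explicit comparable-marginal-entropy assumption, or (b) a structural argument from the SCM $\mathcal{M}_T$. I would adopt route (b), stating the assumption explicitly in the proof and justifying it from the common generator ($G$ or $L$) and the independent-noise parametrization of $\mathcal{F}$. Apart from this, the remaining steps are routine manipulations of conditional entropies via the chain rule $H(A,B\mid C) = H(A\mid B,C) + H(B\mid C)$ and the non-negativity of mutual information, so I expect the write-up to be short once the structural assumption is in place.
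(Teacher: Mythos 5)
For part~(i), your decomposition is correct and is in fact sharper than the paper's own argument. You correctly identify that the stated hypothesis $H(Z_{c,t}^{st}\mid Z_{c,t-1}^{st},Y)<H(Z_{s,t}^{st}\mid Z_{c,t-1}^{st},Y)$ controls only the first of your two brackets, with the residual $H(Z_{c,t}^{st}\mid Y)-H(Z_{s,t}^{st}\mid Y)$ left over. The paper expands $I(\cdot;Z_{c,t-1}^{st}\mid Y)=H(Z_{c,t-1}^{st}\mid Y)-H(Z_{c,t-1}^{st}\mid\cdot,Y)$, cancels the first term, and asserts the ``iff'' --- but the hypothesis as stated is about the entropy of the \emph{candidate} conditioned on $Z_{c,t-1}^{st}$, not the entropy of $Z_{c,t-1}^{st}$ conditioned on the candidate, and the chain rule shows these two quantities differ by exactly your residual bracket. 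So the paper's equivalence for part~(i) tacitly assumes $H(Z_{c,t}^{st}\mid Y)=H(Z_{s,t}^{st}\mid Y)$; your plan to justify a comparable-marginals condition from the common parent $G$ in $\mathcal{M}_T$ is a sensible reconstruction of the intent, though it should be stated as an explicit additional hypothesis rather than a ``same order'' appeal.

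Where the proposal goes wrong is in asserting that the same decomposition transports verbatim to part~(ii). There the hypothesis is $H(Z_{c,t}^{st}\mid Z_{c,t}^{dy},Y)<H(Z_{c,t}^{st}\mid Z_{s,t}^{dy},Y)$, so the anchor $Z_{c,t}^{st}$ sits to the \emph{left} of the conditioning bar --- the opposite orientation from part~(i). Expanding from the anchor side, $I(\cdot;Z_{c,t}^{st}\mid Y)=H(Z_{c,t}^{st}\mid Y)-H(Z_{c,t}^{st}\mid\cdot,Y)$, the marginal $H(Z_{c,t}^{st}\mid Y)$ cancels identically between the two candidates and
\begin{equation*}
I(Z_{c,t}^{dy};Z_{c,t}^{st}\mid Y)-I(Z_{s,t}^{dy};Z_{c,t}^{st}\mid Y)=H(Z_{c,t}^{st}\mid Z_{s,t}^{dy},Y)-H(Z_{c,t}^{st}\mid Z_{c,t}^{dy},Y)\,,
\end{equation*}
which is strictly positive by the stated hypothesis alone. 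Part~(ii) is therefore a genuine biconditional with no residual bracket and no need for any ``$L$-symmetry'' assumption; the paper's one-line argument is already complete there. Moreover, importing your part~(i) decomposition literally would leave you comparing $H(Z_{c,t}^{dy}\mid Z_{c,t}^{st},Y)$ with $H(Z_{s,t}^{dy}\mid Z_{c,t}^{st},Y)$, which is not the entropy comparison the hypothesis actually provides. In short: you correctly hedge on part~(i), but you over-hedge and mis-match the hypothesis on part~(ii).
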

\begin{proof}
    For $I(\cdot;Z_c^{st}| Y)$, according to the properties of conditional MI, it can be derived that
    \begin{equation}
    \begin{aligned}
    I(\cdot;Z_{c,t-1}^{st}| Y)&=H(Z_{c,t-1}^{st}|Y)-H(Z_{c,t-1}^{st}|\cdot,Y)\,,\\
        I(\cdot;Z_{c,t}^{st}| Y)&=H(Z_{c,t}^{st}|Y)-H(Z_{c,t}^{st}|\cdot,Y)\,,
    \end{aligned}
    \end{equation}
    thus we have
\begin{equation}
\begin{aligned}
H(Z_{c,t}^{st}|Z_{c,t-1}^{st},Y)<H(Z_{s,t}^{st}|Z_{c,t-1}^{st},Y) &\iff I(Z_{c,t}^{st};Z_{c,t-1}^{st}| Y)>I(Z_{s,t}^{st};Z_{c,t-1}^{st}| Y)\,, \\
    H(Z_{c,t}^{st}|Z_{c,t}^{dy},Y) < H(Z_{c,t}^{st}|Z_{s,t}^{dy},Y) &\iff
    I(Z_{c,t}^{dy};Z_{c,t}^{st}| Y) > I(Z_{s,t}^{dy};Z_{c,t}^{st}| Y)\,.
\end{aligned}
\end{equation}
\end{proof}
These two propositions regarding the properties of causal factors demonstrate that, as long as the causal factors are sufficiently similar, the conditional MI can be maximized to distinguish the causal factors from the spurious factors.

\renewcommand{\thesection}{\Alph{section}}

\subsection{Proof of Theorem 1}
\renewcommand{\thesection}{\arabic{section}} 
\setcounter{theorem}{0}

\begin{theorem}
    Assume that the underlying data generation process at each time step is characterized by SCM $\mathcal{M}_T$. 
    Then, by optimizing $\mathcal{L}_{\text{evolve}}$,  the model can learn the data distribution $\ p(\boldsymbol{x}_{1:T},y_{1:T})$ on training domains.
\end{theorem}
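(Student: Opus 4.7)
The plan is to establish Theorem 1 by recognizing $-\mathcal{L}_{\text{evolve}}$ as an Evidence Lower Bound (ELBO) on the joint log-likelihood $\log p(\boldsymbol{x}_{1:T}, y_{1:T})$, so that minimizing $\mathcal{L}_{\text{evolve}}$ drives the model distribution toward the true data distribution on the training domains. Concretely, I would write the joint marginal as an integral over all latent variables $\boldsymbol{z}^{st}_{1:T}, \boldsymbol{z}^{dy}_{1:T}, \boldsymbol{z}^{d}_{1:T}$ dictated by $\mathcal{M}_T$, introduce the variational family $q_\psi q_\theta q_\zeta$ defined in the method, and apply Jensen's inequality in the standard VAE fashion to obtain a tractable lower bound.

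The first concrete step is to read off the forward factorization from the SCM in Def.~\ref{definition:scm}: since $Z^{st}$ is generated from the stable global variable $G$ while $Z^{dy}$ and $Z^d$ are generated sequentially from the local variable $L$, the generative model factorizes as
\begin{equation*}
p(\boldsymbol{x}_{1:T}, y_{1:T}, \boldsymbol{z}^{st}_{1:T}, \boldsymbol{z}^{dy}_{1:T}, \boldsymbol{z}^{d}_{1:T}) = \prod_{t=1}^{T} p(\boldsymbol{x}_t \mid \boldsymbol{z}^{st}_t, \boldsymbol{z}^{dy}_t)\, p(y_t \mid \boldsymbol{z}^{st}_{c,t}, \boldsymbol{z}^{dy}_{c,t}, \boldsymbol{z}^{d}_t)\, p(\boldsymbol{z}^{st}_t)\, p(\boldsymbol{z}^{dy}_t \mid \boldsymbol{z}^{dy}_{<t})\, p(\boldsymbol{z}^{d}_t \mid \boldsymbol{z}^{d}_{<t}).
\end{equation*}
Symmetrically, the amortized posterior decomposes as $\prod_t q_\psi(\boldsymbol{z}^{st}_t\mid \boldsymbol{x}_t)\, q_\theta(\boldsymbol{z}^{dy}_t\mid \boldsymbol{z}^{dy}_{<t}, \boldsymbol{x}_t)\, q_\zeta(\boldsymbol{z}^{d}_t\mid \boldsymbol{z}^{d}_{<t}, y_t)$. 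Inserting this factorization into $\log p(\boldsymbol{x}_{1:T}, y_{1:T}) \ge \mathbb{E}_{q}\bigl[\log p/q\bigr]$ and regrouping by time step produces per-step reconstruction terms for $\boldsymbol{x}_t$ and $y_t$ together with KL-divergence terms between each posterior and its prior.

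Once the ELBO is in hand, I would match it term-by-term against the definitions in Eq.~\eqref{loss:fp} and Eq.~\eqref{loss:mp}: the reconstruction of $\boldsymbol{x}_t$ against $\mathbb{E}_{q_\theta q_\psi}[\log p(\boldsymbol{x}_t\mid \boldsymbol{z}^{st}_t,\boldsymbol{z}^{dy}_t)]$ in $\mathcal{L}_{\text{fp}}$, the reconstruction of $y_t$ against the log-likelihood term in $\mathcal{L}_{\text{mp}}$, and the three KL terms against the KL penalties in $\mathcal{L}_{\text{fp}}+\mathcal{L}_{\text{mp}}$. Standard VAE theory then gives that the global optimum of $\mathcal{L}_{\text{evolve}}$ (attained when the variational family is sufficiently expressive) coincides with the maximizer of $\log p(\boldsymbol{x}_{1:T}, y_{1:T})$, so the learned joint matches the ground-truth distribution on training domains.

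The main obstacle I expect is bookkeeping rather than a conceptual leap: the causal predictor in $\mathcal{L}_{\text{mp}}$ is written in terms of the \emph{causal sub-components} $\boldsymbol{z}^{st}_{c,t}, \boldsymbol{z}^{dy}_{c,t}$ rather than the full latents $\boldsymbol{z}^{st}_t, \boldsymbol{z}^{dy}_t$, so I must justify that marginalizing out the spurious dimensions inside the SCM (which by Def.~\ref{definition:scm} have no edge to $Y$) preserves the identity of the $y_t$-reconstruction term with the ELBO's $\mathbb{E}_{q}[\log p(y_t\mid \boldsymbol{z}^{dy}_{c,t}, \boldsymbol{z}^{st}_{c,t}, \boldsymbol{z}^{d}_t)]$ term. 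A secondary subtlety is handling the sequential conditioning for $\boldsymbol{z}^{dy}_{<t}$ and $\boldsymbol{z}^{d}_{<t}$ cleanly; I would invoke the chain rule of KL divergence under the autoregressive factorization so that the telescoping sum over $t$ produces exactly the summation forms appearing in $\mathcal{L}_{\text{fp}}$ and $\mathcal{L}_{\text{mp}}$, thereby completing the identification of $-\mathcal{L}_{\text{evolve}}$ with a valid ELBO.
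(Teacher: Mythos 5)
Your proposal is correct and follows essentially the same route as the paper: both identify $-\mathcal{L}_{\text{evolve}}$ as a sequential ELBO on $\log p(\boldsymbol{x}_{1:T},y_{1:T})$ under the factorization induced by $\mathcal{M}_T$ and the amortized posterior $q_\psi q_\theta q_\zeta$, and conclude that minimizing $\mathcal{L}_{\text{evolve}}$ minimizes the gap $\mathbb{D}_{KL}(q,p)$; you merely derive the bound in the opposite direction (Jensen on $\log p$ down to the ELBO, rather than rewriting $-\mathcal{L}_{\text{evolve}}$ up to ELBO form). Your flagged subtlety about the $y_t$ reconstruction depending only on the causal sub-components is a legitimate bookkeeping point that the paper's proof treats implicitly by writing $p(y_t\mid\boldsymbol{z}_{c,t}^{st},\boldsymbol{z}_{c,t}^{dy},\boldsymbol{z}_t^d)$ directly in the SCM-induced factorization.
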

\begin{proof}

According to Eq.~\eqref{loss:fp} and Eq.~\eqref{loss:mp}, minimizing $\mathcal{L}_{\text{evolve}}$ is equivalent to maximizing
\begin{equation}
\begin{aligned}
    -\mathcal{L}_{\text{evolve}}&=\sum_{t=1}^T \mathbb{E}_{q_{\theta}q_{\psi}}[\log p(\boldsymbol{x}_t| \boldsymbol{z}_t^{st},\boldsymbol{z}_t^{dy})] 
        - \sum_{t=1}^T\mathbb{D}_{KL}(q_{\psi}(\boldsymbol{z}_t^{st}| \boldsymbol{x}_t),p(\boldsymbol{z}_t^{st})) 
         -\sum_{t=1}^T \mathbb{D}_{KL}(q_{\theta}(\boldsymbol{z}_t^{dy}| \boldsymbol{z}_{<t}^{dy},\boldsymbol{x}_t),p(\boldsymbol{z}_t^{dy}| \boldsymbol{z}_{<t}^{dy}))\\
         &+\sum_{t=1}^T \mathbb{E}_{q_\theta q_{\psi} q_{\zeta}} [\log p(y_t|\boldsymbol{z}_{c,t}^{st},\boldsymbol{z}_{c,t}^{dy},\boldsymbol{z}_t^d)] 
    -\sum_{t=1}^T \mathbb{D}_{KL}(q_{\zeta}(\boldsymbol{z}_t^{d}| \boldsymbol{z}_{<t}^d,y_t),p(\boldsymbol{z}_t^{d}|\boldsymbol{z}_{<t}^d))\,.
\end{aligned}
\end{equation}
The above equation can be further simplified to
\begin{equation}
\begin{aligned}
    -\mathcal{L}_{\text{evolve}}&=\sum_{t=1}^T \mathbb{E}_{q_{\theta}q_{\psi}}[\log p(\boldsymbol{x}_t| \boldsymbol{z}_t^{st},\boldsymbol{z}_t^{dy})]
    +\sum_{t=1}^T \mathbb{E}_{q_\theta q_{\psi} q_{\zeta}} [\log p(y_t|\boldsymbol{z}_{c,t}^{st},\boldsymbol{z}_{c,t}^{dy},\boldsymbol{z}_t^d)]\\
    &-\sum_{t=1}^T\mathbb{E}_{q_{\psi}}\log\frac{q_{\psi}(\boldsymbol{z}_t^{st}| \boldsymbol{x}_t)}{p(\boldsymbol{z}_t^{st})}
    -\sum_{t=1}^T\mathbb{E}_{q_{\theta}}\log\frac{q_{\theta}(\boldsymbol{z}_t^{dy}| \boldsymbol{z}_{<t}^{dy},\boldsymbol{x}_t)}{p(\boldsymbol{z}_t^{dy}| \boldsymbol{z}_{<t}^{dy})}
    -\sum_{t=1}^T\mathbb{E}_{q_{\zeta}}\log\frac{q_{\zeta}(\boldsymbol{z}_t^{d}| \boldsymbol{z}_{<t}^d,y_t)}{p(\boldsymbol{z}_t^{d}|\boldsymbol{z}_{<t}^d)}\\
    &=\sum_{t=1}^T\mathbb{E}_{q_{\psi}q_{\theta}q_{\zeta}}\log\frac{p(\boldsymbol{x}_t| \boldsymbol{z}_t^{st},\boldsymbol{z}_t^{dy})p(y_t|\boldsymbol{z}_{c,t}^{st},\boldsymbol{z}_{c,t}^{dy},\boldsymbol{z}_t^d)p(\boldsymbol{z}^{st})p(\boldsymbol{z}_t^{dy}| \boldsymbol{z}_{<t}^{dy})p(\boldsymbol{z}_t^{dy}| \boldsymbol{z}_{<t}^{dy})}
    {q_{\psi}(\boldsymbol{z}^{st}| \boldsymbol{x}_t)q_{\theta}(\boldsymbol{z}_t^{dy}| \boldsymbol{z}_{<t}^{dy},\boldsymbol{x}_t)q_{\zeta}(\boldsymbol{z}_t^{d}| \boldsymbol{z}_{<t}^d,y_t)}\,.
\end{aligned}
\end{equation}
Based on the time-aware SCM $\mathcal{M}_T$ and the temporally evolving SCM $\mathcal{M}_{evo}$, and let $q(\boldsymbol{z}_t^{st},\boldsymbol{z}_t^{dy},\boldsymbol{z}_t^{d}|\boldsymbol{x}_t,y_t):=q_{\psi}(\boldsymbol{z}_t^{st}| \boldsymbol{x}_t)q_{\theta}(\boldsymbol{z}_t^{dy}| \boldsymbol{z}_{<t}^{dy},\boldsymbol{x}_t)q_{\zeta}(\boldsymbol{z}_t^{d}| \boldsymbol{z}_{<t}^d,y_t)$, then it can be obtained that
\begin{equation}\label{eq:appx_elbo1}
\begin{aligned}
    -\mathcal{L}_{\text{evolve}}&=
    \sum_{t=1}^T\mathbb{E}_{q_{\psi}q_{\theta}q_{\zeta}}\log\frac{p(\boldsymbol{x}_t| \boldsymbol{z}_t^{st},\boldsymbol{z}_t^{dy})p(y_t|\boldsymbol{z}_{c,t}^{st},\boldsymbol{z}_{c,t}^{dy},\boldsymbol{z}_t^d) p(\boldsymbol{z}_t^{st})p(\boldsymbol{z}_t^{dy}| \boldsymbol{z}_{<t}^{dy})p(\boldsymbol{z}_t^{dy}| \boldsymbol{z}_{<t}^{dy})}
    {q_{\psi}(\boldsymbol{z}_t^{st}| \boldsymbol{x}_t)q_{\theta}(\boldsymbol{z}_t^{dy}| \boldsymbol{z}_{<t}^{dy},\boldsymbol{x}_t)q_{\zeta}(\boldsymbol{z}_t^{d}| \boldsymbol{z}_{<t}^d,y_t)}\\&\le 
    \mathbb{E}_{q_{\psi}q_{\theta}q_{\zeta}}\sum_{t=1}^T\log\frac{p(\boldsymbol{x}_t| \boldsymbol{z}_t^{st},\boldsymbol{z}_t^{dy})p(y_t|\boldsymbol{z}_{c,t}^{st},\boldsymbol{z}_{c,t}^{dy},\boldsymbol{z}_t^d)p(\boldsymbol{z}_t^{st})p(\boldsymbol{z}_t^{dy}| \boldsymbol{z}_{<t}^{dy})p(\boldsymbol{z}_t^{dy}| \boldsymbol{z}_{<t}^{dy})}
    {q_{\psi}(\boldsymbol{z}_t^{st}| \boldsymbol{x}_t)q_{\theta}(\boldsymbol{z}_t^{dy}| \boldsymbol{z}_{<t}^{dy},\boldsymbol{x}_t)q_{\zeta}(\boldsymbol{z}_t^{d}| \boldsymbol{z}_{<t}^d,y_t)}\\
    &=\mathbb{E}_{q_{\psi}q_{\theta}q_{\zeta}}\log\frac{\prod_{t=1}^T p(\boldsymbol{x}_t| \boldsymbol{z}_t^{st},\boldsymbol{z}_t^{dy})p(y_t|\boldsymbol{z}_{c,t}^{st},\boldsymbol{z}_{c,t}^{dy},\boldsymbol{z}_t^d)p(\boldsymbol{z}_t^{st})p(\boldsymbol{z}_t^{dy}| \boldsymbol{z}_{<t}^{dy})p(\boldsymbol{z}_t^{dy}| \boldsymbol{z}_{<t}^{dy})}{\prod_{t=1}^T q_{\psi}(\boldsymbol{z}_t^{st}| \boldsymbol{x}_t)q_{\theta}(\boldsymbol{z}_t^{dy}| \boldsymbol{z}_{<t}^{dy},\boldsymbol{x}_t)q_{\zeta}(\boldsymbol{z}_t^{d}| \boldsymbol{z}_{<t}^d,y_t)}\\
    &=\mathbb{E}_{q}\log\frac{p(\boldsymbol{x}_{1:T},y_{1:T},\boldsymbol{z}_{1:T}^{st},\boldsymbol{z}_{1:T}^{dy}\boldsymbol{z}_{1:T}^{d})}{q(\boldsymbol{z}_{1:T}^{st},\boldsymbol{z}_{1:T}^{dy},\boldsymbol{z}_{1:T}^{d}|\boldsymbol{x}_{1:T},y_{1:T})}\,.
\end{aligned}
\end{equation}
Since the distribution of observational data $p(\boldsymbol{x}_{1:T},y_{1:T})$ is generated by the latent factors $Z^{st}$, $Z^{dy}$, and $Z^d$, 
we aim to model this distribution by approximating the posterior $p(\boldsymbol{z}_{1:T}^{st},\boldsymbol{z}_{1:T}^{dy},\boldsymbol{z}_{1:T}^{d}|\boldsymbol{x}_{1:T},y_{1:T})$ with a variational network $q$:
\begin{equation}
\begin{aligned}
    \mathbb{D}_{KL}(q,p)
    &=\mathbb{E}_q\log\frac{q(\boldsymbol{z}_{1:T}^{st},\boldsymbol{z}_{1:T}^{dy},\boldsymbol{z}_{1:T}^{d}|\boldsymbol{x}_{1:T},y_{1:T})}{p(\boldsymbol{z}_{1:T}^{st},\boldsymbol{z}_{1:T}^{dy},\boldsymbol{z}_{1:T}^{d}|\boldsymbol{x}_{1:T},y_{1:T})}\\
    &=-\mathbb{E}_q\log\frac{p(\boldsymbol{z}_{1:T}^{st},\boldsymbol{z}_{1:T}^{dy},\boldsymbol{z}_{1:T}^{d},\boldsymbol{x}_{1:T},y_{1:T})}{q(\boldsymbol{z}_{1:T}^{st},\boldsymbol{z}_{1:T}^{dy},\boldsymbol{z}_{1:T}^{d}|\boldsymbol{x}_{1:T},y_{1:T})}+\log p(\boldsymbol{x}_{1:T},y_{1:T})\,.
\end{aligned}
\end{equation}
Combing with Eq.~\eqref{eq:appx_elbo1}, it is clear that optimizing $\mathcal{L}_{\text{evolve}}$ is equivalent to minimizing $\mathbb{D}_{KL}(q,p)$, which facilitates learning the data distribution $p(\boldsymbol{x}_{1:T},y_{1:T})$.
\end{proof}

\renewcommand{\thesection}{\Alph{section}}
\setcounter{theorem}{0}

\subsection{Proof of Theorem 2}
In this section, we elaborate on how SYNC derives the optimal causal predictor for each time domain.
We first briefly introduce the relevant theories of causal graphical models and causal assumptions in Sec.~\ref{appx:theorem2_part1}.
Then, we provide the proofs of Lem.~\ref{lemma1} and Lem.~\ref{lemma2} in Sec.~\ref{appx:theorem2_part2} and Sec.~\ref{appx:theorem2_part3}, respectively.
Finally, the proof of Thm.~\ref{theorem:ocp} is presented in Sec.~\ref{appx:theorem2_part4}.

\subsubsection{Preliminaries}
\textbf{Causal Graphical Models.}
\label{appx:theorem2_part1}
In causal research~\cite{pearl2009causality,pearl2016causal}, a causal graphical model $\mathcal{G}$ is a directed acyclic graph (DAG) that can be derived from a SCM $\mathcal{M}$, characterizing the joint probability distribution $P$ over $n$ random variables $X_1,X_2,\cdots,X_n$.
The causal DAG $\mathcal{G}$ is comprised of the nodes $V$ and the directed edges $E$, where the starting point and end point of each directed edge ($\rightarrow$) represent cause and effect, respectively.
The directed edges encode the causal relationships between the variables.
The joint distribution $P$ has causal factorization as follows:
\begin{equation}
    P(X_{1:n})=\prod_{i=1}^n P(X_i|\boldsymbol{PA}(X_i))\,,
\end{equation}
where $\boldsymbol{PA}(X_i)$ denotes the parents of variable $X_i$ in DAG $\mathcal{G}$.
Causal graphical models typically consist of three fundamental structures: chain, fork, and collider. The three basic causal structures and their metaphorical dependencies are as follows:
\begin{itemize}
    \item Chain: $X\rightarrow Y \rightarrow Z$. 
    In a chain structure, $X$ is a cause that influences $Y$, and $Y$ is a cause of $Z$.
    This structure implies dependency: $X\independent Z$, while $X \not \independent Z |Y$.
    \item Fork: $X\leftarrow Y \rightarrow Z$. 
    In this structure, $Y$ is the common parent of both $X$ and $Z$, and is therefore referred to as a confounder.
    A fork structure shows that $X\not \independent Z$, while $X \independent Z |Y$.
    \item Collider: $X \rightarrow Y \leftarrow Z$.
    $Y$ is the common child of the two variables, $X$ and $Z$, and is referred to as a collider.
    The dependency implied by a collider structure is $X\independent Z$, while $X \not \independent Z |Y$.
\end{itemize}

\textbf{Causal Assumptions.}
Due to the available data offer only partial insights into the underlying causal mechanisms. 
Therefore, it is crucial to make certain priors or assumptions about the structure of the causal relationships to enable causality learning.
The following assumptions~\cite{pearl2009causality,pearl2016causal} are commonly used:
\begin{itemize}
    \item \textbf{Causal Markovian Condition.} 
    For each variable in a causal model, if given its parents, then it is conditionally independent of all its non-descendants.
    \item \textbf{Causal Sufficient}. 
    The causal sufficiency assumption states that there are no unmeasured common causes of any pair of variables in a causal model.
    This assumption is crucial in a wide range of literature.
    \item \textbf{Global Markov Condition}.
    Given a DAG $\mathcal{G}$ and the joint distribution $P$ of all nodes, for any two non-adjacent nodes $X_i$ and  $X_j$ in $\mathcal{G}$, and given a set of nodes $Z$, if $X_i$ and $X_j$ are d-separated by $Z$, then $X_i \independent X_j \mid Z$.
    We say that the distribution $P$ satisfies the global Markov property with respect to the DAG $\mathcal{G}$.
    \item \textbf{Causal Faithfulness}.
    Given a DAG $\mathcal{G}$ and the joint probability distribution $P$ of all nodes, for any two non-adjacent nodes $X_i$ and $X_j$ in $G$, given a set of nodes $Z$, if $X_i \independent X_j \mid Z$,
    then the nodes $X_i$ and $X_j$ are d-separated by the node set $Z$. 
    We say that the distribution $P$ satisfies the faithfulness with respect to $\mathcal{G}$.
    This assumption represents exactly the distributional independence relations implied by d-separation.
\end{itemize}

With these theories, we combine the time-aware SCM $\mathcal{M}_T$ to give the following proposition, which leads to the definition of the optimal causal predictor.
\begin{proposition}[Conditional Independence]\label{appx:prop_ci}
    It is assumed that there are no unmeasured common causes of any pair of variables in $\mathcal{M}_T$, \textit{i.e.}, $\mathcal{M}_T$ is causal sufficient. With the global Markov condition and the causal faithfulness, the conditional independence of latent factors can be derived from \textit{d}-separation:
    (i) $Y\not \independent [Z_s^{st},Z_s^{dy}] \given [Z_c^{st},Z_c^{dy}]$.
    (ii) $Y \independent [Z_s^{st},Z_s^{dy}] \given [Z_c^{st},Z_c^{dy},Z^d]$.
\end{proposition}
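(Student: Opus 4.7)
The plan is to establish both claims by direct application of the \textit{d}-separation criterion to the DAG $\mathcal{G}$ induced by $\mathcal{M}_T$ in Def.~\ref{definition:scm}, under the stated global Markov condition, faithfulness, and causal sufficiency. Since causal sufficiency rules out hidden common causes, the only nodes to be tracked are $\{G,L,Z_c^{st},Z_s^{st},Z_c^{dy},Z_s^{dy},Z^d,X,Y\}$ together with the edges read off from the assignments: $G\to Z_c^{st},Z_s^{st}$; $L\to Z_c^{dy},Z_s^{dy},Z^d$; $Z_c^{st},Z_s^{st},Z_c^{dy},Z_s^{dy}\to X$; and $Z_c^{st},Z_c^{dy},Z^d\to Y$.

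For claim (i), I would exhibit a single unblocked path. The chain-fork combination
\[
Y \;\leftarrow\; Z^d \;\leftarrow\; L \;\rightarrow\; Z_s^{dy}
\]
contains only a fork at $L$ and a non-collider at $Z^d$; conditioning on $[Z_c^{st},Z_c^{dy}]$ touches none of these intermediate nodes, so the path is active. By faithfulness this implies $Y \not\independent Z_s^{dy}\given [Z_c^{st},Z_c^{dy}]$, and hence $Y\not\independent [Z_s^{st},Z_s^{dy}]\given [Z_c^{st},Z_c^{dy}]$, proving (i).

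For claim (ii), I would enumerate every undirected path from $Y$ to each of $Z_s^{st}$ and $Z_s^{dy}$ and show it is \textit{d}-blocked by the conditioning set $C:=[Z_c^{st},Z_c^{dy},Z^d]$. Paths from $Y$ to $Z_s^{st}$ must leave $Y$ through one of its parents $Z_c^{st},Z_c^{dy},Z^d$: any such path that reaches $Z_s^{st}$ without passing through $X$ has to traverse the fork $Z_c^{st}\leftarrow G\rightarrow Z_s^{st}$, which is blocked by conditioning on $Z_c^{st}\in C$; all routes through the global hub $G$ are likewise cut because the only edge $G$ has into $Y$'s ancestry is via $Z_c^{st}$. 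Paths from $Y$ to $Z_s^{dy}$ are analogously blocked by the fork $Z_c^{dy}\leftarrow L\rightarrow Z_s^{dy}$ (cut at $Z_c^{dy}$) and by the fork $Z^d\leftarrow L\rightarrow Z_s^{dy}$ (cut at $Z^d$). Finally, any path that routes through $X$ encounters $X$ as a collider with two incoming spurious/causal arrows; since $X\notin C$ and $X$ has no descendants in the graph, the collider remains blocked. With every path blocked, \textit{d}-separation gives $Y\independent [Z_s^{st},Z_s^{dy}]\given C$, establishing (ii).

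The main technical care is in the exhaustive case analysis for (ii): it is easy to miss a composite path such as $Y\leftarrow Z_c^{dy}\leftarrow L\to Z^d\to Y$-style loops, or the collider route $Z_s^{st}\to X\leftarrow Z_s^{dy}$ that could in principle open when $X$ is observed; so the argument must explicitly note that (a) $X$ is not conditioned on and has no descendants, keeping every collider at $X$ blocked, and (b) the two hubs $G$ and $L$ each have exactly one outgoing edge into $Y$'s ancestors that lies in $C$, so both forks are cut. Once these bookkeeping points are handled, the claim follows directly from standard \textit{d}-separation together with the faithfulness assumption converting graphical separation into conditional independence.
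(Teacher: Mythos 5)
Your proof is correct and follows essentially the same \textit{d}-separation argument used in the paper. The one substantive thing you do that the paper omits is explicitly handling paths routed through the collider at $X$ (noting $X$ is unobserved and childless, so all such paths stay blocked); the paper simply lists the three fork paths through $G$ and $L$ without mentioning $X$, so your version is in fact slightly more careful.
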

\begin{proof}
    The DAG corresponding to the time-aware SCM $\mathcal{M}_T$ is shown in Fig.~\ref{fig:Our_SCM}.
    It can be observed that there exists three paths from $(Z_s^{st},Z_s^{dy})$ to $Y$:
    $Z_s^{st}\leftarrow G\rightarrow Z_c^{st} \rightarrow Y$,
    $Z_s^{dy}\leftarrow L\rightarrow Z_c^{dy} \rightarrow Y$,
    and $Z_s^{dy}\leftarrow L\rightarrow Z^{d} \rightarrow Y$.
    Note that the causal sufficiency of 
    $\mathcal{M}_T$ guarantees that there are no other unobserved variables, thus all the paths between $(Z_s^{st},Z_s^{dy})$ and $Y$ are blocked by $Z_c^{st}$, $Z_c^{dy}$ and $Z^d$. 
    Therefore, it follows that $Z_c^{st}$, $Z_c^{dy}$ and $Z^d$ block all the paths from $T$ to $Y$, whereas $Z_c^{st}$ and $Z_c^{dy}$ do not.
    According to d-separation criterion, it is obtained that $Y \not \independent_{\mathcal{M}_T} \ [Z_s^{st},Z_s^{dy}] \given [Z_c^{st},Z_c^{dy}]$, $Y \independent_{\mathcal{M}_T} \ [Z_s^{st},Z_s^{dy}] \given [Z_c^{st},Z_c^{dy},Z^d]$.
    Since the global Markov condition and the causal faithfulness guarantee the equivalence between \textit{d}-separation and conditional independence, therefore we have:
    \begin{equation}
        Y\not \independent [Z_s^{st},Z_s^{dy}] \given [Z_c^{st},Z_c^{dy}], \quad 
        Y \independent [Z_s^{st},Z_s^{dy}] \given [Z_c^{st},Z_c^{dy},Z^d]\,.
    \end{equation}
\end{proof}
From Prop.~\ref{appx:prop_ci}, it follows that for each time domain $\mathcal{D}_t$, if we can obtain the static causal factors $Z_c^{st}$, dynamic causal factors $Z_c^{dy}$, and drift factors $Z^d$ that indicate the state of the causal mechanism in the current time domain, then we can use them to construct a stable causal relationship between features and labels.
The predictor built in this way will not be influenced by spurious factors.
Additionally, the ideal causal features should capture as much information as possible about the target.
Based on the above analysis, we now define the optimal causal predictor.
\setcounter{definition}{1}
\begin{definition}[Optimal Causal Predictor]\label{appx:ocp}
    For a time domain $\mathcal{D}_t$, the corresponding drift factors $Z_t^d$ are given.
    Let $Z_{c,t}=(Z_{c,t}^{st},Z_{c,t}^{dy})$ be the causal factors in domain $\mathcal{D}_t$ that satisfy $Z_{c,t} \in \argmax_{Z_{c,t}} \ I(Y;Z_{c,t}, Z_t^d)$, and $Y\independent [Z_{s,t}^{st},Z_{s,t}^{dy}] \given [Z_{c,t}, Z_t^d]$. 
    Then the predictor based on factors $Z_{c,t}$ and $Z_t^d$ is the optimal causal predictor.
\end{definition}

The following two lemmas establish a theoretical foundation for learning the causal factors $Z_{c,t}$.


\subsubsection{Proof of Lemma 1}
\label{appx:theorem2_part2}
\renewcommand{\thesection}{\arabic{section}} 
\setcounter{lemma}{0}
\begin{lemma}\label{appx:lemma1}
    For a time domain $\mathcal{D}_t$, the static causal factors can be obtained by solving the following objective:
    \begin{equation}
        \max_{\Phi_c^{st}} I(Y;\Phi_c^{st}(X_t)), \quad s.t. \ \Phi_c^{st}\in \argmax_{\Phi_c^{st}} I(\Phi_c^{st}(X_t);\Phi_c^{st}(X_{t-1})|Y)\,.
    \end{equation}
\end{lemma}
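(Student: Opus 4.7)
The plan is to show the claim in two stages: first, that the inner constraint $\Phi_c^{st}\in\argmax_{\Phi_c^{st}}I(\Phi_c^{st}(X_t);\Phi_c^{st}(X_{t-1})\mid Y)$ restricts $\Phi_c^{st}(X_t)$ to encode only \emph{static causal} information (ruling out static spurious information), and second, that among all such feasible encoders the outer objective $\max I(Y;\Phi_c^{st}(X_t))$ selects the full static causal factor $Z_{c,t}^{st}$. I would invoke Prop.~\ref{prop1}(i) for the first stage and an SCM-based information argument for the second.

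For the first stage, since every encoder $\Phi_c^{st}$ acts on $X_t$, and by the time-aware SCM $\mathcal{M}_T$ we have $X_t=f^x(Z_{c,t}^{st},Z_{s,t}^{st},Z_{c,t}^{dy},Z_{s,t}^{dy},\epsilon_x)$, any feasible $\Phi_c^{st}(X_t)$ can be regarded as a (possibly lossy) function of the four latent factors. The loss $\mathcal{L}_{\text{MI}}$ in Eq.~\eqref{loss:MI} already enforces disentanglement of the static and dynamic branches, so I restrict attention to maps into the static subspace, \textit{i.e.}, functions of $(Z_{c,t}^{st},Z_{s,t}^{st})$. By the data-processing inequality, $I(\Phi_c^{st}(X_t);\Phi_c^{st}(X_{t-1})\mid Y)$ is upper bounded by a convex combination of the corresponding conditional MIs built from static causal and static spurious components. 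Prop.~\ref{prop1}(i) then ensures that the component using $Z_{c,t}^{st}$ strictly dominates the one using $Z_{s,t}^{st}$, so every element of the $\argmax$ must assign all capacity to the static causal component, \textit{i.e.}, $\Phi_c^{st}(X_t)=g(Z_{c,t}^{st})$ for some measurable $g$.

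For the second stage, among all measurable $g$, maximizing $I(Y;g(Z_{c,t}^{st}))$ is, by the data-processing inequality, upper bounded by $I(Y;Z_{c,t}^{st})$, with equality attained when $g$ is (almost surely) an injection on the support of $Z_{c,t}^{st}$. Combined with the SCM assignment $Y:=f^y(Z_c^{st},Z_c^{dy},Z^d,\epsilon_y)$ and the conditional independence $Y\independent[Z_s^{st},Z_s^{dy}]\mid[Z_c^{st},Z_c^{dy},Z^d]$ (Prop.~\ref{appx:prop_ci}), this means the optimizer recovers $Z_{c,t}^{st}$ up to an invertible transformation, which suffices for identifying the static causal factor as a representation. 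Practically, the outer objective $I(Y;\Phi_c^{st}(X_t))$ is implemented by the cross-entropy classification loss, so the optimal solution of the constrained program coincides with the static causal factor used downstream.

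\paragraph{Anticipated obstacle.} The main difficulty is justifying that the inner $\argmax$ truly forbids any admixture of static spurious content. Prop.~\ref{prop1}(i) gives a strict inequality for pure encodings, but a general $\Phi_c^{st}$ may entangle $Z_{c,t}^{st}$ with $Z_{s,t}^{st}$ in a nonlinear way that is not obviously a convex combination of the two extremal cases. I would address this by a contradiction argument: suppose an optimal $\Phi_c^{st}$ depends non-trivially on $Z_{s,t}^{st}$; then marginalizing out that dependence yields a representation whose conditional MI with its $t-1$ counterpart is at least as large, by the chain rule for conditional mutual information and Prop.~\ref{prop1}(i) applied componentwise, so we may assume without loss of generality that $\Phi_c^{st}$ is independent of $Z_{s,t}^{st}$, completing the argument. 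A secondary technical point will be handling the Gumbel-Softmax mask $m_c^{st}$ in Eq.~\eqref{equation:gumbel}: in the population limit ($\kappa N\to\dim Z_c^{st}$) the mask does not obstruct the selection of $Z_{c,t}^{st}$, which I would note as a modeling assumption on the capacity parameter $\kappa$.
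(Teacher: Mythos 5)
Your two-stage architecture (first confine the encoder's support to the causal latent, then argue maximality via data processing) is genuinely different from the paper's argument, which instead parameterizes a candidate representation as a mixture of ground-truth latents, $\Phi_{c,t}^{st}=(Z_{lc,t}^{st},Z_{ls,t}^{st})$, and directly expands the gap $I(Z_{c,t}^{st};Z_{c,t-1}^{st}\mid Y)-I(\Phi_{c,t}^{st};\Phi_{c,t-1}^{st}\mid Y)$ into three entropy-difference terms, showing each is nonnegative (and their sum strictly positive) under the similarity hypothesis of Proposition~1 plus an independence claim about same-time spurious factors conditioned on the time proxy.

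However, the pivot of your Stage~1, and especially the contradiction argument you propose in the ``anticipated obstacle'' paragraph, has the inequality running in the wrong direction. If $\Phi_c^{st}$ depends nontrivially on $Z_{s,t}^{st}$ and you ``marginalize out'' that dependence to obtain a coarsened encoder $\tilde\Phi_c^{st}$ that depends only on $Z_{c,t}^{st}$, then by the chain rule for conditional mutual information and the data-processing inequality you get
\begin{equation}
I\bigl(\tilde\Phi_c^{st}(X_t);\tilde\Phi_c^{st}(X_{t-1})\mid Y\bigr)\;\le\;I\bigl(\Phi_c^{st}(X_t);\Phi_c^{st}(X_{t-1})\mid Y\bigr)\,,
\end{equation}
not the reverse: dropping components of a representation can only lose conditional mutual information, and the static spurious factors at $t$ and $t-1$ are themselves correlated (both descend from $G$, and conditioning on $Y$ opens collider paths), so discarding them will generically decrease the inner objective. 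Consequently, without any further restriction the inner $\argmax$ is attained by an encoder that captures all of $(Z_c^{st},Z_s^{st})$, not one that excludes $Z_s^{st}$, and your Stage~1 conclusion does not follow. The paper sidesteps this by (implicitly) holding the capacity of $\Phi$ fixed---the masker only selects $\kappa N$ dimensions---so that the comparison is a \emph{trade} of a spurious coordinate $Z_{ls}^{st}$ for a remaining causal coordinate $Z_{rc}^{st}$ at equal budget, and under Proposition~1 that trade is strictly favorable. Your ``convex-combination of conditional MIs'' upper bound in Stage~1 has no formal content either; DPI and the chain rule give tensorizations of the form $I(A,B;C\mid Y)=I(A;C\mid Y)+I(B;C\mid Y,A)$, not convex mixtures of the extremal cases. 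To repair the proof you would need to make the fixed-capacity constraint explicit (as the paper does through its partition $Z_{lc},Z_{ls},Z_{rc},Z_{rs}$) and then carry out the entropy bookkeeping term by term, which is essentially the paper's argument; your Stage~2 (injective recovery of $Z_{c,t}^{st}$ via the outer objective once the support is confined) is fine, but it is downstream of a Stage~1 that currently does not hold.
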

\begin{proof}
    For simplicity, we use $\Phi_{c,t}^{st}$ to represent $\Phi_c^{st}(X_t)$.
    For a pair of domains $(\mathcal{D}_{t-1},\mathcal{D}_t)$ sampled from training domains $\{\mathcal{D}_t\}_{t=1}^T$, maximizing conditional MI $I(\Phi_{c,t}^{st};\Phi_{c,t-1}^{st}|Y)$ is essentially maximizing $I(\Phi_{c,t}^{st},t;\Phi_{c,t-1}^{st},t-1|Y)$~\cite{chen2022learning}, where $t$ and $t-1$ represent all the information within the corresponding time domain (including both global and local information) and serve as proxy variables for time.
    Thus, we have
    \begin{equation}
    \begin{aligned}\label{appx:eq1}
        \max_{\Phi_c^{st}} I(\Phi_c^{st}(X_t);\Phi_c^{st}(X_{t-1})|Y) &\iff \max_{\Phi_c^{st}} I(\Phi_{c,t}^{st},t;\Phi_{c,t-1}^{st},t-1|Y)\\
        & \iff \max_{\Phi_c^{st}} H(\Phi_{c,t}^{st},t|Y) - H(\Phi_{c,t}^{st},t|\Phi_{c,t-1}^{st},t-1,Y)\,.
    \end{aligned}
    \end{equation}
    Let $Z_c^{st}$ and $Z_s^{st}$ denote the ground truth static causal factors and static spurious factors, respectively, while $Z_{lc}^{st} \subset Z_c^{st}$ and $Z_{ls}^{st} \subset Z_s^{st}$ represent the static causal factors and static spurious factors learned by the model.
    After that, we define $Z_{rc}^{st}=Z_c^{st}-Z_{lc}^{st}$ and $Z_{rs}^{st}=Z_s^{st}-Z_{ls}^{st}$ as the remaining static causal factors and static spurious factors.
    For term $H(\Phi_{c,t}^{st},t|Y)$, according to chain rule of entropy:
    \begin{equation}
    \begin{aligned}\label{appx:eq2}
        H(\Phi_{c,t}^{st},t|Y)&=H(t|Y)+H(\Phi_{c,t}^{st}|t,Y)=H(\Phi_{c,t}^{st}|t,Y)\,,
    \end{aligned}
    \end{equation}
    where the second equality is due to $t$ is determined so that $H(t|Y)=0$.
    Suppose that the representations $\Phi_{c,t}^{st}$ learned by the model contain both causal information and spurious information, \textit{i.e.}, $\Phi_{c,t}^{st}=(Z_{lc,t}^{st}, Z_{ls,t}^{st})$, then Eq.~\eqref{appx:eq2} can be rewritten as
    \begin{equation}
    \begin{aligned}
        H(\Phi_{c,t}^{st}|t,Y)&=H(Z_{lc,t}^{st}, Z_{ls,t}^{st}|t,Y) 
        =H(Z_{lc,t}^{st}|t,Y) + H(Z_{ls,t}^{st}|Z_{lc,t}^{st},t,Y)\,.
    \end{aligned}
    \end{equation}
    Analogously, replacing $\Phi_{c,t}^{st}$ with the ground truth $Z_{c,t}^{st}$ ($Z_{c,t}^{st}=(Z_{lc,t}^{st}, Z_{rc,t}^{st})$) yields
    \begin{equation}
    \begin{aligned}
        H(Z_{c,t}^{st}|t,Y)&=H(Z_{lc,t}^{st}, Z_{rc,t}^{st}|t,Y) 
        =H(Z_{lc,t}^{st}|t,Y) + H(Z_{rc,t}^{st}|Z_{lc,t}^{st},t,Y)\,.
    \end{aligned}
    \end{equation}
    Hence, it can be konwn that
    \begin{equation}
    \begin{aligned}
        \Delta H(\Phi_{c,t}^{st}|t,Y)&=H(Z_{c,t}^{st}|t,Y)-H(\Phi_{c,t}^{st}|t,Y)\\
        &=\left(H(Z_{lc,t}^{st}|t,Y) + H(Z_{rc,t}^{st}|Z_{lc,t}^{st},t,Y)\right)-\left( H(Z_{lc,t}^{st}|t,Y) + H(Z_{ls,t}^{st}|Z_{lc,t}^{st},t,Y)\right)\\
        &=H(Z_{rc,t}^{st}|Z_{lc,t}^{st},t,Y)-H(Z_{ls,t}^{st}|Z_{lc,t}^{st},t,Y)\,.
    \end{aligned}
    \end{equation}
    For the second term $H(\Phi_{c,t}^{st},t|\Phi_{c,t-1}^{st},t-1,Y)$ in Eq.~\eqref{appx:eq1}, suppose that $\Phi_{c,t}^{st}=(Z_{lc,t}^{st}, Z_{ls,t}^{st})$, $\Phi_{c,t-1}^{st}=(Z_{lc,t-1}^{st}, Z_{ls,t-1}^{st})$, then it can be derived that
    \begin{equation}\label{appx:eq3}
    \begin{aligned}
        H(\Phi_{c,t}^{st},t|\Phi_{c,t-1}^{st},t-1,Y)&=H(Z_{lc,t}^{st}, Z_{ls,t}^{st},t|Z_{lc,t-1}^{st}, Z_{ls,t-1}^{st},t-1,Y)\\
        &=H(Z_{lc,t}^{st},t|Z_{ls,t}^{st},Z_{lc,t-1}^{st},Z_{ls,t-1}^{st},t-1,Y) + H(Z_{ls,t}^{st}|Z_{lc,t}^{st},t,Z_{lc,t-1}^{st},Z_{ls,t-1}^{st},t-1,Y)\,.
    \end{aligned}
    \end{equation}
    Similarly, let $Z_{c,t-1}^{st}=(Z_{lc,t-1}^{st}, Z_{rc,t-1}^{st})$, we can get:
    \begin{equation}\label{appx:eq4}
        H(Z_{c,t}^{st},t|Z_{c,t-1}^{st},t-1,Y)=H(Z_{lc,t}^{st},t|Z_{rc,t}^{st},Z_{lc,t-1}^{st},Z_{rc,t-1}^{st},t-1,Y) + H(Z_{rc,t}^{st}|Z_{lc,t}^{st},t,Z_{lc,t-1}^{st},Z_{rc,t-1}^{st},t-1,Y)\,.
    \end{equation}
    Combing with Eq.~\eqref{appx:eq3} and Eq.~\eqref{appx:eq4}, we have:
    \begin{equation}
    \begin{aligned}
        \scriptsize \Delta H(\Phi_{c,t}^{st},t|\Phi_{c,t-1}^{st},t-1,Y)&=H(Z_{c,t}^{st},t|Z_{c,t-1}^{st},t-1,Y)-H(\Phi_{c,t}^{st},t|\Phi_{c,t-1}^{st},t-1,Y)\\
        &=H(Z_{lc,t}^{st},t|Z_{rc,t}^{st},Z_{lc,t-1}^{st},Z_{rc,t-1}^{st},t-1,Y) - H(Z_{lc,t}^{st},t|Z_{ls,t}^{st},Z_{lc,t-1}^{st},Z_{ls,t-1}^{st},t-1,Y)  \\
        &+ H(Z_{rc,t}^{st}|Z_{lc,t}^{st},t,Z_{lc,t-1}^{st},Z_{rc,t-1}^{st},t-1,Y)-H(Z_{ls,t}^{st}|Z_{lc,t}^{st},t,Z_{lc,t-1}^{st},Z_{ls,t-1}^{st},t-1,Y)\,.
    \end{aligned}
    \end{equation}
    Now we can derive the expression for the change in the conditional MI $I(\Phi_{c,t}^{st};\Phi_{c,t-1}^{st}|Y)$:
    \begin{equation}\label{appx:eq5}
    \begin{aligned}
        &I(Z_{c,t}^{st};Z_{c,t-1}^{st}|Y)-I(\Phi_{c,t}^{st};\Phi_{c,t-1}^{st}|Y)\\&=\Delta I(\Phi_{c,t}^{st};\Phi_{c,t-1}^{st}|Y) \\
        &=\Delta H(\Phi_{c,t}^{st}|t,Y) - \Delta H(\Phi_{c,t}^{st},t|\Phi_{c,t-1}^{st},t-1,Y)\\
        &=\underbrace{\left(H(Z_{rc,t}^{st}|Z_{lc,t}^{st},t,Y)-H(Z_{rc,t}^{st}|Z_{lc,t}^{st},t,Z_{lc,t-1}^{st},Z_{rc,t-1}^{st},t-1,Y) \right)}_{\text{term 1}}\\
        &+\underbrace{\left(H(Z_{lc,t}^{st},t|Z_{ls,t}^{st},Z_{lc,t-1}^{st},Z_{ls,t-1}^{st},t-1,Y)-H(Z_{lc,t}^{st},t|Z_{rc,t}^{st},Z_{lc,t-1}^{st},Z_{rc,t-1}^{st},t-1,Y)  \right)}_{\text{term 2}}\\
        &+\underbrace{\left(H(Z_{ls,t}^{st}|Z_{lc,t}^{st},t,Z_{lc,t-1}^{st},Z_{ls,t-1}^{st},t-1,Y)-H(Z_{ls,t}^{st}|Z_{lc,t}^{st},t,Y) \right)}_{\text{term 3}}\,.
    \end{aligned} 
    \end{equation}
    For term 1 in Eq.~\eqref{appx:eq5}, since conditioning reduces the entropy for both discrete and continuous variables, we can conclude that the value of term 1 is greater that $0$.
    For term 2, since the similarity between causal factors is generally higher than the similarity between causal factors and spurious factors, the entropy of the second half is smaller than that of the first half.
    Therefore, the value of term 2 is greater than $0$.
    For term 3, it can be rewritten as $-I(Z_{ls,t}^{st},Z_{ls,t-1}^{st},Z_{lc,t-1}^{st},t-1|Z_{lc,t}^{st},t,Y)$. 
    When $t$ is used as a condition, it is equivalent to giving all the information of $\mathcal{D}_t$.
    At this time, the spurious factors at time $t$ and the factors at time $t-1$ are independent of each other, so the value of term 3 is 0.
    Based on the above analysis, we have
    \begin{equation}\label{appx:eq6}
        I(Z_{c,t}^{st};Z_{c,t-1}^{st}|Y)-I(\Phi_{c,t}^{st};\Phi_{c,t-1}^{st}|Y) > 0\,.
    \end{equation}
    The above inequality indicates that maximizing the conditional MI $I(\Phi_{c,t}^{st};\Phi_{c,t-1}^{st}|Y)$ will allow $\Phi_c^{st}(\cdot)$ to capture static causal factors $Z_c^{st}$. 
    Hence, Lem.~\ref{appx:lemma1} is proved.
\end{proof}
We have known that $\Phi_c^{dy}(\cdot)$ can extract dynamic category-related information by optimizing the cross-entropy loss between $\Phi_c^{dy}(X)$ and target $Y$.
The following Lem.~\ref{appx:lemma2} shows that dynamic causal factors $Z_c^{dy}$ can be learned by introducing the intra-domain conditional MI constraints.

\renewcommand{\thesection}{\Alph{section}}

\subsubsection{Proof of Lemma 2}
\label{appx:theorem2_part3}
\renewcommand{\thesection}{\arabic{section}} 
\setcounter{lemma}{1}
\begin{lemma}\label{appx:lemma2}
    For a time domain $\mathcal{D}_t$, let $\Phi_c^{st,\star}(X_t)$ be the static causal representations learned by the model, then the dynamic causal factors of $X_t$ can be obtained through the following objective:
    \begin{equation}\label{appx:eq11}
        \max_{\Phi_c^{dy}} I(Y;\Phi_c^{dy}(X_t)), \quad s.t. \ \Phi_c^{dy}\in \argmax_{\Phi_c^{dy}} I(\Phi_c^{dy}(X_t);\Phi_c^{st,\star}(X_{t})|Y)\,.
    \end{equation}
\end{lemma}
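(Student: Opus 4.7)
The plan is to mirror the proof of Lemma~\ref{appx:lemma1}, substituting the cross-time anchor $\Phi_c^{st}(X_{t-1})$ with the intra-domain anchor $\Phi_c^{st,\star}(X_t)$, which by Lemma~\ref{appx:lemma1} coincides with the ground-truth static causal factors $Z_{c,t}^{st}$. The key lever is Proposition~\ref{prop1}\ref{prop. 2} in place of Proposition~\ref{prop1}\ref{prop. 1}, since it is the intra-domain analogue relating dynamic causal and dynamic spurious factors to a static causal anchor.

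First, I would lift the constraint's conditional MI to the proxy form $I(\Phi_{c,t}^{dy},t;Z_{c,t}^{st},t\mid Y)$ exactly as in Eq.~\eqref{appx:eq1}, write the learned representation as $\Phi_c^{dy}(X_t)=(Z_{lc,t}^{dy},Z_{ls,t}^{dy})$ with $Z_{lc,t}^{dy}\subset Z_{c,t}^{dy}$, $Z_{ls,t}^{dy}\subset Z_{s,t}^{dy}$, and set $Z_{rc,t}^{dy}=Z_{c,t}^{dy}\setminus Z_{lc,t}^{dy}$. Expanding the four entropies $H(\Phi_{c,t}^{dy},t\mid Y)$, $H(Z_{c,t}^{dy},t\mid Y)$, $H(\Phi_{c,t}^{dy},t\mid Z_{c,t}^{st},Y)$, $H(Z_{c,t}^{dy},t\mid Z_{c,t}^{st},Y)$ by the chain rule (as in Eqs.~\eqref{appx:eq2}--\eqref{appx:eq4}) and subtracting yields
\begin{equation*}
\Delta I \;=\; I(Z_{c,t}^{dy};Z_{c,t}^{st}\mid Y)\;-\;I(\Phi_{c,t}^{dy};Z_{c,t}^{st}\mid Y)
\end{equation*}
as a sum of three entropy-difference terms, structurally analogous to Eq.~\eqref{appx:eq5}.

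Second, I would bound each term. Term~1, a residual-causal piece $H(Z_{rc,t}^{dy}\mid Z_{lc,t}^{dy},t,Y)-H(Z_{rc,t}^{dy}\mid Z_{lc,t}^{dy},t,Z_{c,t}^{st},Y)$, is nonnegative because conditioning cannot increase entropy. Term~2 compares $H(Z_{lc,t}^{dy},t\mid Z_{ls,t}^{dy},Z_{c,t}^{st},Y)$ with $H(Z_{lc,t}^{dy},t\mid Z_{rc,t}^{dy},Z_{c,t}^{st},Y)$, and is strictly positive precisely under the entropy hypothesis of Proposition~\ref{prop1}\ref{prop. 2}, which asserts that dynamic causal factors are more tightly coupled with static causal factors than dynamic spurious factors are, given the label. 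Term~3 is a residual-spurious piece involving $Z_{ls,t}^{dy}$. Once $\Delta I>0$ is established, the constraint forces $\Phi_c^{dy}(X_t)\to Z_{c,t}^{dy}$, and the outer objective $\max I(Y;\Phi_c^{dy}(X_t))$ then acts as a supervised refinement that discards any label-irrelevant residual inside $Z_{c,t}^{dy}$, exactly as in the closing step of Lemma~\ref{appx:lemma1}.

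The main obstacle is term~3. In the cross-time setting of Lemma~\ref{appx:lemma1}, the analogous term vanishes by temporal separation: once $t$ is fixed the spurious factors at $t$ are decoupled from anything at $t-1$. Here the anchor $Z_{c,t}^{st}$ and the spurious candidate $Z_{ls,t}^{dy}$ inhabit the same time slice, and the remark after Proposition~\ref{prop1} already flags that conditioning on $Y$ opens the collider $Z_c^{st}\to Y\leftarrow Z_c^{dy}$, so $Z_{ls,t}^{dy}$ and $Z_{c,t}^{st}$ are not automatically independent given $Y$. I would discharge this by invoking d-separation on $\mathcal{M}_T$: once $t$ is fixed, $L$ is pinned down, and since $Z_{c,t}^{st}$ has only $G$ as parent while $Z_{ls,t}^{dy}$ descends from $L$, the remaining open paths between them route through $Y$ only; faithfulness then converts the d-separation outside this $Y$-collider channel into conditional independence, making term~3 vanish in the boundary case of the remark. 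For the general case, I would argue that the same entropy inequality that drives term~2 positive bounds the magnitude of term~3 below term~2, so that $\Delta I>0$ still holds and the proof of Lemma~\ref{appx:lemma2} closes in parallel with that of Lemma~\ref{appx:lemma1}.
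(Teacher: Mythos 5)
Your plan diverges from the paper's proof in a way that creates a gap you acknowledge but do not close. The paper does \emph{not} lift $I(\Phi_{c,t}^{dy};\Phi_{c,t}^{st,\star}\mid Y)$ to a time-proxy form with $t$, and it does not reproduce the three-term decomposition of Lemma~\ref{appx:lemma1}. Instead it expands the two conditional MIs directly as $H(\cdot\mid Y)+H(Z_{c,t}^{st}\mid Y)-H(\cdot,Z_{c,t}^{st}\mid Y)$, cancels the common pieces, and regroups so that
\begin{equation*}
\Delta I \;=\; \bigl(H(Z_{rc,t}^{dy}\mid Z_{lc,t}^{dy},Y)-H(Z_{rc,t}^{dy}\mid Z_{lc,t}^{dy},Z_{c,t}^{st},Y)\bigr)\;-\;\bigl(H(Z_{ls,t}^{dy}\mid Z_{lc,t}^{dy},Y)-H(Z_{ls,t}^{dy}\mid Z_{lc,t}^{dy},Z_{c,t}^{st},Y)\bigr)\,,
\end{equation*}
that is, a difference of two nonnegative entropy-decay quantities, $I(Z_{rc,t}^{dy};Z_{c,t}^{st}\mid Z_{lc,t}^{dy},Y)-I(Z_{ls,t}^{dy};Z_{c,t}^{st}\mid Z_{lc,t}^{dy},Y)$, and a \emph{single} comparative argument (causal factors are more strongly coupled to $Z_{c,t}^{st}$ than spurious ones) closes the proof. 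The time-proxy $t$ and the three-term split are artifacts of the \emph{cross-domain} setting in Lemma~\ref{appx:lemma1}, where both MI arguments are learned and live at different times; here the anchor $\Phi_c^{st,\star}(X_t)$ is fixed and intra-domain, so that machinery is not the natural one.

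The concrete gap in your proposal is the treatment of your term~3. You correctly observe that the Lemma~\ref{appx:lemma1} argument for making it vanish — spurious factors at time $t$ decoupled from everything at time $t-1$ once $t$ is given — has no intra-domain analogue, and that conditioning on $Y$ opens the collider so $Z_{ls,t}^{dy}\not\independent Z_{c,t}^{st}\mid Y$ in general. Your attempted repair has two problems. First, the d-separation argument you sketch does not make term~3 vanish: you yourself concede it only does so ``in the boundary case,'' and in the generic case the open $Y$-collider channel keeps $I(Z_{ls,t}^{dy};Z_{c,t}^{st}\mid Z_{lc,t}^{dy},t,Y)$ strictly positive. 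Second, the fallback claim that ``the same entropy inequality that drives term~2 positive bounds the magnitude of term~3 below term~2'' is not implied by Proposition~\ref{prop1}\ref{prop. 2}: that proposition compares $H(Z_{c,t}^{st}\mid Z_{c,t}^{dy},Y)$ against $H(Z_{c,t}^{st}\mid Z_{s,t}^{dy},Y)$, whereas your term~2 and term~3 are different conditional entropies (one about $Z_{lc,t}^{dy}$ under alternative conditioning sets, the other an MI between $Z_{ls,t}^{dy}$ and $Z_{c,t}^{st}$), so there is no direct implication linking them. The fix is to abandon the Lemma~\ref{appx:lemma1} template and use the paper's two-term regrouping, in which the problematic piece never appears as a standalone obstruction — it is absorbed into one of the two entropy-decay quantities being compared.
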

\begin{proof}
    For simplicity, we use $\Phi_{c,t}^{st}$ and $\Phi_{c,t}^{dy}$ to represent $\Phi_c^{st}(X_t)$ and $\Phi_c^{dy}(X_t)$, respectively.
    Let $Z_c^{dy}$ and $Z_s^{dy}$ be the ground truth dynamic causal factors and dynamic spurious factors, respectively, while $Z_{lc}^{dy} \subset Z_c^{dy}$ and $Z_{ls}^{dy} \subset Z_s^{dy}$ represent the dynamic causal factors and dynamic spurious factors learned by the model.
    After that, we define $Z_{rc}^{dy}=Z_c^{dy}-Z_{lc}^{dy}$ and $Z_{rs}^{dy}=Z_s^{dy}-Z_{ls}^{dy}$ as the remaining dynamic causal factors and dynamic spurious factors.
    For $I(\Phi_{c,t}^{dy};\Phi_{c,t}^{st,\star}|Y)$, suppose that the representations $\Phi_{c,t}^{dy}$ learned by the model contain both causal information and spurious information, \textit{i.e.}, $\Phi_{c,t}^{dy}=(Z_{lc,t}^{dy}, Z_{ls,t}^{dy})$, then we rewrite it as:
    \begin{equation}
    \begin{aligned}
        I(\Phi_{c,t}^{dy};\Phi_{c,t}^{st,\star}|Y)&=I(Z_{lc,t}^{dy}, Z_{ls,t}^{dy};Z_{c,t}^{st}|Y)
        =H(Z_{lc,t}^{dy},Z_{ls,t}^{dy}|Y)+H(Z_{c,t}^{st}|Y)-H(Z_{lc,t}^{dy}, Z_{ls,t}^{dy},Z_{c,t}^{st}|Y)\,.
    \end{aligned}
    \end{equation}
    Analogously, $I(Z_{c,t}^{dy};\Phi_{c,t}^{st,\star}|Y)$ is expanded as:
    \begin{equation}
        I(Z_{c,t}^{dy};\Phi_{c,t}^{st,\star}|Y)=H(Z_{lc,t}^{dy},Z_{rc,t}^{dy}|Y)+H(Z_{c,t}^{st}|Y)-H(Z_{lc,t}^{dy}, Z_{rc,t}^{dy},Z_{c,t}^{st}|Y)\,.
    \end{equation}
    Hence, we have:
    \begin{equation}\label{appx:eq7}
    \begin{aligned}
        \Delta I(\Phi_{c,t}^{dy};\Phi_{c,t}^{st,\star}|Y)&= I(Z_{c,t}^{dy};\Phi_{c,t}^{st,\star}|Y) - I(\Phi_{c,t}^{dy};\Phi_{c,t}^{st,\star}|Y) \\
        &=\underbrace{\left(H(Z_{lc,t}^{dy},Z_{rc,t}^{dy}|Y)-H(Z_{lc,t}^{dy},Z_{ls,t}^{dy}|Y) \right)}_{\Delta H_1} - \underbrace{\left(H(Z_{lc,t}^{dy}, Z_{rc,t}^{dy},Z_{c,t}^{st}|Y)-H(Z_{lc,t}^{dy}, Z_{ls,t}^{dy},Z_{c,t}^{st}|Y) \right)}_{\Delta H_2}\,.
    \end{aligned}
    \end{equation}
    For $\Delta H_1$, we have:
    \begin{equation}\label{appx:eq8}
    \begin{aligned}
        \Delta H_1&=H(Z_{lc,t}^{dy},Z_{rc,t}^{dy}|Y)-H(Z_{lc,t}^{dy},Z_{ls,t}^{dy}|Y)\\
        &=\left(H(Z_{lc,t}^{dy}|Y) + H(Z_{rc,t}^{dy}|Z_{lc,t}^{dy},Y)  \right) - \left(H(Z_{lc,t}^{dy}|Y) + H(Z_{ls,t}^{dy}|Z_{lc,t}^{dy},Y)  \right) \\
        &=H(Z_{rc,t}^{dy}|Z_{lc,t}^{dy},Y)-H(Z_{ls,t}^{dy}|Z_{lc,t}^{dy},Y)\,,
    \end{aligned}
    \end{equation}
    As for $\Delta H_2$, it can be known that:
    \begin{equation}\label{appx:eq9}
    \begin{aligned}
        \Delta H_2&=H(Z_{lc,t}^{dy}, Z_{rc,t}^{dy},Z_{c,t}^{st}|Y)-H(Z_{lc,t}^{dy}, Z_{ls,t}^{dy},Z_{c,t}^{st}|Y)\\
        &=\left(H(Z_{rc,t}^{dy}|Z_{lc,t}^{dy}, Z_{c,t}^{st},Y)+H(Z_{lc,t}^{dy}, Z_{c,t}^{st}|Y)  \right) - \left(H(Z_{ls,t}^{dy}|Z_{lc,t}^{dy}, Z_{c,t}^{st},Y)+H(Z_{lc,t}^{dy}, Z_{c,t}^{st}|Y)  \right)\\
        &=H(Z_{rc,t}^{dy}|Z_{lc,t}^{dy}, Z_{c,t}^{st},Y)-H(Z_{ls,t}^{dy}|Z_{lc,t}^{dy}, Z_{c,t}^{st},Y)\,.
    \end{aligned}
    \end{equation}
    By substituting Eq.~\eqref{appx:eq8} and Eq.~\eqref{appx:eq9} into Eq.~\eqref{appx:eq7}, we obtain:
    \begin{equation}\label{appx:eq10}
    \begin{aligned}
        \Delta I(\Phi_{c,t}^{dy};\Phi_{c,t}^{st,\star}|Y) &= \Delta H_1 - \Delta H_2\\
        &=\left(H(Z_{rc,t}^{dy}|Z_{lc,t}^{dy},Y)-H(Z_{ls,t}^{dy}|Z_{lc,t}^{dy},Y)  \right) - \left(H(Z_{rc,t}^{dy}|Z_{lc,t}^{dy}, Z_{c,t}^{st},Y)-H(Z_{ls,t}^{dy}|Z_{lc,t}^{dy}, Z_{c,t}^{st},Y)  \right)\\
        &=\underbrace{\left(H(Z_{rc,t}^{dy}|Z_{lc,t}^{dy},Y)-H(Z_{rc,t}^{dy}|Z_{lc,t}^{dy}, Z_{c,t}^{st},Y) \right)}_{\text{term 1}} - \underbrace{\left(H(Z_{ls,t}^{dy}|Z_{lc,t}^{dy},Y)-H(Z_{ls,t}^{dy}|Z_{lc,t}^{dy}, Z_{c,t}^{st},Y)  \right)}_{\text{term 2}}\,.
    \end{aligned}
    \end{equation}
    From Eq.~\eqref{appx:eq10}, since causal factors typically exhibit higher similarity to each other than to spurious factors, for a new given $Z_{c,t}^{st}$, the entropy decay with respect to $Z_{rc,t}^{dy}$ is generally greater than the entropy decay with respect to $Z_{ls,t}^{dy}$.
    Therefore, it can be derived that
    \begin{equation}
         I(Z_{c,t}^{dy};\Phi_{c,t}^{st,\star}|Y) - I(\Phi_{c,t}^{dy};\Phi_{c,t}^{st,\star}|Y) = \Delta I(\Phi_{c,t}^{dy};\Phi_{c,t}^{st,\star}|Y) > 0\,,
    \end{equation}
    Hence, dynamic causal factors $Z_{c,t}^{dy}$ is the maximizer of Eq.~\eqref{appx:eq11}. This completes the proof.
\end{proof}
\renewcommand{\thesection}{\Alph{section}}

\subsubsection{Proof of the Main Result}
\renewcommand{\thesection}{\arabic{section}} 
\setcounter{theorem}{1}
\label{appx:theorem2_part4}
\begin{theorem}
    Let $\Phi_c^{st}(X_t),\Phi_{c}^{dy}(X_t)$ and $Z_t^d$ denote the representations and drift factors at the time domain $\mathcal{D}_t$, obtained by training the network through the optimization of $\mathcal{L}_{\text{SYNC}}$.
    Then the predictor constructed upon these components is the optimal causal predictor as defined in Def.~\ref{appx:ocp}.
\end{theorem}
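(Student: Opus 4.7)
The plan is to verify the two defining conditions of the optimal causal predictor in Def.~\ref{appx:ocp} separately, by attributing each of them to a distinct component of $\mathcal{L}_{\text{SYNC}}$, and then assembling them using the conditional independence statement given by Prop.~\ref{appx:prop_ci}. The overall argument proceeds in four steps.

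First, I would invoke Thm.~\ref{theorem:elbo} to conclude that optimizing $\mathcal{L}_{\text{evolve}}$ fits $p(\boldsymbol{x}_{1:T},y_{1:T})$ under the generative model induced by $\mathcal{M}_T$ and $\mathcal{M}_{evo}$. This gives variational posteriors $q_\psi, q_\theta, q_\zeta$ whose samples $\boldsymbol{z}_t^{st},\boldsymbol{z}_t^{dy},\boldsymbol{z}_t^d$ are valid (up to identifiability of the latent factorization) representations of the underlying $Z^{st}, Z^{dy}, Z^d$. In particular, the drift factor $Z_t^d$ appearing in Def.~\ref{appx:ocp} is supplied by $q_\zeta$ together with $\mathcal{L}_{\text{mp}}$.

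Second, I would use $\mathcal{L}_{\text{MI}}$ to argue that the static and dynamic branches carry disjoint information, i.e.\ $\boldsymbol{z}_t^{st}$ does not leak dynamic content and vice versa. With disentanglement secured, I can treat $\Phi_c^{st}$ and $\Phi_c^{dy}$ as masked subvectors of the static/dynamic latents without cross-contamination, so the two causal-extraction objectives $\mathcal{L}_{\text{stc}}$ and $\mathcal{L}_{\text{dyc}}$ act on the correct halves of the latent space. At this point Lem.~\ref{appx:lemma1} applies to the cross-domain term $\mathcal{L}_{\text{stc}}$ (paired with the cross-entropy in $\mathcal{L}_{\text{mp}}$, which realizes $\max I(Y;\Phi_c^{st}(X_t))$), yielding $\Phi_c^{st}(X_t)=Z_{c,t}^{st}$; and Lem.~\ref{appx:lemma2} applies to the intra-domain term $\mathcal{L}_{\text{dyc}}$ using the just-obtained $\Phi_c^{st,\star}$ as the anchor, yielding $\Phi_c^{dy}(X_t)=Z_{c,t}^{dy}$. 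Setting $Z_{c,t}=(Z_{c,t}^{st},Z_{c,t}^{dy})$, the sufficiency condition $Z_{c,t}\in\argmax I(Y;Z_{c,t},Z_t^d)$ in Def.~\ref{appx:ocp} follows because the cross-entropy term in $\mathcal{L}_{\text{mp}}$ exactly maximizes the conditional likelihood $\log p(y_t\mid \boldsymbol{z}_{c,t}^{st},\boldsymbol{z}_{c,t}^{dy},\boldsymbol{z}_t^d)$, which is equivalent to maximizing $I(Y;Z_{c,t},Z_t^d)$ up to the constant $H(Y)$.

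Third, for the independence requirement $Y\independent [Z_{s,t}^{st},Z_{s,t}^{dy}]\given [Z_{c,t},Z_t^d]$, I would call Prop.~\ref{appx:prop_ci}: once the extractors have isolated the true causal latents of $\mathcal{M}_T$, every path from the spurious factors to $Y$ is blocked by $(Z_c^{st},Z_c^{dy},Z^d)$ under global Markov and faithfulness. Combining with the sufficiency established in step two, the predictor built on $(\Phi_c^{st}(X_t),\Phi_c^{dy}(X_t),Z_t^d)$ satisfies both clauses of Def.~\ref{appx:ocp}, and is therefore the optimal causal predictor at $\mathcal{D}_t$.

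The main obstacle I anticipate is the sequencing and non-interference of the three loss components: Lem.~\ref{appx:lemma2} is stated given an already-optimal $\Phi_c^{st,\star}$, whereas in $\mathcal{L}_{\text{SYNC}}$ all terms are optimized jointly. I would handle this by arguing that, at a stationary point of the total objective, the stc-term alone drives $\Phi_c^{st}$ to the static causal factors (Lem.~\ref{appx:lemma1}), after which the dyc-term sees an effectively-optimal anchor and Lem.~\ref{appx:lemma2} applies verbatim; the MI disentanglement loss guarantees this decoupling. A secondary concern is tying the contrastive/Gumbel-Softmax surrogates in Eqs.~\eqref{equation:cmi_1_approx} and \eqref{equation:cmi_2_approx} to the exact conditional mutual informations used in the lemmas, which I would treat as a standard InfoNCE lower-bound argument and defer to the assumptions underlying the lemmas rather than re-derive here.
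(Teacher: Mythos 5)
Your proposal takes essentially the same route as the paper's proof: Lemmas~\ref{appx:lemma1} and~\ref{appx:lemma2} deliver the static and dynamic causal factors, the cross-entropy term inside $\mathcal{L}_{\text{evolve}}$ realizes the $\max I(Y;Z_{c,t},Z_t^d)$ clause, $\mathcal{L}_{\text{evolve}}$ also supplies the drift factor $Z_t^d$, and Prop.~\ref{appx:prop_ci} (d-separation under global Markov and faithfulness) gives the conditional independence clause of Def.~\ref{appx:ocp}. The extra care you take in explicitly invoking Thm.~\ref{theorem:elbo} and $\mathcal{L}_{\text{MI}}$ as prerequisites, and in flagging both the joint-versus-sequential optimization issue for Lemma~\ref{appx:lemma2}'s anchor and the gap between the InfoNCE/Gumbel surrogates and the exact conditional mutual informations, is sound and honestly names slack that the paper's proof glosses over rather than resolves.
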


\vspace{-10pt}
\begin{proof}
    As $\mathcal{L}_{\text{SYNC}}=\mathcal{L}_{\text{evolve}}+\alpha_1 \mathcal{L}_{\text{MI}}+\alpha_2 \mathcal{L}_{\text{causal}}$, according to Lem.~\ref{appx:lemma1} and Lem.~\ref{appx:lemma2}, by minimizing losses $\mathcal{L}_{\text{MI}}$, $\mathcal{L}_{\text{causal}}$ and the cross-entropy loss in $\mathcal{L}_{\text{evolve}}$, $\Phi_c^{st}(X_t)$ and $\Phi_{c}^{dy}(X_t)$ will gradually approach the ground truth static causal factors $Z_c^{st}$ and dynamic causal factors $Z_c^{dy}$, which exclude any spurious factors.
    In addition, optimizing the loss $\mathcal{L}_{\text{evolve}}$ make the drift factors $Z_t^d$ approach the drift of causal mechanisms.
    According to d-separation, We can conclude that the predictor built on top of these satisfies $Y \independent [Z_s^{st},Z_s^{dy}] \given [\Phi_c^{st}(X_t),\Phi_{c}^{dy}(X_t),Z_t^d]$.
    Meanwhile, the mutual information between representations and the target can be maximized by minimizing the cross-entropy loss in $\mathcal{L}_{\text{evolve}}$.
    Therefore, it can be concluded that the resulting predictor satisfies the criteria for the optimal causal predictor as defined in Def.~\ref{appx:ocp}.
    This completes the proof.
\end{proof}

\renewcommand{\thesection}{\Alph{section}}

\section{Implementation Details}
\subsection{Mini-batch Weighted Sampling}
Mini-batch weighted sampling (MWS) strategy, proposed by~\cite{chen2018isolating}, introduces a biased negative entropy estimator based on the principle of importance sampling. Although biased, this estimator is straightforward to use for estimation and does not require any additional hyperparameters.
Suppose that the distribution of the dataset samples is $p(n)$, the dataset size is $N$, and let $\mathcal{B}_M=\{n_1,\cdots,n_M\}$ be a mini-batch of $M$ indices where each element is sampled i.i.d. from $p(n)$.
Let $p(\mathcal{B}_M)$ be the probability of any mini-batch $\mathcal{B}_M$ sampled from the dataset, then
\begin{equation}
\begin{aligned}
    \mathbb{E}_{q(\boldsymbol{z})}[\log q(\boldsymbol{z})]&=\mathbb{E}_{q(\boldsymbol{z},n)}[\log q(\boldsymbol{z}(n))]\\
    &=\mathbb{E}_{q(\boldsymbol{z},n)}[\log \mathbb{E}_{n^{\prime}\sim p(n)} q(\boldsymbol{z}(n)|n^{\prime})]\\
    &=\mathbb{E}_{q(\boldsymbol{z},n)}[\log \mathbb{E}_{\mathcal{B}_M\sim p(\mathcal{B}_M)} [\frac{1}{M}\sum_{j=1}^M q(\boldsymbol{z}(n)|n_j)]]\\
    &\ge \mathbb{E}_{q(\boldsymbol{z},n)}[\log \mathbb{E}_{\mathcal{B}_M\sim p(\mathcal{B}_M|n)}[\frac{p(\mathcal{B}_M)}{p(\mathcal{B}_M|n)} \sum_{j=1}^M q(\boldsymbol{z}(n)|n_j)]]\\
    &=\mathbb{E}_{q(\boldsymbol{z},n)}[\log \mathbb{E}_{\mathcal{B}_M\sim p(\mathcal{B}_M|n)}[\frac{1}{NM} \sum_{j=1}^M q(\boldsymbol{z}(n)|n_j)]]\,,
\end{aligned}
\end{equation}
therefore, $\mathbb{E}_{q(\boldsymbol{z})}\log q(\boldsymbol{z})$ can be estimated using the following estimator:
\begin{equation}
    \mathbb{E}_{q(\boldsymbol{z})}\log q(\boldsymbol{z})=\frac{1}{M}\sum_{i=1}^M[\log [\frac{1}{NM} \sum_{j=1}^M q(\boldsymbol{z}(n_i)|n_j)]]\,.
\end{equation}

\subsection{Gumbel-Softmax Trick}
To sample a $k$-hot vector from an $N$-dimensional vector, we use the Gumbel-Softmax trick~\cite{gumbel} and perform $k$ samplings without replacement.
Specifically, let $\boldsymbol{s}$ be a score vector of dimension $N$, for the $i$-th ($i\in [k]$) sampling, using the Gumbel-Softmax trick, the sampled vector $\boldsymbol{m}^i$ is obtained by
\begin{equation}
\begin{aligned}
    \boldsymbol{m}_j^i&=\frac{\exp((\log \boldsymbol{s}_j^i + \xi_j^i)/\tau)}{\sum_{j^{\prime}=1}^{N} \exp((\log \boldsymbol{s}_{j^{\prime}}^i + \xi_{j^{\prime}}^i)/\tau)}\,,\quad
    \xi_j^i=-\log(-\log u_j^i), u_j^i\sim \text{Uniform(0, 1)}\,,
\end{aligned}
\end{equation}
where $\tau$ is the temperature hyperparameter.
Following a common setting~\cite{chen2018learning}, we set $\tau=0.5$.
After each sampling, we find the position of the largest element $p=\text{argmax}_{j\in [N]}\ \boldsymbol{m}_j^i$ and set 
$\boldsymbol{s}[p]$ to a very small value to acquire scores $\boldsymbol{s}^{i+1}$, then continue with the next sampling. This process continues for $k$ samplings, and the sampled $k$-hot vector $\boldsymbol{m}=\text{Gumbel-Softmax}(\boldsymbol{s},k)$ is obtained by 
$\boldsymbol{m}_j = \max_{l\in [k]} \ \boldsymbol{m}_j^l$.

\subsection{Network Structure}
\label{appx:network_details}
For the static variational encoding network $q_{\psi}$, we utilize a feature extractor to implement.
The dynamic variational encoding network $q_{\theta}$ can be implemented using a feature extractor with the same architecture as $q_{\psi}$ but without sharing network parameters, followed by an LSTM network.
The prior $p(\boldsymbol{z}_t^{dy}|\boldsymbol{z}_{<t}^{dy})$ is obtained by $F^{dy}$, using a one-layer LSTM network.
We implement $q_{\zeta}$ by a dynamic inference network, which takes the one-hot code of $y_t$ as the input and output the categorical distribution.
The prior $p(\boldsymbol{z}_t^d|\boldsymbol{z}_{<t}^d)$ is acquired in a manner similar way to $p(\boldsymbol{z}_t^{dy}|\boldsymbol{z}_{<t}^{dy})$, and is modeled by a prior netweotk $F^d$.
In order to reconstruct the data $\boldsymbol{x}_{1:T}$ and the labels $y_{1:T}$, we utilize the appropriate decoder $D$ and linear classifier $W$.
To ensure fairness, the feature extractor and decoder in each task are used the same as LSSAE~\cite{LSSAE}.
The masker is implemented by a $3$-layer MLP which is randomly initialized.
During training stage, the sampling of $\boldsymbol{z}_t^{st}$ and $\boldsymbol{z}_{t}^{dy}$ are performed using the reparameterization trick~\cite{VAE}, and $\boldsymbol{z}_t^{d}$ is acquired using the Gumbel-Softmax
reparameterization trick~\cite{gumbel}.

\subsection{Model Inference}
In this work, to make the prediction of $\boldsymbol{x}_t$ sampled from the target domain $\mathcal{D}_t$, we need to leverage the static causal representation $\Phi_{c,t}^{st}$, the dynamic causal representation $\Phi_{c,t}^{dy}$ and the drift factor $Z_t^d$ simultaneously.
Therefore, during inference stage, we use $q_{\psi}$ to infer $\Phi_t^{st}$, and $\Phi_{c,t}^{st}$ is 
obtained through the masker $m_c^{st}$, \textit{i.e.},  $\Phi_{c,t}^{st}=m_c^{st}(\Phi_t^{st})$.
For $Z_t^d$, we employ the trained network of prior $p(\boldsymbol{z}_t^{dy}|\boldsymbol{z}_{<t}^{dy})$ to infer without requirement of $q_{\zeta}$.
To get $\Phi_{c,t}^{dy}$, we store the state variables of $q_{\theta}$ from the last training domain in the hidden state bank $\mathcal{B}$ at training time. 
Then, we randomly select a test batch from $\mathcal{B}$ and send to $q_{\theta}$ to infer $\Phi_{t}^{dy}$ from $\boldsymbol{x}_t$, and $\Phi_{c,t}^{dy}=m_c^{dy}(\Phi_{t}^{dy})$.
At the same time, the state variables in $\mathcal{B}$ is updated to the state variables of the LSTM network in the current domain, in preparation for the prediction of the next domain.
Ultimately, $\Phi_{c,t}^{st}$, $\Phi_{c,t}^{dy}$ and $Z_t^{d}$ are feed into the classifier $W$ for prediction.

\section{Algorithm of SYNC}
The training and testing procedures of SYNC are shown in Algorithm~\ref{alg:train} and~\ref{alg:test}.

\begin{algorithm}[ht!]
   \caption{Training procedure for SYNC}
   \label{alg:train}
\begin{algorithmic}[1]
    \STATE {\bfseries Input:}
    sequential source labeled datasets $\mathcal{S}$; static variational encoding network $q_{\psi}$; dynamic variational encoding networks $q_{\theta}$, $q_{\zeta}$ and their corresponding prior networks $F^{dy}$, $F^{d}$; decoder $D$ and classifier $W$; masker $m_c^{st}$ and $m_c^{dy}$; hidden state bank $\mathcal{B}$; total number of epoch $N_{epoch}$ and number of iterations per epoch $N_{it}$.
    \STATE Randomly initialize all the networks. 
    \FOR{epoch in $1, 2, \cdots,N_{epoch}$}
        \STATE Set the hidden state bank $\mathcal{B}=\emptyset$.
        \FOR{it in $1, 2, \cdots,N_{it}$}
            \STATE Sample training date $(\boldsymbol{x}_{1:T}, y_{1:T})$ from $\mathcal{S}_{1:T}$.
            \STATE Assign $\boldsymbol{z}_0^{dy} \leftarrow \boldsymbol{0},\boldsymbol{z}_0^{d} \leftarrow \boldsymbol{0}$, generate prior distributions $p(\boldsymbol{z}_{t}^{dy}|\boldsymbol{z}_{<t}^{dy}), p(\boldsymbol{z}_{t}^{d}|\boldsymbol{z}_{<t}^{d}), t=1,...,T$ via $F^{dy},F^d$.
            \STATE Generate posterior distribution $q(\boldsymbol{z}_t^{st}|\boldsymbol{x}_t)$ via $q_{\psi}$, generate posterior distribution $q(\boldsymbol{z}_t^{dy}|\boldsymbol{z}_{<t}^{dy},\boldsymbol{x}_t), t=1,2,...,T$ and hidden state $\boldsymbol{st}_T$ via $q_{\theta}$, generate posterior distribution $q(\boldsymbol{z}_t^{d}|\boldsymbol{z}_{<t}^{d},y_t), t=1,2,...,T$ via $q_{\zeta}$.
            \STATE Send $\hat{\boldsymbol{z}}_{1:T}^{st}$ and $\hat{\boldsymbol{z}}_{1:T}^{dy}$ sampled from the posterior distributions of $q(\boldsymbol{z}_t^{st}|\boldsymbol{x}_t), t=1,2,...,T$ and $q(\boldsymbol{z}_t^{dy}|\boldsymbol{z}_{<t}^{dy},\boldsymbol{x}_t), t=1,2,...,T$, along with the means of the posterior distributions $\boldsymbol{\mu}_{1:T}^{st}$ and $\boldsymbol{\mu}_{1:T}^{dy}$, to the corresponding mask module.
            \STATE $\triangleright$ Calculate $\mathcal{L}_{\text{evolve}}$ for $q_{\psi},q_{\theta},q_{\zeta},F^{dy},F^d,D,W,m_c^{st}$ and $m_c^{dy}$.
            \STATE $\triangleright$ Calculate $\mathcal{L}_{\text{MI}}$ for $q_{\psi},q_{\theta},q_{\zeta},F^{dy},F^d$.
            \STATE $\triangleright$ Calculate $\mathcal{L}_{\text{causal}}$ for $m_c^{st}$ and $m_c^{dy}$.
            \STATE Update all modules by the summary of these loss.
        \ENDFOR
    \ENDFOR
\end{algorithmic}
\end{algorithm}

\begin{algorithm}[ht!]
   \caption{Testing procedure for SYNC}
   \label{alg:test}
\begin{algorithmic}[1]
    \STATE {\bfseries Input:} sequential target datasets $\mathcal{T}$, static variational encoding network $q_{\psi}$; dynamic variational encoding networks $q_{\theta}$; prior network $F^d$; masker $m_c^{st}$ and $m_c^{dy}$; classifier $W$; hidden state bank $\mathcal{B}$.
    \STATE Create a new bank $\mathcal{B}^{\prime}$.
    \FOR{$\mathcal{D}_t \in \mathcal{T}$}
        \STATE Set $\mathcal{B}^{\prime}=\emptyset$.
        \FOR{$\boldsymbol{x} \in \mathcal{D}_t$}
            \STATE Randomly sample a hidden state $\boldsymbol{st}$ from the hidden state bank $\mathcal{B}$. 
            \STATE Obtain $\boldsymbol{\mu}^{dy}$ and $\boldsymbol{st}'$ by feeding $\boldsymbol{x}$ and $\boldsymbol{st}$ into $q_{\theta}$. Store $\boldsymbol{st}'$ into $\mathcal{B}'$.
            \STATE Obtain $\boldsymbol{\mu}^{st}$ by feeding $\boldsymbol{x}$ into $q_{\psi}$. Sample $\boldsymbol{z}^d$ via $F^d$.
            \STATE Send $\boldsymbol{\mu}^{st},\boldsymbol{\mu}^{dy}$ to the corresponding mask module, and use the resulting representations along with $\boldsymbol{z}^d$ to make prediction via $W$.
        \ENDFOR
        \STATE Assign $\mathcal{B}\gets \mathcal{B}'$.
    \ENDFOR

\end{algorithmic}
\end{algorithm}

\begin{figure*}[ht]
    \centering
    \includegraphics[width=1.0\linewidth]{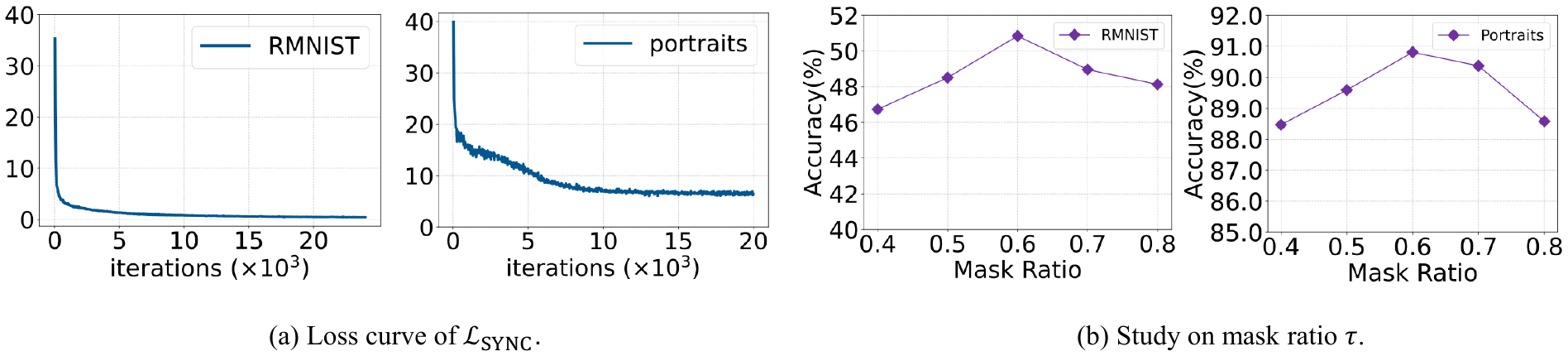}
    \hspace{-6pt}
    \vspace{-2mm}
    \caption{(a) The loss curve of $\mathcal{L}_{\text{SYNC}}$ on RMNIST and Portraits during the training phase.
    (b) Ablation study of SYNC to the mask ratio $\tau$ on RMNIST and Portraits.
    }
    
    \label{appx:fig1}
\end{figure*}

\section{More Experimental Details}
\subsection{More Details on Datasets}\label{appx:datasets}
\textbf{Circle}~\cite{pesaranghader2016fast} contains evolving $30$ domains ($15$ source domains, $5$ validation domains, and $10$ target domains), where the instance are sampled from $30$ 2D Gaussian distributions.

\textbf{Sine}~\cite{pesaranghader2016fast} includes $24$ evolving domains ($12$ source domains, $4$
validation domains, and $8$ target domains), which is achieved by extending and rearranging the original dataset.

\textbf{Rotated MNIST (RMNIST)}~\cite{ghifary2015domain} is composed of MNIST digits with various rotations. 
We follow the approach outlined in ~\cite{LSSAE} and extend it to $19$ evolving domains ($10$ source domains, $3$ validation domains, and $6$ target domains) by applying the rotations with degree of $\{0^\circ, 15^\circ, 30^\circ, \cdots, 180^\circ\}$ in order on each domain. 

\textbf{Portraits}~\cite{yao2022wild} is a real-word dataset consisting of photos of American high school seniors over $108$ years ($1905$-$2013$) across $26$ states. 
We divide it into $34$ evolving domains ($19$ source domains, $5$ validation domains, and $10$ target domains) by a fixed internal over time. 

\textbf{Caltran}~\cite{hoffman2014continuous} is a surveillance dataset captured by fixed traffic cameras deployed at intersections. 
It is split into $34$ domains ($19$ source domains, $5$ validation domains, and $10$ target domains) by time to predict the type of scene. 

\textbf{PowerSupply}~\cite{dau2019ucr} is created by an Italian electricity company and contains $30$ evolving domains ($15$ source domains, $5$ validation domains, and $10$ target domains) for predicting the current power supply based on the hourly records. 

\textbf{ONP}~\cite{fernandes2015proactive} 
 dataset documents articles collected from the Mashable website within two years ($2013$-$2015$), aiming to predict the number of shares in social networks. This dataset is divided into $24$ domains ($12$ source domains, $4$ validation domains, and $8$ target domains) according to month.


\subsection{Training Details}
\label{appx:training_details}
\begin{table}[ht]
\centering
\caption{Training details on different datasets.}
\vspace{-5pt}
\label{appx:trainingdetails}
\begin{tabular}{lcccccccc}
\hline
\textbf{Dataset} & \textbf{B} & \textbf{Epochs} & \textbf{Optimizer} & \textbf{Learning Rate} & $\alpha_1$ & $\alpha_2$ & $\tau$ & $N$ \\
\hline
Circle & 64 & 30 & Adam & 5e-6 & 1 & 0.02 & 0.6 & 20  \\
Sine & 64 & 50 & Adam & 1e-5 & 1 & 0.001 & 0.6 & 32  \\
RMNIST & 48 & 120 & Adam & 5e-4 & 0.005 & 0.001 & 0.6 & 32  \\
Portraits & 24 & 100 & Adam & 5e-5 & 0.05 & 0.002 & 0.6 & 32  \\
Caltran & 24 & 100 & Adam & 1e-5 & 0.005 & 0.002 & 0.6 & 32  \\
PowerSupply & 64 & 50 & Adam & 5e-6 & 0.001 & 0.01 & 0.6 & 32  \\
ONP & 64 & 50 & Adam & 5e-6 & 0.001 & 0.01 & 0.6 & 32  \\
\hline
\end{tabular}
\end{table}
\vspace{-5pt}
Training details on different datasets are shown in Table~\ref{appx:trainingdetails}, where \textbf{B} denotes the batch size, $\alpha_1$ and $\alpha_2$ represent the trade-off hyper-parameter for the loss function $\mathcal{L}_{\text{MI}}$ and $\mathcal{L}_{\text{causal}}$, respectively.
$\tau$ is the mask ratio of the masker and $N$ represents the dimension of the latent space.
All experiments in this work are performed on a single NVIDIA GeForce RTX 4090 GPU with 24GB memory, using the PyTorch packages.
Following~\cite{LSSAE}, the intermediate domains are utilized as validation set for model selection.

\subsection{Additional Results}


\subsubsection{Computational Cost}

To evaluate the computational cost of our approach, we conduct experiments on the RMNIST and Portraits datasets, recording the memory cost and runtime per iteration.
As shown in Table~\ref{appx:memory} and Table~\ref{appx:time}, it can be found that our method is comparable among the considered methods.
Specifically, the two indicators of our method are roughly similar to those of LSSAE and MMD-LSAE, with the memory cost lower than that of MMD-LSAE and the runtime per iteration lower than that of GI.
Moreover, it is worth noting that SYNC consistently outperforms all baseline methods, achieving an average accuracy improvement of at least $1.2\%$ and the lowest accuracy improvement of 
$5.3\%$ across all datasets. This suggests that only a slight increase in computational cost is required to achieve performance improvements.

\begin{table}[ht]
\centering
\caption{Comparison of memory cost (GB) on different datasets.}
\vspace{-6pt}
\label{appx:memory}
\begin{tabular}{lcccc}
\hline
\textbf{Dataset} & LSSAE & MMD-LSAE & GI & SYNC  \\
\hline
RMNIST & 5.06 & 5.24 & 3.81 & 5.19   \\
Portraits & 17.01 & 17.20 & 14.48 & 17.14   \\
\hline
\end{tabular}
\end{table}
\begin{table}[ht!]
\centering
\caption{Comparison of runtime per iteration (s) during the training phase on different datasets.}
\vspace{-6pt}
\label{appx:time}
\begin{tabular}{lcccc}
\hline
\textbf{Dataset} & LSSAE & MMD-LSAE & GI & SYNC  \\
\hline
RMNIST & 0.12 & 0.13  & 2.80 & 0.17   \\
Portraits & 0.35 & 0.36 & 7.84 & 0.42  \\
\hline
\end{tabular}
\end{table}

\begin{figure*}[t]
    \centering
    \begin{subfigure}[b]{0.24\textwidth}
        \centering
        \includegraphics[width=\textwidth]{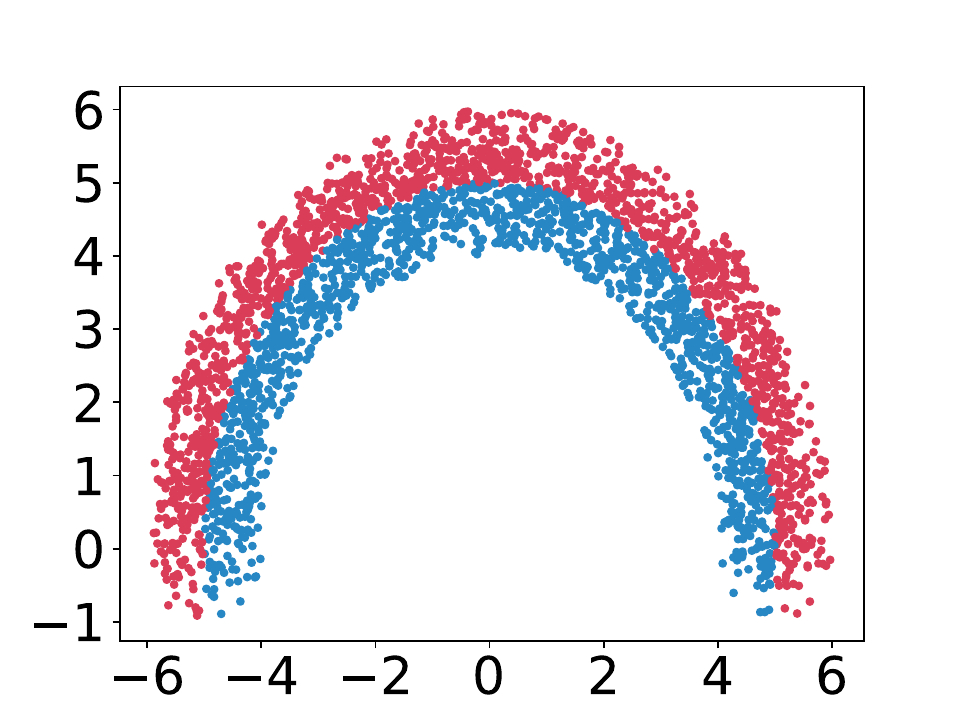}
        \caption{Original}
        \label{}
    \end{subfigure}
    \begin{subfigure}[b]{0.24\textwidth}
        \centering
        \includegraphics[width=\textwidth]{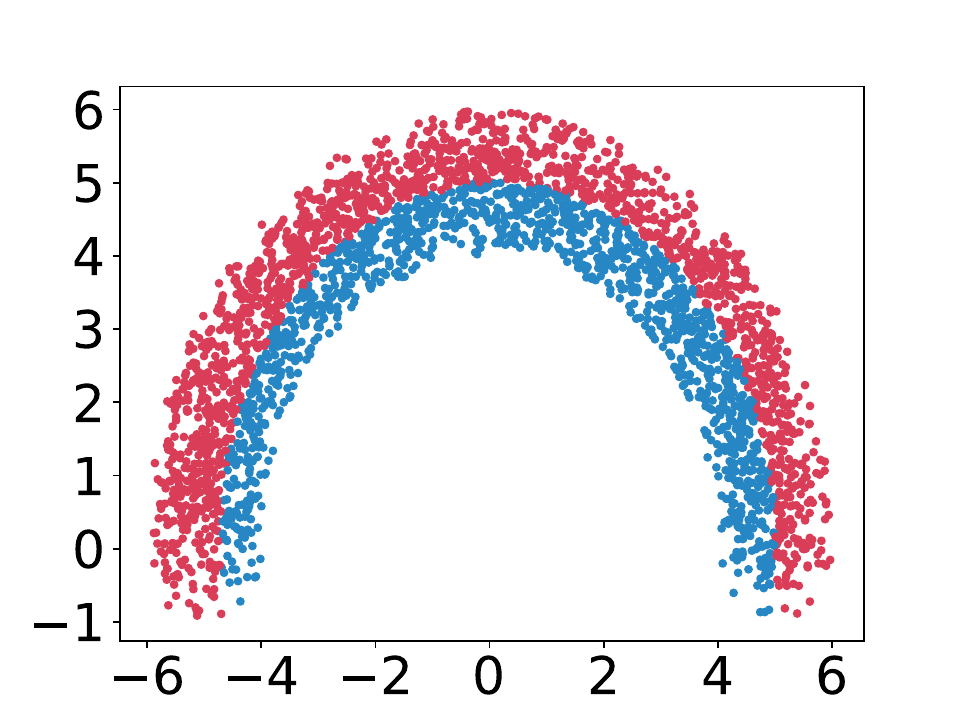}
        \caption{Gradual}
        \label{}
    \end{subfigure}
    \begin{subfigure}[b]{0.24\textwidth}
        \centering
        \includegraphics[width=\textwidth]{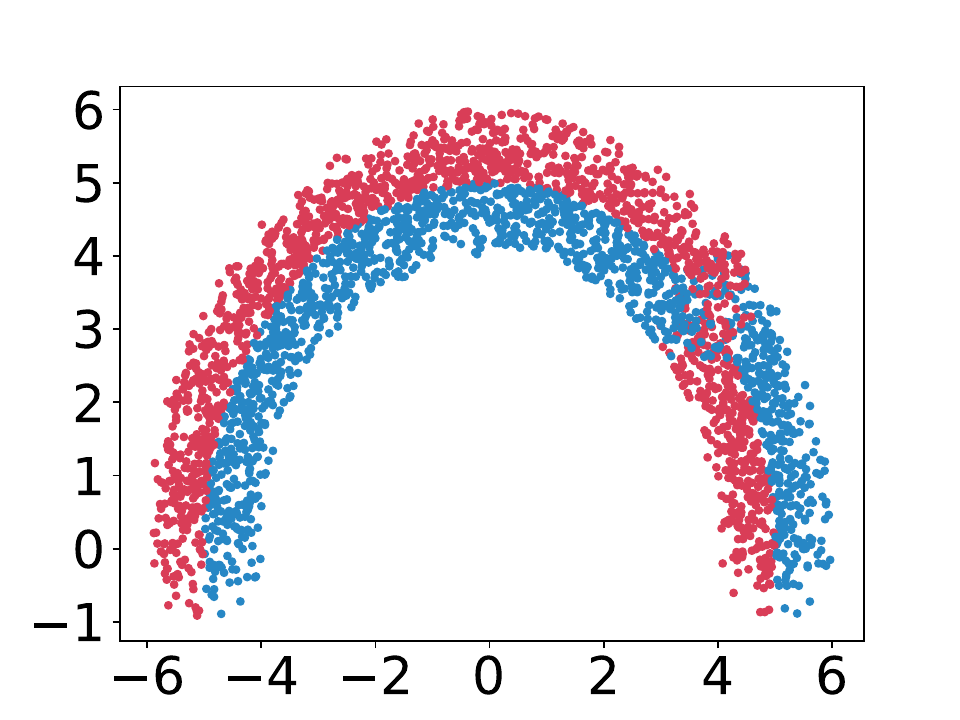}
        \caption{Abrupt}
        \label{}
    \end{subfigure}
    \begin{subfigure}[b]{0.24\textwidth}
        \centering
        \includegraphics[width=\textwidth]{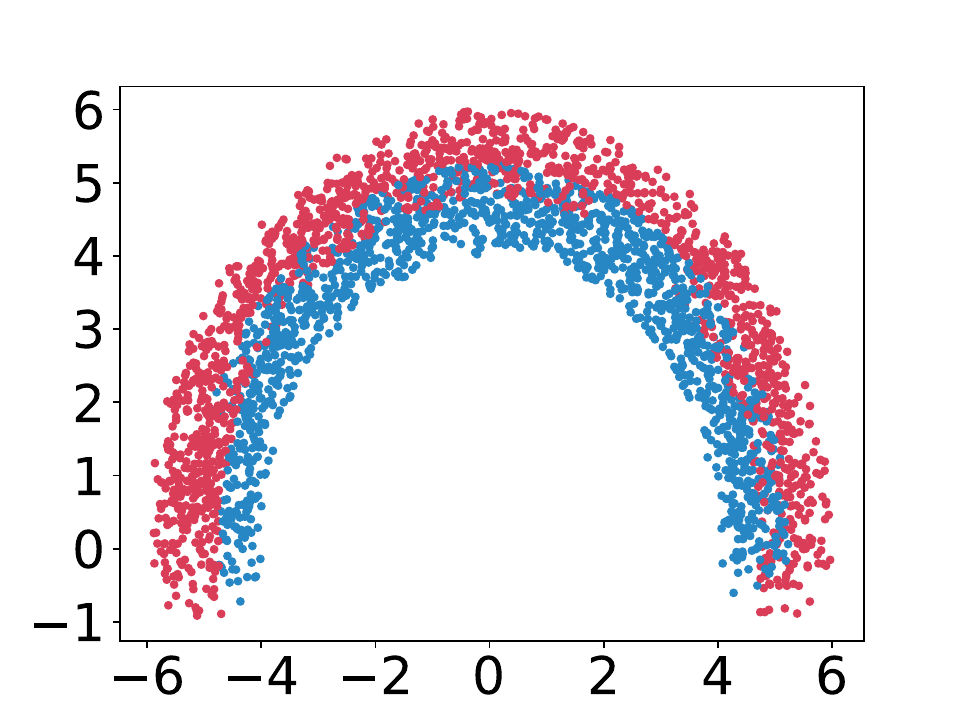}
        \caption{Noise}
        \label{}
    \end{subfigure}
    \caption{Visualization of decision boundary, where the positive and negative classes are colored in red and blue. (a)-(d) show the decision boundary of Circle-Orignal, Circle-Gradual, Circle-Abrupt and Circle-Noise, respectively.}
\label{appx:drifttypes}
\end{figure*}

\subsubsection{Convergence Analysis}
In Fig.~\ref{appx:fig1} (a), we plot the loss curve of our method during training phase.
From the results, it can be observed that the total loss of SYNC decreases steadily until convergence on RMNIST and Portraits.
This suggests that, although our loss function comprises multiple components, the optimization path formed by their combination remains relatively smooth, with the components working synergistically to promote convergence.
By optimizing $\mathcal{L}_{\text{SYNC}}$, the model can progressively capture the underlying evolving patterns in the data distribution and learn time-aware causal representations, thereby mitigating the influence of spurious correlations that may arise from data bias, ultimately enhancing the temporal generalization.

\subsubsection{Study on different mask ratio}
We conduct an experiment on RMNSIT and Portraits to explore the impact of mask ratio $\tau$ in Fig.~\ref{appx:fig1} (b).
It can be found that SYNC is a little bit sensitive to $\tau$.
Based on empirical results, a mask ratio of $0.6$ proves more effective for extracting causal factors, thereby enhancing generalization performance. Model performance improves when the proportion of true causal factors within the mixed factors aligns more closely with the selected mask ratio.

\subsubsection{More distribution drift types}
In order to further verify the robustness of our proposed method, we follow~\cite{LSSAE,mmd-lsae} and conduct experiments on three additional circle-based distribution drift datasets (Circle-Gradual, Circle-Abrupt and Circle-Noise) to demonstrate the effectiveness of SYNC in more complex scenarios.
As shown in Fig.~\ref{appx:drifttypes}, the decision boundary of Circle-Gradual changes gradually, and decision boundary of Circle-Abrupt changes suddenly at a certain moment, while the decision boundary of Circle-Noise fluctuates irregularly due to the influence of noise.
These three drift types can simply simulate slow variations, drastic fluctuations, and irregular variations in real-world data distribution.
We evaluate our method with LSSAE~\cite{LSSAE} and MMD-LSAE~\cite{mmd-lsae} on these synthetic datasets. 
As shown in Table~\ref{appx:drtfts_table}, our method significantly outperforms the baselines in both metrics, achieving at least $12.2\%$ and $7.6\%$ improvements in Wst and Avg on average across all datasets, respectively. This suggests that SYNC has the potential to handle more complex and diverse drift types.

\begin{table*}[tbp]
\caption{The comparison of the accuracy (\%) between SYNC and other baselines. ``Wst'' denotes the worst performance of the dataset on all test domains, and ``{Avg}'' denotes the average performance of the dataset on all test domains.
The best result are marked in \textbf{bold}.
}
\vspace{-3pt}
\label{appx:drtfts_table}
\centering
\resizebox{0.6\textwidth}{!}{
\renewcommand{\arraystretch}{0.95}{
\begin{tabular}{lcccccccc}
\toprule[0.8pt]
\multirow{2}{*}[-1mm]{{Method}} &
 \multicolumn{2}{c}{Gradual} & \multicolumn{2}{c}{Abrupt} & \multicolumn{2}{c}{Noise} & 
 \multicolumn{2}{c}{Overall}\\
\cmidrule(lr){2-3}  \cmidrule(lr){4-5}  \cmidrule(lr){6-7}  
& Wst & Avg & Wst &  Avg & Wst & Avg & Wst & Avg   \\
\midrule
 LSSAE & 32.0 & 53.1 & 44.0 & 57.8 & 29.0 & 53.8 & 35.0 & 54.9    \\  
MMD-LSAE & 34.4 & 65.5 & 45.0 & \textbf{68.7} & 32.0 & 53.8 & 37.1 & 62.7  \\ 
SYNC & \textbf{49.0} & \textbf{67.2} & \textbf{60.0} &66.7 & \textbf{69.0} & \textbf{77.0} & \textbf{59.3} & \textbf{70.3}   \\ 
 
\bottomrule[0.8pt]
\end{tabular}}
}
\end{table*}


\subsection{Full Experimental Results}
We provide complete experimental results of SYNC and other baselines. 
It can be found that in most scenarios, our approach achieve the state-of-the-art performance, showing the effectiveness of SYNC in non-stationary tasks.
It is worth noting that for the experiment of SDE-EDG on PowerSupply, we report the results using the same backbone architecture as LSSAE to ensure a fair comparison.

\begin{table*}[ht]
\caption{Circle. We show the results on each target domain by domain index.}
\vspace{-5pt}
\begin{center}

\normalsize
\setlength{\tabcolsep}{4pt}
\resizebox{0.97\textwidth}{!}{
\begin{tabular}{lccccccccccc}
\toprule
\textbf{ALGORITHM} & \textbf{21} & \textbf{22} & \textbf{23} & \textbf{24} & \textbf{25} & \textbf{26} & \textbf{27} & \textbf{28} & \textbf{29} & \textbf{30} & \textbf{Avg.}\\
\midrule
ERM     & 53.9 $\pm$ 3.5 & 55.8 $\pm$ 4.8 & 53.9 $\pm$ 5.2 & 44.7 $\pm$ 6.3 & 56.9 $\pm$ 4.4 & 47.8 $\pm$ 5.8 & 41.9 $\pm$ 7.5 & 41.7 $\pm$ 5.9 & 54.2 $\pm$ 3.0 & 47.8 $\pm$ 5.7 & 49.9  \\
Mixup   & 48.6 $\pm$ 3.8 & 51.7 $\pm$ 4.0 & 49.4 $\pm$ 4.5 & 43.6 $\pm$ 5.8 & 56.9 $\pm$ 4.4 & 47.8 $\pm$ 5.8 & 41.9 $\pm$ 7.5 & 41.7 $\pm$ 5.9 & 54.2 $\pm$ 3.0 & 47.8 $\pm$ 5.7 & 48.4  \\
MMD     & 50.0 $\pm$ 3.9 & 53.6 $\pm$ 4.4 & 55.0 $\pm$ 4.3 & 51.9 $\pm$ 6.0 & 60.8 $\pm$ 3.9 & 49.7 $\pm$ 7.2 & 41.9 $\pm$ 7.5 & 41.7 $\pm$ 5.9 & 54.2 $\pm$ 3.0 & 47.8 $\pm$ 5.7 & 50.7  \\
MLDG    & 57.8 $\pm$ 3.6 & 57.7 $\pm$ 5.0 & 55.3 $\pm$ 4.9 & 46.4 $\pm$ 6.8 & 56.9 $\pm$ 4.4 & 47.8 $\pm$ 5.8 & 41.9 $\pm$ 7.5 & 41.7 $\pm$ 5.9 & 54.2 $\pm$ 3.0 & 47.8 $\pm$ 5.7 & 50.8  \\
RSC     & 45.3 $\pm$ 3.6 & 51.4 $\pm$ 3.8 & 49.4 $\pm$ 4.5 & 43.6 $\pm$ 5.8 & 56.9 $\pm$ 4.4 & 47.8 $\pm$ 5.8 & 41.9 $\pm$ 7.5 & 41.7 $\pm$ 5.9 & 54.2 $\pm$ 3.0 & 47.8 $\pm$ 5.7 & 48.0  \\
MTL     & 61.4 $\pm$ 2.2 & 57.2 $\pm$ 6.4 & 53.3 $\pm$ 5.1 & 48.3 $\pm$ 6.2 & 56.9 $\pm$ 4.8 & 49.2 $\pm$ 3.7 & 43.3 $\pm$ 5.0 & 45.8 $\pm$ 2.8 & 54.2 $\pm$ 5.7 & 42.2 $\pm$ 4.9 & 51.2  \\
Fish    & 51.7 $\pm$ 3.7 & 53.1 $\pm$ 3.7 & 49.4 $\pm$ 4.5 & 43.6 $\pm$ 5.8 & 56.9 $\pm$ 4.4 & 47.8 $\pm$ 5.8 & 41.9 $\pm$ 7.5 & 41.7 $\pm$ 5.9 & 54.2 $\pm$ 3.0 & 47.8 $\pm$ 5.7 & 48.8  \\
CORAL   & 65.3 $\pm$ 3.2 & 63.9 $\pm$ 4.4 & 60.0 $\pm$ 4.8 & 56.4 $\pm$ 6.0 & 60.2 $\pm$ 4.3 & 47.8 $\pm$ 5.8 & 41.9 $\pm$ 7.5 & 41.7 $\pm$ 5.9 & 54.2 $\pm$ 3.0 & 47.8 $\pm$ 5.7 & 53.9  \\
AndMask & 42.8 $\pm$ 3.4 & 50.6 $\pm$ 4.1 & 49.4 $\pm$ 4.5 & 43.6 $\pm$ 5.8 & 56.9 $\pm$ 4.4 & 47.8 $\pm$ 5.8 & 41.9 $\pm$ 7.5 & 41.7 $\pm$ 5.9 & 54.2 $\pm$ 3.0 & 49.7 $\pm$ 5.4 & 47.9  \\
DIVA    & 81.3 $\pm$ 3.5 & 76.3 $\pm$ 4.2 & 74.7 $\pm$ 4.6 & 56.7 $\pm$ 5.1 & 67.0 $\pm$ 6.1 & 62.3 $\pm$ 5.1 & 62.0 $\pm$ 5.6 & 66.3 $\pm$ 4.1 & 70.3 $\pm$ 5.6 & 62.0 $\pm$ 4.2 & 67.9  \\
IRM     & 57.8 $\pm$ 3.9 & 59.4 $\pm$ 5.4 & 56.9 $\pm$ 4.9 & 48.1 $\pm$ 7.4 & 57.5 $\pm$ 4.3 & 47.8 $\pm$ 5.8 & 41.9 $\pm$ 7.5 & 41.7 $\pm$ 5.9 & 54.2 $\pm$ 3.0 & 47.8 $\pm$ 5.7 & 51.3  \\
IIB     &79.0 $\pm$ 3.3  &68.0 $\pm$ 4.1 &58.0 $\pm$ 4.5 &44.0 $\pm$ 3.2 &55.0 $\pm$ 4.1 &49.0 $\pm$ 4.2 &42.0 $\pm$4.1 &45.0 $\pm$2.5 &55.0 $\pm$ 3.3 &44.0 $\pm$ 3.0 &53.9  \\
iDAG    &52.0 $\pm$ 3.4 &53.0 $\pm$3.7 &51.0 $\pm$4.1 &44.0 $\pm$ 3.6 &55.0 $\pm$4.2 &49.0 $\pm$ 3.3 &42.0 $\pm$3.9 &45.0 $\pm$ 2.8 &55.0 $\pm$ 3.4 &44.0 $\pm$ 3.6 &49.0  \\
GI      & 72.0 $\pm$ 4.9 & 62.0 $\pm$ 4.9 & 61.0 $\pm$ 4.9 & 58.0 $\pm$ 4.9 & 56.0 $\pm$ 4.7 & 49.0 $\pm$ 4.7 & 42.0 $\pm$ 4.2 & 45.0 $\pm$ 4.5 & 55.0 $\pm$ 4.0 & 44.0 $\pm$ 4.1 & 54.4  \\
LSSAE   & 95.8 $\pm$ 1.9 & 95.6 $\pm$ 2.1 & 93.5 $\pm$ 2.9 & 96.3 $\pm$ 1.8 & 83.8 $\pm$ 5.2 & 74.3 $\pm$ 3.6 & 51.9 $\pm$ 5.6 & 52.3 $\pm$ 8.1 & 46.5 $\pm$ 9.2 & 48.4 $\pm$ 5.3 & 73.8  \\
DDA & 71.0 $\pm$ 0.7 & 56.0 $\pm$ 0.0 & 51.0 $\pm$ 0.0 & 44.0 $\pm$ 2.8 & 55.0 $\pm$ 0.7 & 49.0 $\pm$ 2.1 & 42.0 $\pm$ 7.1 & 45.0 $\pm$ 0.7 & 55.0 $\pm$ 3.5 & 44.0 $\pm$ 4.2 & 51.2 \\
DRAIN & 48.0 $\pm$ 2.1 & 52.0 $\pm$ 0.7 & 54.0 $\pm$ 3.5 & 47.0 $\pm$ 1.4 & 58.0 $\pm$ 3.5 & 52.0 $\pm$ 3.5 & 45.0 $\pm$ 4.2 & 48.0 $\pm$ 4.9 & 58.0 $\pm$ 4.9 & 47.0 $\pm$ 2.8 & 50.7 \\
MMD-LSAE & 93.0 $\pm$ 2.1 & 89.0 $\pm$ 1.3 & 83.0 $\pm$ 2.6 & 84.0 $\pm$ 2.0 & 84.0 $\pm$ 4.4 & 88.0 $\pm$ 3.5 & 79.0 $\pm$ 5.0 & 76.0 $\pm$ 6.4 & 54.0 $\pm$ 6.1 & 65.0 $\pm$ 5.0 & 79.5  \\
CTOT  & 86.0 $\pm$ 1.0 & 88.0 $\pm$ 1.7 & 89.0 $\pm$ 1.5 & 87.0 $\pm$ 1.0 & 77.0 $\pm$ 2.5 & 79.0 $\pm$ 1.2 & 72.0 $\pm$ 1.2 & 65.0 $\pm$ 3.0 & 55.0 $\pm$ 0.9 & 54.0 $\pm$ 1.2 & 75.2 \\
SDE-EDG  & 95.0 $\pm$ 0.7 & 99.0 $\pm$ 1.4 & 98.0 $\pm$ 1.4 & 94.0 $\pm$ 1.4 & 95.0 $\pm$ 1.4 & 93.0 $\pm$ 7.8 & 74.0 $\pm$ 5.7 & 66.0 $\pm$ 0.7 & 45.0 $\pm$ 6.4 & 56.0 $\pm$ 4.5 & 81.5 \\
SYNC & 87.0 $\pm$ 1.7 & 95.0 $\pm$ 2.1 & 87.0 $\pm$ 2.1 & 94.0 $\pm$ 1.5 & 88.0 $\pm$ 3.2 & 84.0 $\pm$ 3.3 & 85.0 $\pm$ 4.5 & 80.0 $\pm$ 5.5 & 67.0 $\pm$ 6.0 & 80.0 $\pm$ 4.7 & 84.7  \\

\bottomrule
\end{tabular}
}

\end{center}
\end{table*}

\begin{table*}[ht!]
\caption{Sine. We show the results on each target domain by domain index.}
\vspace{-10pt}
\begin{center}
\normalsize
\resizebox{0.8\textwidth}{!}{
\begin{tabular}{lccccccccccc}
\toprule
\textbf{ALGORITHM} & \textbf{17} & \textbf{18} & \textbf{19} & \textbf{20} & \textbf{21} & \textbf{22} & \textbf{23} & \textbf{24} & \textbf{Avg.}\\
\midrule
ERM & 71.4 $\pm$ 6.1 & 91.0 $\pm$ 1.5 & 81.6 $\pm$ 2.4 & 53.4 $\pm$ 2.9 & 51.1 $\pm$ 6.7 & 54.3 $\pm$ 4.7 & 49.5 $\pm$ 4.8 & 51.7 $\pm$ 5.0 & 63.0 \\
MIXUP & 63.1 $\pm$ 5.9 & 93.5 $\pm$ 1.7 & 80.6 $\pm$ 3.8 & 52.8 $\pm$ 2.9 & 60.3 $\pm$ 7.2 & 54.2 $\pm$ 2.7 & 49.5 $\pm$ 4.4 & 49.3 $\pm$ 8.0 & 62.9 \\
MMD & 57.0 $\pm$ 4.2 & 57.1 $\pm$ 4.1 & 47.6 $\pm$ 5.4 & 50.0 $\pm$ 1.8 & 55.1 $\pm$ 6.7 & 54.4 $\pm$ 4.7 & 49.5 $\pm$ 4.8 & 51.7 $\pm$ 5.0 & 55.8 \\
MLDG & 69.2 $\pm$ 4.2 & 67.7 $\pm$ 4.1 & 52.1 $\pm$ 5.4 & 50.7 $\pm$ 1.8 & 51.1 $\pm$ 6.7 & 54.3 $\pm$ 4.7 & 49.5 $\pm$ 4.8 & 51.7 $\pm$ 5.0 & 63.2 \\
RSC & 61.3 $\pm$ 6.6 & 83.5 $\pm$ 1.9 & 84.5 $\pm$ 2.6 & 52.8 $\pm$ 2.8 & 55.1 $\pm$ 6.7 & 54.4 $\pm$ 4.7 & 49.5 $\pm$ 4.8 & 51.7 $\pm$ 5.0 & 61.5 \\
MTL & 70.6 $\pm$ 6.6 & 91.6 $\pm$ 1.2 & 79.9 $\pm$ 3.4 & 51.0 $\pm$ 4.7 & 60.3 $\pm$ 7.6 & 53.6 $\pm$ 5.2 & 49.5 $\pm$ 5.3 & 46.9 $\pm$ 5.9 & 62.9 \\
Fish & 66.1 $\pm$ 6.9 & 82.0 $\pm$ 2.7 & 87.5 $\pm$ 2.4 & 55.2 $\pm$ 3.0 & 51.1 $\pm$ 6.7 & 54.3 $\pm$ 4.7 & 49.5 $\pm$ 4.8 & 51.7 $\pm$ 5.0 & 62.3 \\
CORAL & 60.0 $\pm$ 5.3 & 57.1 $\pm$ 4.2 & 48.6 $\pm$ 6.4 & 50.7 $\pm$ 1.8 & 49.7 $\pm$ 6.2 & 48.6 $\pm$ 4.6 & 46.3 $\pm$ 5.0 & 51.7 $\pm$ 5.0 & 51.6 \\
AndMASK & 44.2 $\pm$ 5.1 & 42.9 $\pm$ 4.2 & 54.2 $\pm$ 7.0 & 71.9 $\pm$ 1.9 & 86.4 $\pm$ 3.2 & 90.4 $\pm$ 2.9 & 88.1 $\pm$ 3.4 & 76.4 $\pm$ 3.7 & 69.3 \\
DIVA & 79.0 $\pm$ 6.6 & 60.8 $\pm$ 1.9 & 47.6 $\pm$ 2.6 & 50.0 $\pm$ 2.8 & 55.1 $\pm$ 6.7 & 51.9 $\pm$ 4.7 & 38.6 $\pm$ 4.8 & 40.4 $\pm$ 5.0 & 52.9 \\
IRM & 66.9 $\pm$ 6.2 & 81.1 $\pm$ 3.2 & 88.5 $\pm$ 3.0 & 56.6 $\pm$ 6.0 & 57.2 $\pm$ 5.8 & 53.7 $\pm$ 5.1 & 49.5 $\pm$ 2.2 & 51.7 $\pm$ 5.4 & 63.2 \\
IIB & 56.0 $\pm$ 5.4 & 79.1 $\pm$ 2.2 & 86.9 $\pm$ 1.3 & 58.3 $\pm$ 4.0 & 55.1 $\pm$ 4.9 & 54.4 $\pm$ 5.4 & 49.5 $\pm$ 3.6 & 50.7 $\pm$ 4.1 & 61.3 \\
iDAG & 86.0 $\pm$ 3.2 & 63.0 $\pm$ 4.3 & 49.4 $\pm$ 3.7 & 50.0 $\pm$ 3.8 & 55.1 $\pm$ 4.5 & 54.3 $\pm$ 5.1 & 49.6 $\pm$ 3.8 & 50.0 $\pm$ 3.3 & 57.1 \\
GI & 77.0 $\pm$ 0.7 & 84.2 $\pm$ 4.4 & 89.1 $\pm$ 0.6 & 60.9 $\pm$ 3.6 & 55.1 $\pm$ 0.8 & 53.5 $\pm$ 6.0 & 49.8 $\pm$ 4.1 & 52.0 $\pm$ 2.8 & 65.2 \\
LSSAE & 93.0 $\pm$ 1.7 & 86.9 $\pm$ 0.7 & 69.2 $\pm$ 1.5 & 63.8 $\pm$ 3.8 & 68.8 $\pm$ 2.5 & 76.8 $\pm$ 4.8 & 63.9 $\pm$ 1.3 & 49.0 $\pm$ 3.1 & 71.4 \\
DDA & 43.0 $\pm$ 0.7 & 47.2 $\pm$ 0.6 & 74.2 $\pm$ 1.6 & 89.6 $\pm$ 1.1 & 75.5 $\pm$ 1.1 & 70.0 $\pm$ 7.1 & 59.1 $\pm$ 2.2 & 74.0 $\pm$ 1.4 & 66.6 \\
DRAIN & 43.0 $\pm$ 4.9 & 44.6 $\pm$ 4.7 & 69.3 $\pm$ 0.9 & 89.6 $\pm$ 1.1 & 82.8 $\pm$ 4.8 & 78.5 $\pm$ 2.5 & 70.2 $\pm$ 4.1 & 92.0 $\pm$ 5.7 & 71.3 \\
MMD-LSAE & 43.0 $\pm$ 3.2 & 43.9 $\pm$ 4.4  & 60.7 $\pm$ 2.3 & 79.7 $\pm$ 1.9 & 96.9 $\pm$ 0.7 & 98.8 $\pm$ 0.2 & 90.5 $\pm$ 2.3 & 57.3 $\pm$ 5.4 & 71.4 \\
CTOT & 43.2 $\pm$ 1.9 & 52.6 $\pm$ 1.2 & 62.1 $\pm$ 0.9 & 87.4 $\pm$ 1.2 & 78.4 $\pm$ 2.8 & 71.6 $\pm$ 1.2 & 77.4 $\pm$ 1.3 & 65.3 $\pm$ 2.7 & 67.3 \\
SDE-EDG & 99.0 $\pm$ 0.7 & 96.9 $\pm$ 0.6 & 90.0 $\pm$ 4.2 & 88.8 $\pm$ 0.6 & 64.9 $\pm$ 2.2 & 52.4 $\pm$ 0.3 & 43.0 $\pm$ 4.9 & 42.3 $\pm$ 4.0 & 72.2 \\
SYNC & 67.0 $\pm$ 4.0 & 89.0 $\pm$ 1.6 & 84.5 $\pm$ 3.2 & 79.8 $\pm$ 1.8  & 75.5 $\pm$ 2.8 & 68.9 $\pm$1.3 & 60.0 $\pm$ 2.7  & 84.0 $\pm$ 3.1 & 76.0 \\
\bottomrule
\end{tabular}
}
\end{center}
\end{table*}

\begin{table*}[ht!]
\caption{RMNIST. We show the results on each target domain denoted by rotation angle.}
\vspace{-10pt}
\begin{center}
\normalsize
\resizebox{0.61\textwidth}{!}{
\begin{tabular}{lccccccccccc}
\toprule
\textbf{ALGORITHM} & \textbf{130°} & \textbf{140°} & \textbf{150°} & \textbf{160°} & \textbf{170°} & \textbf{180°}  & \textbf{Avg.}\\
\midrule
ERM & 56.8 $\pm$ 0.9 & 44.2 $\pm$ 0.8 & 37.8 $\pm$ 0.6 & 38.3 $\pm$ 0.8 & 40.9 $\pm$ 0.8 & 43.6 $\pm$ 0.8 & 43.6 \\
MIXUP & 61.3 $\pm$ 0.7 & 47.4 $\pm$ 0.8 & 39.1 $\pm$ 0.7 & 38.3 $\pm$ 0.7 & 40.5 $\pm$ 0.8 & 42.8 $\pm$ 0.9 & 44.9 \\
MMD & 59.2 $\pm$ 0.9 & 46.0 $\pm$ 0.8 & 39.0 $\pm$ 0.7 & 39.3 $\pm$ 0.8 & 41.6 $\pm$ 0.7 & 43.7 $\pm$ 0.8 & 44.8 \\
MLDG & 57.4 $\pm$ 0.7 & 44.5 $\pm$ 0.9 & 37.5 $\pm$ 0.8 & 37.5 $\pm$ 0.8 & 39.9 $\pm$ 0.8 & 42.0 $\pm$ 0.9 & 43.1 \\
RSC & 54.1 $\pm$ 0.9 & 41.9 $\pm$ 0.8 & 35.8 $\pm$ 0.7 & 37.0 $\pm$ 0.8 & 39.8 $\pm$ 0.8 & 41.6 $\pm$ 0.8 & 41.7 \\
MTL & 54.8 $\pm$ 0.9 & 43.1 $\pm$ 0.8 & 36.4 $\pm$ 0.8 & 36.1 $\pm$ 0.8 & 39.1 $\pm$ 0.9 & 40.9 $\pm$ 0.8 & 41.7 \\
FISH & 60.8 $\pm$ 0.8 & 47.8 $\pm$ 0.8 & 39.2 $\pm$ 0.8 & 37.6 $\pm$ 0.7 & 39.0 $\pm$ 0.8 & 40.7 $\pm$ 0.7 & 44.2 \\
CORAL & 58.8 $\pm$ 0.9 & 46.2 $\pm$ 0.8 & 38.9 $\pm$ 0.7 & 38.5 $\pm$ 0.8 & 41.3 $\pm$ 0.8 & 43.5 $\pm$ 0.8 & 44.5 \\
ANDMASK & 53.5 $\pm$ 0.9 & 42.9 $\pm$ 0.8 & 37.8 $\pm$ 0.7 & 38.6 $\pm$ 0.8 & 40.8 $\pm$ 0.8 & 43.2 $\pm$ 0.8 & 42.8 \\
DIVA & 58.3 $\pm$ 0.8 & 45.0 $\pm$ 0.8 & 37.6 $\pm$ 0.8 & 36.9 $\pm$ 0.7 & 38.1 $\pm$ 0.8 & 40.1 $\pm$ 0.8 & 42.7 \\
IRM & 47.7 $\pm$ 0.9 & 38.5 $\pm$ 0.7 & 34.1 $\pm$ 0.7 & 35.7 $\pm$ 0.8 & 37.8 $\pm$ 0.8 & 40.3 $\pm$ 0.8 & 39.0 \\
IIB &59.4 $\pm$ 0.7 &49.5 $\pm$ 0.8 &42.4 $\pm$ 0.9 &38.5 $\pm$ 0.8 &39.6 $\pm$ 0.7 &40.3 $\pm$ 0.8 &45.0 \\
iDAG &56.7 $\pm$ 0.9 &44.6 $\pm$ 0.8 &39.8 $\pm$ 0.9 &39.8 $\pm$ 0.7 &41.6 $\pm$ 0.8 &41.9 $\pm$ 0.9 &44.1 \\
GI & 61.6 $\pm$ 0.9 & 46.4 $\pm$ 0.9 & 39.2 $\pm$ 0.8 & 40.0 $\pm$ 0.8 & 40.1 $\pm$ 0.8 & 40.1 $\pm$ 0.7 & 44.6 \\
LSSAE & 64.1 $\pm$ 0.8 & 51.6 $\pm$ 0.8 & 43.4 $\pm$ 0.8 & 38.6 $\pm$ 0.7 & 40.3 $\pm$ 0.8 & 40.4 $\pm$ 0.8 & 46.4 \\
DDA & 60.7 $\pm$ 0.8 & 50.0 $\pm$ 0.8 & 42.6 $\pm$ 0.8 & 39.6 $\pm$ 0.8 & 38.0 $\pm$ 0.8 & 39.7 $\pm$ 0.8 & 45.1 \\
DRAIN & 59.5 $\pm$ 0.8 & 45.4 $\pm$ 0.8 & 40.2 $\pm$ 0.7 & 37.2 $\pm$ 0.7 & 39.6 $\pm$ 0.8 & 41.0 $\pm$ 0.7 & 43.8 \\
MMD-LSAE & 65.8 $\pm$ 0.8 & 53.6 $\pm$ 0.8 & 46.6 $\pm$ 0.8 & 43.1 $\pm$ 0.8 & 42.9 $\pm$ 0.8 & 43.5 $\pm$ 0.8 & 49.2 \\
CTOT  &68.2 $\pm$ 0.7& 55.3 $\pm$ 0.8 & 44.9 $\pm$ 0.9 & 36.7 $\pm$ 0.8 & 32.3 $\pm$ 0.8 & 31.7 $\pm$ 0.8  & 44.8 \\
SDE-EDG  & 75.1 $\pm$ 0.8 & 61.3 $\pm$ 0.9 & 49.8 $\pm$ 0.8 & 49.8 $\pm$ 0.8 & 39.7 $\pm$ 0.7 & 39.7 $\pm$ 0.9 & 52.6 \\
SYNC & 65.1 $\pm$ 0.7 & 52.8 $\pm$ 0.9 & 48.5 $\pm$ 1.1 & 45.8 $\pm$ 0.9 & 46.3 $\pm$ 1.3 & 46.6 $\pm$ 0.9 & 50.8 \\
\bottomrule
\end{tabular}
}
\end{center}
\end{table*}

\begin{table*}[t!]
\caption{Portraits. We show the results on each target domain by domain index.}
\vspace{-10pt}
\begin{center}
\resizebox{0.88\textwidth}{!}{
\begin{tabular}{lccccccccccc}
\toprule
\textbf{ALGORITHM} & \textbf{25} & \textbf{26} & \textbf{27} & \textbf{28} & \textbf{29} & \textbf{30} & \textbf{31} & \textbf{32} & \textbf{33} & \textbf{34}  & \textbf{Avg.}\\
\midrule
ERM & 75.5 $\pm$ 0.9 & 83.8 $\pm$ 0.9 & 88.5 $\pm$ 0.8 & 93.3 $\pm$ 0.7 & 93.4 $\pm$ 0.6 & 92.1 $\pm$ 0.7 & 90.6 $\pm$ 0.8 & 84.3 $\pm$ 0.9 & 88.5 $\pm$ 0.9 & 87.9 $\pm$ 1.4 & 87.8 \\
MIXUP & 75.5 $\pm$ 0.9 & 83.8 $\pm$ 0.9 & 88.5 $\pm$ 0.8 & 93.3 $\pm$ 0.7 & 93.4 $\pm$ 0.6 & 92.1 $\pm$ 0.7 & 90.6 $\pm$ 0.8 & 84.3 $\pm$ 0.9 & 88.5 $\pm$ 0.9 & 87.9 $\pm$ 1.4 & 87.8 \\
MMD & 74.0 $\pm$ 1.0 & 83.8 $\pm$ 0.8 & 87.2 $\pm$ 0.8 & 93.0 $\pm$ 0.7 & 93.0 $\pm$ 0.6 & 91.9 $\pm$ 0.7 & 90.9 $\pm$ 0.7 & 84.7 $\pm$ 1.4 & 88.3 $\pm$ 0.9 & 85.8 $\pm$ 1.8 & 87.3 \\
MLDG & 76.4 $\pm$ 0.8 & 85.5 $\pm$ 0.9 & 90.1 $\pm$ 0.7 & 94.3 $\pm$ 0.6 & 93.5 $\pm$ 0.6 & 92.0 $\pm$ 0.7 & 90.8 $\pm$ 0.8 & 85.6 $\pm$ 1.1 & 89.3 $\pm$ 0.8 & 87.6 $\pm$ 1.6 & 88.5 \\
RSC & 75.2 $\pm$ 0.9 & 84.7 $\pm$ 0.8 & 87.9 $\pm$ 0.7 & 93.3 $\pm$ 0.7 & 92.5 $\pm$ 0.7 & 91.0 $\pm$ 0.7 & 90.0 $\pm$ 0.7 & 84.6 $\pm$ 1.2 & 88.2 $\pm$ 0.8 & 85.8 $\pm$ 1.9 & 87.3 \\
MTL & 78.2 $\pm$ 0.9 & 86.5 $\pm$ 0.8 & 90.9 $\pm$ 0.8 & 94.2 $\pm$ 0.7 & 93.8 $\pm$ 0.6 & 92.0 $\pm$ 0.7 & 91.2 $\pm$ 0.7 & 86.0 $\pm$ 1.2 & 89.3 $\pm$ 0.8 & 87.4 $\pm$ 1.4 & 89.0 \\
Fish & 78.6 $\pm$ 0.9 & 86.9 $\pm$ 0.8 & 89.5 $\pm$ 0.8 & 93.5 $\pm$ 0.7 & 93.3 $\pm$ 0.6 & 92.1 $\pm$ 0.6 & 91.1 $\pm$ 0.7 & 86.2 $\pm$ 1.3 & 88.7 $\pm$ 0.9 & 87.7 $\pm$ 1.6 & 88.8 \\
CORAL & 74.6 $\pm$ 0.9 & 84.6 $\pm$ 0.8 & 87.9 $\pm$ 0.8 & 93.3 $\pm$ 0.6 & 92.7 $\pm$ 0.7 & 91.5 $\pm$ 0.7 & 90.7 $\pm$ 0.7 & 84.6 $\pm$ 1.5 & 88.1 $\pm$ 0.9 & 85.9 $\pm$ 1.9 & 87.4 \\
AndMASK & 62.0 $\pm$ 1.1 & 70.8 $\pm$ 1.1 & 67.0 $\pm$ 1.2 & 70.2 $\pm$ 1.1 & 75.2 $\pm$ 1.1 & 74.1 $\pm$ 1.0 & 72.7 $\pm$ 1.1 & 64.7 $\pm$ 1.6 & 77.3 $\pm$ 1.1 & 74.9 $\pm$ 2.1 & 70.9 \\
DIVA & 76.2 $\pm$ 1.0 & 86.6 $\pm$ 0.8 & 88.8 $\pm$ 0.8 & 93.5 $\pm$ 0.7 & 93.1 $\pm$ 0.6 & 91.6 $\pm$ 0.6 & 91.1 $\pm$ 0.7 & 84.7 $\pm$ 1.3 & 89.1 $\pm$ 0.8 & 87.0 $\pm$ 1.5 & 88.2 \\
IRM & 74.2 $\pm$ 0.9 & 83.5 $\pm$ 0.9 & 88.5 $\pm$ 0.8 & 91.0 $\pm$ 0.8 & 90.4 $\pm$ 0.7 & 87.3 $\pm$ 0.8 & 87.0 $\pm$ 0.9 & 80.4 $\pm$ 1.5 & 86.7 $\pm$ 0.9 & 85.1 $\pm$ 1.8 & 85.4 \\
IIB  &78.1 $\pm$ 1.3 &87.2 $\pm$ 1.1 &91.8 $\pm$ 1.2 &95.8 $\pm$ 0.8 &94.9 $\pm$ 0.7 &92.2 $\pm$ 1.4 &91.7 $\pm$ 1.2 &87.6 $\pm$ 1.1 &89.5 $\pm$ 1.4 &88.4 $\pm$ 1.5 &89.7 \\
iDAG  &79.8 $\pm$ 1.1  &86.2 $\pm$ 1.4 &91.3 $\pm$ 0.8 &94.0 $\pm$ 0.7 &92.6 $\pm$ 0.9 &89.3 $\pm$ 1.1 &89.8 $\pm$ 1.3 &87.4 $\pm$ 1.4 &87.9 $\pm$ 1.1 &88.4 $\pm$ 1.3 &88.6\\
GI & 77.8 $\pm$ 1.2 & 86.6 $\pm$ 1.3 & 90.8 $\pm$ 1.1 & 95.3 $\pm$ 1.3 & 93.1 $\pm$ 1.2 & 89.3 $\pm$ 1.1 & 88.9 $\pm$ 1.2 & 84.1 $\pm$ 1.8 & 87.7 $\pm$ 1.0 & 87.5 $\pm$ 2.0 & 88.1 \\
LSSAE & 77.7 $\pm$ 0.9 & 87.1 $\pm$ 0.8 & 90.8 $\pm$ 0.7 & 94.3 $\pm$ 0.6 & 94.3 $\pm$ 0.6 & 92.2 $\pm$ 0.6 & 91.2 $\pm$ 0.7 & 86.7 $\pm$ 1.1 & 89.6 $\pm$ 0.8 & 86.9 $\pm$ 1.4 & 89.1 \\
DDA & 76.0 $\pm$ 1.0 & 85.6 $\pm$ 0.8 & 88.6 $\pm$ 0.8 & 93.6 $\pm$ 0.6 & 92.9 $\pm$ 0.7 & 92.9 $\pm$ 0.6 & 90.3 $\pm$ 0.8 & 84.3 $\pm$ 1.2 & 88.7 $\pm$ 0.8 & 85.9 $\pm$ 1.2 & 87.9 \\
DRAIN & 77.7 $\pm$ 0.8 & 86.2 $\pm$ 0.8 & 90.6 $\pm$ 0.6 & 94.8 $\pm$ 0.5 & 94.4 $\pm$ 0.6 & 92.8 $\pm$ 0.7 & 92.2 $\pm$ 0.6 & 87.2 $\pm$ 1.2 & 89.9 $\pm$ 0.8 & 87.9 $\pm$ 1.1 & 89.4 \\
MMD-LSAE & 80.9 $\pm$ 0.9 & 88.6 $\pm$ 0.8 & 92.8 $\pm$ 0.7 & 95.0 $\pm$ 0.6 & 94.9 $\pm$ 0.5 & 91.0 $\pm$ 0.9 & 92.3 $\pm$ 0.6 & 88.1 $\pm$ 1.2 & 90.4 $\pm$ 0.8 & 89.7 $\pm$ 1.1 & 90.4 \\
CTOT  & 77.9 $\pm$ 0.9 & 83.0 $\pm$ 0.8 & 79.2 $\pm$ 0.8 & 88.0 $\pm$ 0.7 & 90.4 $\pm$ 0.6 & 89.8 $\pm$ 0.8 & 87.9 $\pm$ 0.8 & 83.9 $\pm$ 1.6 & 87.8 $\pm$ 0.8 & 86.3 $\pm$ 1.1 & 86.4 \\
SDE-EDG & 78.6 $\pm$ 0.8 & 86.6 $\pm$ 0.9 & 90.1 $\pm$ 0.8 & 94.8 $\pm$ 0.6 & 94.5 $\pm$ 0.6 & 93.3 $\pm$ 0.7 & 92.1 $\pm$ 0.7 & 87.9 $\pm$ 1.3 & 89.6 $\pm$ 0.9 & 89.0 $\pm$ 1.1 & 89.6 \\
SYNC & 81.0 $\pm$ 1.1 & 88.8 $\pm$ 0.9 & 93.5 $\pm$ 1.2 & 96.0 $\pm$ 0.7 & 95.7 $\pm$ 0.6 & 93.1 $\pm$ 0.9 & 92.6 $\pm$ 0.8 & 87.8 $\pm$ 1.3 & 90.7 $\pm$ 0.9 & 89.0 $\pm$ 1.2 & 90.8 \\
\bottomrule
\end{tabular}
}
\end{center}
\end{table*}

\begin{table*}[ht!]
\caption{Caltran. We show the results on each target domain by domain index.}
\vspace{-10pt}
\begin{center}
\resizebox{0.9\textwidth}{!}{
\begin{tabular}{lccccccccccc}
\toprule
\textbf{ALGORITHM} & \textbf{25} & \textbf{26} & \textbf{27} & \textbf{28} & \textbf{29} & \textbf{30} & \textbf{31} & \textbf{32} & \textbf{33} & \textbf{34} & \textbf{Avg.}\\
\midrule
ERM & 29.9 $\pm$ 3.5 & 88.4 $\pm$ 2.1 & 61.1 $\pm$ 3.5 & 56.3 $\pm$ 3.2 & 90.0 $\pm$ 1.6 & 60.1 $\pm$ 2.5 & 55.5 $\pm$ 3.5 & 88.8 $\pm$ 2.4 & 57.1 $\pm$ 3.5 & 50.5 $\pm$ 5.2 & 66.3 \\
MIXUP & 53.6 $\pm$ 3.9 & 89.0 $\pm$ 2.0 & 61.8 $\pm$ 2.4 & 55.7 $\pm$ 2.9 & 88.2 $\pm$ 2.1 & 58.6 $\pm$ 3.0 & 52.3 $\pm$ 3.7 & 88.6 $\pm$ 2.7 & 57.1 $\pm$ 3.0 & 55.1 $\pm$ 4.3 & 66.0 \\
MMD & 30.2 $\pm$ 2.1 & 92.7 $\pm$ 1.7 & 56.4 $\pm$ 3.7 & 39.1 $\pm$ 3.2 & 93.6 $\pm$ 1.7 & 52.1 $\pm$ 3.2 & 42.8 $\pm$ 3.0 & 92.1 $\pm$ 2.2 & 42.1 $\pm$ 3.8 & 29.4 $\pm$ 3.8 & 57.1 \\
MLDG & 54.8 $\pm$ 4.1 & 88.6 $\pm$ 2.6 & 62.2 $\pm$ 3.6 & 55.1 $\pm$ 4.1 & 88.3 $\pm$ 1.7 & 60.9 $\pm$ 4.3 & 51.7 $\pm$ 2.6 & 89.0 $\pm$ 1.9 & 56.5 $\pm$ 3.4 & 55.3 $\pm$ 4.8 & 66.2 \\
RSC & 57.2 $\pm$ 3.0 & 88.4 $\pm$ 2.6 & 62.6 $\pm$ 3.0 & 56.5 $\pm$ 3.7 & 88.0 $\pm$ 2.4 & 59.4 $\pm$ 3.0 & 51.9 $\pm$ 2.9 & 90.0 $\pm$ 2.0 & 59.4 $\pm$ 2.9 & 56.0 $\pm$ 3.1 & 67.0 \\
MTL & 64.2 $\pm$ 3.0 & 87.2 $\pm$ 2.5 & 64.9 $\pm$ 3.9 & 60.0 $\pm$ 4.8 & 84.5 $\pm$ 2.2 & 60.6 $\pm$ 3.5 & 52.6 $\pm$ 3.7 & 83.9 $\pm$ 2.9 & 58.2 $\pm$ 4.1 & 65.7 $\pm$ 5.6 & 68.2 \\
Fish & 61.1 $\pm$ 3.5 & 88.2 $\pm$ 1.5 & 64.7 $\pm$ 4.0 & 57.9 $\pm$ 3.1 & 88.3 $\pm$ 2.2 & 59.9 $\pm$ 3.0 & 57.5 $\pm$ 2.7 & 87.4 $\pm$ 2.8 & 57.7 $\pm$ 3.7 & 63.0 $\pm$ 6.1 & 68.6 \\
CORAL & 50.4 $\pm$ 3.0 & 90.8 $\pm$ 2.0 & 61.2 $\pm$ 3.8 & 55.0 $\pm$ 2.5 & 92.0 $\pm$ 1.7 & 56.8 $\pm$ 3.8 & 52.0 $\pm$ 3.8 & 90.9 $\pm$ 1.6 & 56.8 $\pm$ 2.4 & 50.9 $\pm$ 5.6 & 65.7 \\
AndMASK & 30.0 $\pm$ 2.2 & 92.7 $\pm$ 1.7 & 56.2 $\pm$ 3.8 & 39.1 $\pm$ 3.2 & 93.6 $\pm$ 1.7 & 51.6 $\pm$ 3.2 & 42.6 $\pm$ 2.9 & 92.1 $\pm$ 2.2 & 41.2 $\pm$ 3.7 & 29.9 $\pm$ 3.6 & 56.9 \\
DIVA & 60.6 $\pm$ 2.9 & 90.1 $\pm$ 1.7 & 67.5 $\pm$ 3.1 & 58.9 $\pm$ 3.5 & 88.4 $\pm$ 2.8 & 58.7 $\pm$ 3.3 & 53.8 $\pm$ 3.6 & 89.8 $\pm$ 1.7 & 61.8 $\pm$ 4.8 & 62.0 $\pm$ 3.4 & 69.2 \\
IRM & 46.4  $\pm$ 3.7 & 90.8 $\pm$ 1.7 & 60.8 $\pm$ 3.4 & 52.9 $\pm$ 3.1 & 91.8 $\pm$ 1.7 & 56.6 $\pm$ 3.1 & 52.1 $\pm$ 2.9 & 90.9 $\pm$ 2.6 & 55.6 $\pm$ 3.9 & 43.1 $\pm$ 5.5 & 64.1 \\
IIB &62.2 $\pm$ 4.1 &93.8 $\pm$ 1.1 &68.0 $\pm$ 2.9 &55.7 $\pm$ 3.3 &91.4 $\pm$ 1.8 &56.8 $\pm$ 3.0 &54.8 $\pm$ 3.1 &88.4 $\pm$ 2.2 &58.4 $\pm$ 3.7 &68.1 $\pm$ 5.1 &69.3 \\
iDAG  &58.3 $\pm$ 3.8 &91.2 $\pm$ 1.2 &68.5 $\pm$ 3.2 &56.2 $\pm$ 2.8 &88.8 $\pm$ 1.9 &63.2 $\pm$ 3.1 &53.9 $\pm$ 2.2 &89.1 $\pm$ 2.6 &69.4 $\pm$ 3.5 &53.5 $\pm$ 4.1 &69.7 \\
GI & 68.8 $\pm$ 3.1 & 86.6 $\pm$ 1.9 & 65.5 $\pm$ 3.2 & 60.6 $\pm$ 4.3 & 88.8 $\pm$ 2.4 & 58.5 $\pm$ 3.6 & 53.1 $\pm$ 2.8 & 88.7 $\pm$ 2.1 & 63.7 $\pm$ 2.9 & 73.0 $\pm$ 5.1 & 70.7 \\
LSSAE & 63.4 $\pm$ 3.4 & 92.1 $\pm$ 2.0 & 62.6 $\pm$ 4.7 & 58.8 $\pm$ 4.4 & 92.9 $\pm$ 1.6 & 62.0 $\pm$ 3.9 & 54.3 $\pm$ 3.0 & 92.1 $\pm$ 2.2 & 60.5 $\pm$ 3.8 & 67.4 $\pm$ 3.6 & 70.6 \\
DDA & 31.0 $\pm$ 3.4 & 92.6 $\pm$ 1.8 & 56.8 $\pm$ 0.9 & 59.0 $\pm$ 2.8 & 94.0 $\pm$ 2.2 & 61.7 $\pm$ 2.3 & 52.9 $\pm$ 2.3 & 92.9 $\pm$ 2.2 & 57.8 $\pm$ 5.3 & 62.9 $\pm$ 5.8 & 66.1 \\
DRAIN & 66.4 $\pm$ 3.3 & 83.8 $\pm$ 1.0 & 65.7 $\pm$ 2.2 & 62.8 $\pm$ 3.2 & 77.9 $\pm$ 3.3 & 62.3 $\pm$ 3.7 & 55.7 $\pm$ 3.7 & 78.9 $\pm$ 3.1 & 60.6 $\pm$ 3.8 & 75.7 $\pm$ 5.6 & 69.0 \\
MMD-LSAE  & 61.3 $\pm$ 3.7 & 87.4 $\pm$ 1.9 & 65.7 $\pm$ 3.3 & 60.4 $\pm$ 4.9 & 85.7 $\pm$ 2.7 & 60.4 $\pm$ 3.1 & 56.9 $\pm$ 2.6 & 85.2 $\pm$ 2.4 & 58.3 $\pm$ 2.6 & 75.0 $\pm$ 4.9 & 69.6 \\
CTOT & 48.0 $\pm$ 1.4 & 88.0 $\pm$ 1.0 & 64.0 $\pm$ 1.7 & 54.0 $\pm$ 2.1 & 93.0 $\pm$ 3.6 & 57.0 $\pm$ 1.9 & 48.0 $\pm$ 3.2 & 85.0 $\pm$ 2.6 & 61.0 $\pm$ 2.8 & 71.0 $\pm$ 2.6 & 66.9 \\
SDE-EDG & 70.5 $\pm$ 2.9 & 88.8 $\pm$ 4.9 & 66.1 $\pm$ 2.8 & 55.1 $\pm$ 2.6 & 85.1 $\pm$ 3.6 & 59.5 $\pm$ 4.4 & 58.6 $\pm$ 3.3 & 88.2 $\pm$ 3.4 & 68.9 $\pm$ 3.5 & 72.2 $\pm$ 5.7 & 71.3 \\
SYNC & 58.8 $\pm$ 2.6 & 90.7 $\pm$ 1.3 & 71.1 $\pm$ 3.1 & 59.4 $\pm$ 4.5 & 91.3 $\pm$ 2.3 & 63.5 $\pm$ 3.5 & 65.0 $\pm$ 2.5 & 90.7 $\pm$ 2.1 & 68.8 $\pm$ 3.3 & 62.6 $\pm$ 4.5 & 72.2 \\
\bottomrule
\end{tabular}
}
\end{center}
\end{table*}

\begin{table*}[ht!]
\caption{PowerSupply. We show the results on each target domain by domain index.}
\vspace{-10pt}
\begin{center}
\resizebox{0.9\textwidth}{!}{
\begin{tabular}{lccccccccccc}
\toprule
\textbf{ALGORITHM} & \textbf{21} & \textbf{22} & \textbf{23} & \textbf{24} & \textbf{25} & \textbf{26} & \textbf{27} & \textbf{28} & \textbf{29} & \textbf{30} & \textbf{Avg.}\\
\midrule
ERM & 69.8 $\pm$ 1.4 & 70.0 $\pm$ 1.4 & 69.2 $\pm$ 1.3 & 64.4 $\pm$ 1.5 & 85.8 $\pm$ 1.0 & 76.0 $\pm$ 1.3 & 70.1 $\pm$ 1.5 & 69.8 $\pm$ 1.5 & 69.0 $\pm$ 1.3 & 65.5 $\pm$ 1.5 & 71.0 \\
MIXUP & 69.6 $\pm$ 1.4 & 69.5 $\pm$ 1.5 & 68.3 $\pm$ 1.5 & 64.3 $\pm$ 1.5 & 87.1 $\pm$ 1.0 & 76.6 $\pm$ 1.3 & 70.1 $\pm$ 1.4 & 69.2 $\pm$ 1.3 & 68.1 $\pm$ 1.5 & 65.0 $\pm$ 1.6 & 70.8 \\
MMD & 70.0 $\pm$ 1.3 & 69.7 $\pm$ 1.4 & 68.7 $\pm$ 1.4 & 64.8 $\pm$ 1.5 & 85.6 $\pm$ 1.0 & 76.1 $\pm$ 1.3 & 70.0 $\pm$ 1.5 & 69.5 $\pm$ 1.4 & 68.7 $\pm$ 1.3 & 65.6 $\pm$ 1.5 & 70.9 \\
MLDG & 69.7 $\pm$ 1.4 & 69.7 $\pm$ 1.5 & 68.6 $\pm$ 1.5 & 64.6 $\pm$ 1.5 & 86.4 $\pm$ 1.1 & 76.3 $\pm$ 1.4 & 70.1 $\pm$ 1.4 & 69.4 $\pm$ 1.3 & 68.4 $\pm$ 1.5 & 65.6 $\pm$ 1.5 & 70.8 \\
RSC & 69.9 $\pm$ 1.4 & 69.6 $\pm$ 1.4 & 68.6 $\pm$ 1.4 & 64.4 $\pm$ 1.5 & 86.6 $\pm$ 1.0 & 76.3 $\pm$ 1.3 & 70.0 $\pm$ 1.5 & 69.4 $\pm$ 1.4 & 68.4 $\pm$ 1.3 & 65.4 $\pm$ 1.5 & 70.9 \\
MTL & 69.6 $\pm$ 1.4 & 69.4 $\pm$ 1.5 & 68.2 $\pm$ 1.6 & 64.2 $\pm$ 1.5 & 87.4 $\pm$ 1.2 & 76.6 $\pm$ 1.3 & 69.9 $\pm$ 1.5 & 69.1 $\pm$ 1.5 & 68.2 $\pm$ 1.5 & 64.6 $\pm$ 1.4 & 70.7 \\
Fish & 69.7 $\pm$ 1.4 & 69.4 $\pm$ 1.4 & 68.2 $\pm$ 1.4 & 64.2 $\pm$ 1.4 & 87.3 $\pm$ 1.0 & 76.6 $\pm$ 1.3 & 69.9 $\pm$ 1.5 & 69.2 $\pm$ 1.5 & 68.2 $\pm$ 1.3 & 65.2 $\pm$ 1.5 & 70.8 \\
CORAL & 69.9 $\pm$ 1.4 & 69.7 $\pm$ 1.4 & 68.9 $\pm$ 1.4 & 64.6 $\pm$ 1.4 & 86.1 $\pm$ 1.0 & 76.3 $\pm$ 1.3 & 70.0 $\pm$ 1.5 & 69.5 $\pm$ 1.5 & 68.8 $\pm$ 1.3 & 65.7 $\pm$ 1.5 & 71.0 \\
ANDMASK & 69.9 $\pm$ 1.4 & 69.4 $\pm$ 1.4 & 68.2 $\pm$ 1.3 & 64.0 $\pm$ 1.4 & 87.4 $\pm$ 0.9 & 76.7 $\pm$ 1.3 & 70.0 $\pm$ 1.5 & 69.1 $\pm$ 1.5 & 68.0 $\pm$ 1.3 & 64.7 $\pm$ 1.5 & 70.7 \\
DIVA & 69.7 $\pm$ 1.4 & 69.5 $\pm$ 1.3 & 68.2 $\pm$ 1.4 & 63.9 $\pm$ 1.5 & 87.5 $\pm$ 1.0 & 76.5 $\pm$ 1.3 & 69.9 $\pm$ 1.5 & 69.1 $\pm$ 1.5 & 68.1 $\pm$ 1.3 & 64.7 $\pm$ 1.5 & 70.7 \\
IRM & 69.8 $\pm$ 1.4 & 69.5 $\pm$ 1.4 & 68.3 $\pm$ 1.4 & 64.1 $\pm$ 1.4 & 87.2 $\pm$ 0.9 & 76.5 $\pm$ 1.3 & 70.0 $\pm$ 1.5 & 69.1 $\pm$ 1.5 & 68.2 $\pm$ 1.3 & 65.0 $\pm$ 1.4 & 70.8 \\
IIB  &69.4 $\pm$ 1.3 &69.5 $\pm$ 1.4 &68.2 $\pm$ 1.4 &64.5 $\pm$ 1.4 &86.9 $\pm$ 1.1 &76.4 $\pm$ 1.2 &69.9 $\pm$ 1.4 &69.0 $\pm$ 1.5 &68.1 $\pm$ 1.3 &65.1 $\pm$ 1.4 &70.8 \\
iDAG  &70.7 $\pm$ 1.4 &70.4 $\pm$ 1.4 &70.0 $\pm$ 1.2 &66.1 $\pm$ 1.3 &83.5 $\pm$ 1.1 &76.3 $\pm$ 1.1&70.4 $\pm$ 1.4 &70.4 $\pm$ 1.3 &68.9 $\pm$ 1.3 &66.7 $\pm$ 1.4 &71.2 \\
GI & 70.2 $\pm$ 1.4 & 71.0 $\pm$ 1.4 & 70.5 $\pm$ 1.5 & 69.6 $\pm$ 1.5 & 80.7 $\pm$ 1.1 & 68.4 $\pm$ 1.3 & 72.9 $\pm$ 1.5 & 72.0 $\pm$ 1.3 & 71.8 $\pm$ 1.3 & 66.5 $\pm$ 1.5 & 71.4 \\
LSSAE & 70.0 $\pm$ 1.4 & 69.8 $\pm$ 1.4 & 69.0 $\pm$ 1.5 & 65.4 $\pm$ 1.4 & 85.1 $\pm$ 1.1 & 76.0 $\pm$ 1.4 & 70.1 $\pm$ 1.7 & 69.9 $\pm$ 1.3 & 69.0 $\pm$ 1.6 & 66.3 $\pm$ 1.4 & 71.1 \\
DDA & 69.8 $\pm$ 1.6 & 72.4 $\pm$ 1.5 & 70.5 $\pm$ 1.5 & 63.8 $\pm$ 1.5 & 83.7 $\pm$ 1.2 & 73.1 $\pm$ 1.2 & 70.1 $\pm$ 1.3 & 71.4 $\pm$ 1.5 & 70.5 $\pm$ 1.7 & 63.4 $\pm$ 1.2 & 70.9 \\
DRAIN & 70.1 $\pm$ 1.3 & 70.0 $\pm$ 1.0 & 69.3 $\pm$ 1.1 & 65.5 $\pm$ 1.5 & 83.6 $\pm$ 1.0 & 75.8 $\pm$ 1.7 & 70.3 $\pm$ 1.3 & 69.8 $\pm$ 1.5 & 68.9 $\pm$ 1.9 & 66.4 $\pm$ 1.2 & 71.0 \\
MMD-LSAE  & 69.9 $\pm$ 1.4 & 74.3 $\pm$ 1.3 & 71.8 $\pm$ 1.5 & 65.2 $\pm$ 1.4 & 80.1 $\pm$ 1.4 & 70.0 $\pm$ 1.8 & 70.7 $\pm$ 1.5 & 74.0 $\pm$ 1.4 & 72.4 $\pm$ 1.4 & 66.0 $\pm$ 1.6 & 71.4 \\
CTOT & 70.3 $\pm$ 1.1 & 71.4 $\pm$ 1.4 & 73.7 $\pm$ 1.2 & 70.4 $\pm$ 1.2 & 72.0 $\pm$ 1.2 & 63.6 $\pm$ 1.6 & 75.2 $\pm$ 1.2 & 71.1 $\pm$ 1.1 & 72.8 $\pm$ 1.2 & 70.6 $\pm$ 1.2 & 71.1 \\
SDE-EDG & 69.8 $\pm$ 1.1 & 69.5 $\pm$ 1.3 & 68.2 $\pm$ 1.1 & 64.1 $\pm$ 1.2 & 87.4 $\pm$ 1.1 & 76.5 $\pm$ 1.2 & 70.0 $\pm$ 1.5 & 69.0 $\pm$ 1.5 & 68.0 $\pm$ 1.2 & 65.0 $\pm$ 1.1 & 70.8 \\
SYNC & 70.2 $\pm$ 1.3 & 72.5 $\pm$ 1.2 & 71.7 $\pm$ 1.1 & 66.9 $\pm$ 1.4 & 80.0 $\pm$ 1.2 & 73.7 $\pm$ 1.4 & 70.6 $\pm$ 1.2 & 72.7 $\pm$ 1.4 & 71.1 $\pm$ 1.2 & 67.3 $\pm$ 1.1 & 71.7 \\
\bottomrule
\end{tabular}
}
\end{center}
\end{table*}

\begin{table*}[ht!]
\caption{ONP. We show the results on each target domain by domain index.}
\vspace{-10pt}
\begin{center}
\resizebox{0.75\textwidth}{!}{
\begin{tabular}{lccccccccccc}
\toprule
\textbf{ALGORITHM} & \textbf{17} & \textbf{18} & \textbf{19} & \textbf{20} & \textbf{21} & \textbf{22} & \textbf{23} & \textbf{24} & \textbf{Avg.}\\
\midrule
ERM & 66.6 $\pm$ 1.1 & 66.7 $\pm$ 1.2 & 66.3 $\pm$ 1.1 & 67.0 $\pm$ 1.1 & 67.0 $\pm$ 1.1 & 64.2 $\pm$ 1.0 & 64.9 $\pm$ 1.1 & 64.6 $\pm$ 1.1 & 65.9 \\
Mixup & 67.0 $\pm$ 1.2 & 67.3 $\pm$ 1.1 & 66.1 $\pm$ 1.1 & 67.2 $\pm$ 1.1 & 66.7 $\pm$ 1.1 & 64.1 $\pm$ 1.1 & 64.5 $\pm$ 1.0 & 65.0 $\pm$ 1.1 & 66.0 \\
MMD & 58.9 $\pm$ 1.1 & 52.5 $\pm$ 1.3 & 54.6 $\pm$ 1.0 & 52.8 $\pm$ 1.2 & 48.1 $\pm$ 1.1 & 47.1 $\pm$ 1.0 & 49.2 $\pm$ 1.1 & 47.4 $\pm$ 1.2 & 51.3 \\
MLDG & 67.0 $\pm$ 1.2 & 67.0 $\pm$ 1.1 & 66.0 $\pm$ 1.1 & 66.3 $\pm$ 1.1 & 66.8 $\pm$ 1.1 & 63.9 $\pm$ 1.0 & 65.0 $\pm$ 1.1 & 65.4 $\pm$ 1.1 & 65.9 \\
RSC & 64.9 $\pm$ 1.2 & 65.2 $\pm$ 1.2 & 65.0 $\pm$ 1.1 & 66.0 $\pm$ 1.1 & 65.5 $\pm$ 1.1 & 62.6 $\pm$ 1.1 & 64.7 $\pm$ 1.1 & 63.3 $\pm$ 1.1 & 64.7 \\
MTL & 67.1 $\pm$ 1.1 & 66.5 $\pm$ 1.2 & 65.7 $\pm$ 1.1 & 66.0 $\pm$ 1.0 & 66.5 $\pm$ 1.0 & 63.4 $\pm$ 1.0 & 64.8 $\pm$ 1.1 & 65.2 $\pm$ 1.2 & 65.6 \\
Fish & 67.2 $\pm$ 1.2 & 66.8 $\pm$ 1.2 & 65.7 $\pm$ 1.1 & 66.8 $\pm$ 1.2 & 67.3 $\pm$ 1.1 & 63.2 $\pm$ 1.0 & 65.1 $\pm$ 1.2 & 64.7 $\pm$ 1.1 & 65.9 \\
CORAL & 66.5 $\pm$ 1.2 & 66.9 $\pm$ 1.2 & 65.9 $\pm$ 1.1 & 66.6 $\pm$ 1.1 & 67.0 $\pm$ 1.1 & 63.8 $\pm$ 1.0 & 64.5 $\pm$ 1.1 & 64.9 $\pm$ 1.1 & 65.8 \\
AndMask & 59.5 $\pm$ 1.2 & 56.1 $\pm$ 1.2 & 56.4 $\pm$ 1.1 & 56.0 $\pm$ 1.2 & 52.4 $\pm$ 1.2 & 51.9 $\pm$ 1.1 & 53.0 $\pm$ 1.1 & 51.2 $\pm$ 1.2 & 54.6 \\
DIVA & 67.8 $\pm$ 1.1 & 67.2 $\pm$ 1.3 & 66.4 $\pm$ 1.1 & 66.7 $\pm$ 1.2 & 67.3 $\pm$ 1.1 & 63.4 $\pm$ 1.1 & 65.0 $\pm$ 1.1 & 64.5 $\pm$ 1.2 & 66.0 \\
IRM & 66.1 $\pm$ 1.1 & 65.1 $\pm$ 1.2 & 64.7 $\pm$ 1.1 & 65.2 $\pm$ 1.2 & 64.9 $\pm$ 1.2 & 61.9 $\pm$ 1.1 & 63.4 $\pm$ 1.2 & 64.3 $\pm$ 1.1 & 64.5 \\
IIB  &66.6 $\pm$ 1.2  &67.5 $\pm$ 1.2 &66.8 $\pm$ 1.1 &67.3 $\pm$ 1.1 &66.4 $\pm$ 1.1 &63.2 $\pm$ 1.2 &66.4 $\pm$ 1.2 &65.1 $\pm$ 1.1 &66.3 \\
iDAG  &68.3 $\pm$ 1.2 &67.8 $\pm$ 1.2 &67.8 $\pm$ 1.1 &66.2 $\pm$ 1.2 &67.0 $\pm$ 1.2 &63.8 $\pm$ 1.1 &66.2 $\pm$ 1.3 &63.9 $\pm$ 1.2 &66.4 \\
GI & 67.6 $\pm$ 1.2 & 66.9 $\pm$ 1.1 & 66.3 $\pm$ 1.1 & 66.4 $\pm$ 1.3 & 67.1 $\pm$ 1.1 & 64.4 $\pm$ 1.2 & 64.9 $\pm$ 1.2 & 63.6 $\pm$ 1.2 & 65.9 \\
LSSAE & 64.7 $\pm$ 1.3 & 66.2 $\pm$ 1.4 & 66.6 $\pm$ 1.0 & 67.1 $\pm$ 1.0 & 67.6 $\pm$ 1.0 & 64.5 $\pm$ 1.0 & 64.9 $\pm$ 1.1 & 66.4 $\pm$ 1.1 & 66.0 \\
DDA & 67.7 $\pm$ 1.2 & 66.2 $\pm$ 1.1 & 66.6 $\pm$ 1.2 & 66.3 $\pm$ 1.2 & 66.8 $\pm$ 1.2 & 63.7 $\pm$ 1.1 & 64.7 $\pm$ 1.3 & 64.7 $\pm$ 1.1 & 65.8 \\
DRAIN & 60.9 $\pm$ 1.1 & 60.7 $\pm$ 1.2 & 59.8 $\pm$ 1.2 & 61.2 $\pm$ 1.1 & 61.6 $\pm$ 1.2 & 60.6 $\pm$ 1.2 & 61.9 $\pm$ 1.1 & 61.8 $\pm$ 1.1 & 65.8 \\
MMD-LSAE & 61.8 $\pm$ 1.0 & 66.4 $\pm$ 1.3 & 66.7 $\pm$ 1.1 & 67.2 $\pm$ 1.2 & 67.8 $\pm$ 0.9 & 64.6 $\pm$ 1.0 & 65.4 $\pm$ 1.1 & 65.4 $\pm$ 1.2 & 66.4 \\
CTOT  & 62.7 $\pm$ 1.4 & 66.3 $\pm$ 1.3 & 65.5 $\pm$ 0.8 & 66.5 $\pm$ 1.2 &63.9 $\pm$ 1.1 &63.2 $\pm$ 1.2 & 65.3 $\pm$ 1.0 &61.1 $\pm$ 1.2 &64.3 \\
SDE-EDG & 67.8 $\pm$ 1.2 & 66.8 $\pm$ 0.8  & 65.6 $\pm$ 1.0 & 66.7 $\pm$ 1.0 & 66.0 $\pm$ 0.9 & 63.1 $\pm$ 1.1 & 64.7 $\pm$ 1.1 & 64.6 $\pm$ 1.1 & 65.6 \\
SYNC & 66.4 $\pm$ 1.2 & 66.0 $\pm$ 1.0 & 65.0 $\pm$ 1.1 & 65.9 $\pm$ 1.1 & 66.9 $\pm$ 1.0 & 64.9 $\pm$ 1.2 & 64.6 $\pm$ 1.1 & 64.6 $\pm$ 1.2 & 65.6 \\
\bottomrule
\end{tabular}
}
\end{center}
\end{table*}





\end{document}